\documentclass{article}

\PassOptionsToPackage{sort&compress,numbers,square,comma}{natbib}
 \usepackage[preprint]{neurips_2026}
\setcitestyle{numbers,square,comma}
\usepackage{notoccite}

\usepackage[utf8]{inputenc} 
\usepackage[T1]{fontenc}    
\usepackage{hyperref}       
\usepackage{url}            
\usepackage{booktabs}       
\usepackage{amsfonts}       
\usepackage{nicefrac}       
\usepackage{microtype}      
\usepackage{xcolor}         
\usepackage{amsthm}
\usepackage{thmtools} 
\usepackage{thm-restate}
\definecolor{linkblue}{HTML}{00006A} 

\hypersetup{
  colorlinks=true,
  linkcolor=linkblue,
  citecolor=linkblue,
  urlcolor=linkblue,
  filecolor=linkblue,
}

\usepackage{algorithm}
\usepackage{algorithmic}
\usepackage{subcaption}
\usepackage{colortbl}
\usepackage[normalem]{ulem}

\usepackage{graphicx}

\usepackage{tikz}
\usepackage{amsmath,amssymb}
\usetikzlibrary{arrows.meta,decorations.pathreplacing,calc,positioning,fit,
                 backgrounds,shapes.geometric,shapes.misc}
\usetikzlibrary{patterns,shapes.symbols}
\usepackage{paralist}
\usepackage[inline]{enumitem}
\usepackage{placeins}
                 
\newcommand{\round}[1]{\left\lfloor #1 \right\rceil}

\newcommand{\R}[0]{\mathbb{R}}
\newcommand{\G}[0]{\mathbb{G}}
\DeclareMathOperator*{\argmin}{arg\,min}
\usepackage{thmtools, thm-restate}
\theoremstyle{plain}
\newtheorem{theorem}{Theorem}[section]
\newtheorem{proposition}[theorem]{Proposition}
\newtheorem{lemma}[theorem]{Lemma}
\newtheorem{corollary}[theorem]{Corollary}
\theoremstyle{remark}
\newtheorem{remark}[theorem]{Remark}

\newcommand{\ScaleAlgorithm}{PiSO}

\newcommand{\ScaleAlgorithmUpperFull}{PiSO}
\newcommand{\ScaleAlgorithmShort}{PiSO}

\definecolor{darkred}{HTML}{C00000}

\title{Optimal Post-Training Quantization Scales\par and Where to Find Them}

\author{
  Juan Amboage\thanks{Equal contribution} \\
  AMD
 \\
  \texttt{JuanP.GarciaAmboage@amd.com}
  \And
  Pablo Monteagudo-Lago\footnotemark[1] \\
  AMD \\
  \texttt{Pablo.MonteagudoLago@amd.com}
  \And
  Ian Colbert \\
  AMD \\
  \texttt{Ian.Colbert@amd.com}
  \And
  Giuseppe Franco \\
  AMD \\
  \texttt{Giuseppe.Franco@amd.com}
  \And
  Nicholas Fraser \\
  AMD \\
  \texttt{nicholas.fraser@amd.com}
}

\begin{document}

\maketitle

\begin{abstract}
  Post-training quantization (PTQ) compresses large language models by mapping weights to low-bit representations. The scaling factor that defines the quantization grid is typically chosen using simple, data-free heuristics. In this work, we present PiSO (Piecewise Scale Optimization), an algorithm that leverages calibration data to compute the \emph{optimal} channel-wise weight scales exactly and \emph{efficiently} under round-to-nearest quantization. PiSO partitions the scale search space into finitely many intervals on which the objective admits a closed-form minimizer. We extend PiSO to group-wise quantization via principled heuristics and propose effective strategies for interleaving scale optimization with error correction. Experiments on Llama and Qwen models across multiple model sizes and target weight bit-widths demonstrate consistent improvements in  perplexity and downstream zero-shot accuracy, both standalone and combined with error correction. In particular, we observe increased benefits as the target bit-width narrows and quantization becomes more challenging.
\end{abstract}

\section{Introduction}\label{sec:introduction}
Large language models incur significant memory and compute costs that hinder deployment on resource-constrained hardware. Post-training quantization (PTQ) addresses this by mapping weights (and sometimes activations) to low-bit representations without retraining. Because quantization is inherently lossy, PTQ methods seek to minimize the resulting accuracy degradation.

PTQ methods map each weight onto a quantization grid, a discrete set of representable values determined by the target low-precision format, a scaling factor, and (optionally) a zero point. Among existing PTQ techniques, greedy error correction algorithms such as OPTQ/GPTQ~\citep{frantar2022gptq}, and more recently GPTAQ~\citep{li2025gptaq} or Qronos~\citep{zhang2026qronos}, have emerged as the dominant approach for minimizing accuracy degradation. Crucially, these methods operate on fixed quantization grids, so the scaling factors are selected beforehand and held fixed during error correction.

In PTQ pipelines, the scaling factor, or simply the scale, is typically chosen with simple heuristics: either mapping the maximum absolute weight value to the extreme grid point (the so-called absmax scale), or searching for the scale with lowest weight reconstruction error over a finite set of candidates between zero and absmax~\citep{zhang2026qronos,egiazarian2026bridging,sanjeet2026mixquant}. Both approaches are \emph{data-free}: they ignore how input
activations amplify different weight errors at the layer output. While some recent \emph{data-aware} methods~\citep{nair2024cdquant, zhang2025comq, zhang2026beacon} have begun to address this, they remain limited (see Section~\ref{background}).

This motivates two questions: \begin{inparaenum}[(1)]
    \item can the optimal channel-wise scales be computed \emph{exactly} and \emph{efficiently} for round-to-nearest (RTN) quantizers under a data-aware reconstruction objective; and
    \item can such an optimization be interleaved with established error correction algorithms such as GPTQ to jointly refine scales and weight-to-grid assignments?
\end{inparaenum}
We answer both questions affirmatively.

\paragraph{Contributions.}
Our main contributions are:
\begin{inparaenum}[(1)]
\item \ScaleAlgorithmUpperFull, an algorithm that efficiently computes the \emph{optimal} scales for channel-wise weight quantization with respect to a \emph{data-aware} layer output reconstruction objective under RTN quantization;
\item an extension of \ScaleAlgorithmShort~to group-wise quantization via two complementary approximations, one of which captures cross-group interactions; and
\item practical strategies for integrating \ScaleAlgorithmShort~with error correction methods such as GPTQ, where the RTN assumption no longer holds, by interleaving scale optimization with weight-to-grid assignments.
\end{inparaenum}
We validate all contributions with comprehensive experiments on Llama and Qwen models across multiple model sizes and target weight bit-widths. Our results show consistent improvements in  perplexity and downstream zero-shot accuracy from the use of \ScaleAlgorithm, both standalone and combined with GPTQ and Qronos. In particular, we observe increased benefits as the target bit-width narrows and quantization becomes more challenging. Moreover, our results also suggest that \ScaleAlgorithm{} reduces the calibration data required for accurate quantization, both when used with RTN and when combined with GPTQ.

\section{Background and Related Work}\label{background}

\paragraph{Notation.}\label{notation}
Throughout the paper, $\mathbf{W}\in\R^{D\times M}$ denotes the weight matrix of a single layer with $D$ input dimensions and $M$ output channels, and $\mathbf{w}_j := \mathbf{W}_{:,j}\in\R^{D}$ is the weight vector of channel~$j$. $\tilde{\mathbf{W}}$ denotes the quantized (and possibly error-corrected) weight matrix. $\mathbf{X}\in\R^{N\times D}$ denotes the input activation matrix of a given layer of the high-precision model, while $\tilde{\mathbf{X}}\in\R^{N\times D}$ is the input activation matrix produced by the (partially) quantized model.

Given a finite, ordered grid $\G = \{g_1 < g_2 < \cdots < g_L\} \subset \R$, and a \emph{scaling factor} (or simply \emph{scale}) $s \in \mathbb{R}^{+}$, we define the symmetric quantizer as
\begin{equation}\label{eq:quantizer}
    \mathcal{Q}(x;\,s) \;:=\; s \cdot q(x;\,s), \qquad\text{where}\quad q(x;\,s) := \round{\tfrac{x}{s}}_{\G}
\end{equation}
is the \emph{unscaled grid assignment}.

Here $\round{\cdot}_\G$ denotes the round-to-nearest (RTN) operator on the grid: $\round{y}_\G := \argmin_{g \in \G} |g - y|$, with ties broken by rounding up.

For the standard $B$-bit signed integer format one has that $\G = \{-2^{B-1},\, -2^{B-1}+1,\, \ldots,\, 2^{B-1}-1\}$.
The scale $s$ is shared across multiple weights: in channel-wise quantization, each output channel has its own scale; in group-wise quantization, each channel is further partitioned into groups of contiguous weights, each with an independent scale. Concretely, given a channel $\mathbf{w} \in \mathbb{R}^{D}$ and a group size $G$, $\mathbf{w}$ is partitioned into $K = \lceil D / G \rceil$ groups $\mathbf{w}_{\{1\}}, \ldots, \mathbf{w}_{\{K\}} \in \R^G$. 

The activation matrices are partitioned column-wise according to the same grouping. We write $\mathbf{X}_{\{k\}}$ for the columns of $\mathbf{X}$ associated with group~$k$. Subscripts $\{\leq k\}$ and $\{< k\}$ denote prefix concatenation over groups, e.g., $\mathbf{w}_{\{\leq k\}} := [\mathbf{w}_{\{1\}}, \ldots, \mathbf{w}_{\{k\}}]^\top$ 
where, unless stated otherwise, concatenation is along the column dimension.

\paragraph{Post-training quantization.}
PTQ algorithms can be subdivided into two categories, depending on the stage of the quantization pipeline that they target~\citep{zhang2026qronos}: (1) \emph{transform}, and (2) \emph{round}. Transform methods modify weights and/or activations to make them more amenable to quantization (e.g., by applying rotations that reduce outliers~\citep{ashkboos2024quarot,liu2025spinquant}), whereas round methods focus on mapping (possibly transformed) weights onto a quantization grid, aiming to minimize a layer-wise reconstruction objective such as:
\begin{equation}\label{eq:layer-obj}
    E_{\mathrm{GPTQ}}(\mathbf{\tilde{W}}) = \bigl\|\tilde{\mathbf{X}}\mathbf{W} - \tilde{\mathbf{X}}\mathbf{\tilde{W}}\bigr\|_F^2 \quad\text{or}\quad E(\mathbf{\tilde{W}}) = \bigl\|\mathbf{X}\mathbf{W} - \tilde{\mathbf{X}}\,\mathbf{\tilde{W}}\bigr\|_F^2.
\end{equation}
The second objective explicitly accounts for activation drift introduced by quantizing previous layers.\footnote{The first objective is widely adopted as it is used in GPTQ~\citep{frantar2022gptq} and many others~\citep{frantar2022optimal, nair2024cdquant, zhang2025comq, zhang2026beacon}, while methods like GPFQ~\citep{lybrand2021greedy}, GPTAQ~\citep{li2025gptaq} and Qronos~\citep{zhang2026qronos} adopt the second.} \emph{Round} algorithms can be further subdivided based on their optimization approach:

\paragraph{Gradient-based rounding.} 
These algorithms use backpropagation to learn rounding decisions and, in some cases, quantization parameters~\citep{nagel2020up, li2021brecq, hubara2021accurate, cheng2024optimize, shao2024omniquant}. However, their high computational costs make them difficult to scale to LLMs~\citep{cheng2024optimize}, they require careful hyperparameter tuning, and they risk overfitting to the calibration set~\citep{li2021brecq, hubara2021accurate}.

\paragraph{Greedy rounding.}
These algorithms typically assume a fixed quantization grid and process weights sequentially in a given layer: each weight is assigned to a grid point, and the resulting quantization error is compensated for in the remaining unquantized weights so as to minimize the layer output error. OBC~\citep{frantar2022optimal} introduced this framework by adapting Optimal Brain Surgeon~\citep{hassibi1993optimal} to layer-wise PTQ, and GPTQ~\citep{frantar2022gptq} scaled it to LLMs through an efficient implementation. Qronos~\citep{zhang2026qronos} and GPTAQ~\citep{li2025gptaq} are recent improvements on GPTQ whose objective accounts for the quantization error propagated from previously quantized layers.

\paragraph{Scale selection.} Although greedy rounding algorithms operate on a fixed grid, the choice of
scale directly determines the rounding errors they must compensate~\citep{frantar2022gptq}.
The dominant choice in PTQ pipelines remains \emph{data-free}: either absmax,
which maps the largest-magnitude weight to the extreme grid point, or
grid-search over a finite set of candidates in $(0, s_{\mathrm{absmax}}]$ that
minimizes the per-channel weight reconstruction
error~\citep{zhang2026qronos,egiazarian2026bridging,sanjeet2026mixquant}. Both ignore how input
activations amplify weight errors at the layer output, and both implicitly cap
$s$ at $s_{\mathrm{absmax}}$, a restriction that can be suboptimal (as exemplified in Figure~\ref{fig:scale-dist} in Appendix~\ref{ap:scale-dist}). The literature on \emph{data-aware} scale
selection for gradient-free PTQ is limited and, to the best of our
knowledge, restricted to three works that propose it alongside
coordinate-descent procedures for the weight-to-grid assignments:
CDQuant~\citep{nair2024cdquant}, COMQ~\citep{zhang2025comq}, and
Beacon~\citep{zhang2026beacon}. They differ in how the two are coupled.
CDQuant decouples them, optimizing the scale by grid-search on the layer
output error before rounding; this inherits the suboptimal $s \leq s_{\mathrm{absmax}}$
constraint and the discrete granularity of the search. COMQ and Beacon instead jointly refine scales and grid
assignments, an NP-hard problem on which their iterative procedures converge
but offer no global-optimality guarantees. Moreover, all three methods are further limited in the group-wise setting: COMQ and Beacon are restricted to channel-wise quantization, while CDQuant's group-wise variant treats groups independently, ignoring cross-group interactions through the activations. A
complementary line of work focuses on scale selection, in quantization-aware training (QAT)~\citep{Esser2020LEARNED, liu2026paretoq} or gradient-based
PTQ~\citep{nagel2020up, shao2024omniquant, cheng2024optimize}; both
require differentiation and lie outside the scope of
this gradient-free work.

\section{Methods}\label{sec:channel-wise}
In this section we introduce \ScaleAlgorithm{} (Piecewise Scale Optimization), an algorithm that computes provably optimal channel-wise weight scales under RTN quantization (i.e., each weight is mapped to its nearest grid point, with no error-correction reassignment). We then extend \ScaleAlgorithm{} to two settings in which the optimality guarantee no longer holds:
\begin{inparaenum}[(1)]
    \item we adapt it to group-wise quantization via two principled approximations; and
    \item we integrate it with greedy error correction methods such as GPTQ via three strategies.
\end{inparaenum}

We start by restricting ourselves to the RTN setting and study the optimization of the scaling parameters under this constraint. Therefore, we express the layer output reconstruction objective (Equation~\ref{eq:layer-obj}) as a function of the scaling factors of the quantizer (Equation\ref{eq:quantizer}):
\begin{equation}\label{eq:layer-obj-param}
    E(\mathbf{s}) = \bigl\|\mathbf{X}\mathbf{W} - \tilde{\mathbf{X}}\,\mathcal{Q}(\mathbf{W;s})\bigr\|_F^2, \quad \mathcal{Q}(\mathbf{W;s}):=\mathbf{s} \cdot \left\lfloor\frac{\mathbf{W}}{\mathbf{s}}\right\rceil_{\G},
\end{equation}

This restriction is principled: for weight MSE minimization (data-free), the optimal grid assignments are directly determined by the scales and achieved by RTN (Proposition~\ref{prop:optimal_mse}). 
Despite the non-convexity of the error in Equation~\ref{eq:layer-obj-param}, we show that the channel-wise scales, $\mathbf{s} = (s_1,\ldots,s_M)$, minimizing this error can be computed \emph{exactly} in polynomial time. Our proposed algorithm rests on three observations: (1) separability across output channels~\citep{zhang2026beacon}; (2) closed-form optimum for a fixed grid assignment~\citep{zhang2026beacon, zhang2025comq}; and (3) \textbf{our key insight}: since $q(\mathbf{w}; s)$ is \emph{piecewise constant} in $s$,  the real line (search space of $s$) can be partitioned into finitely many intervals in which the error has a closed-form minimizer. Sweeping these intervals with efficient incremental updates yields the global optimal scale for each channel.

\begin{figure}[t]\label{fig:algorithm}
  \centering
  \includegraphics[width=\textwidth]{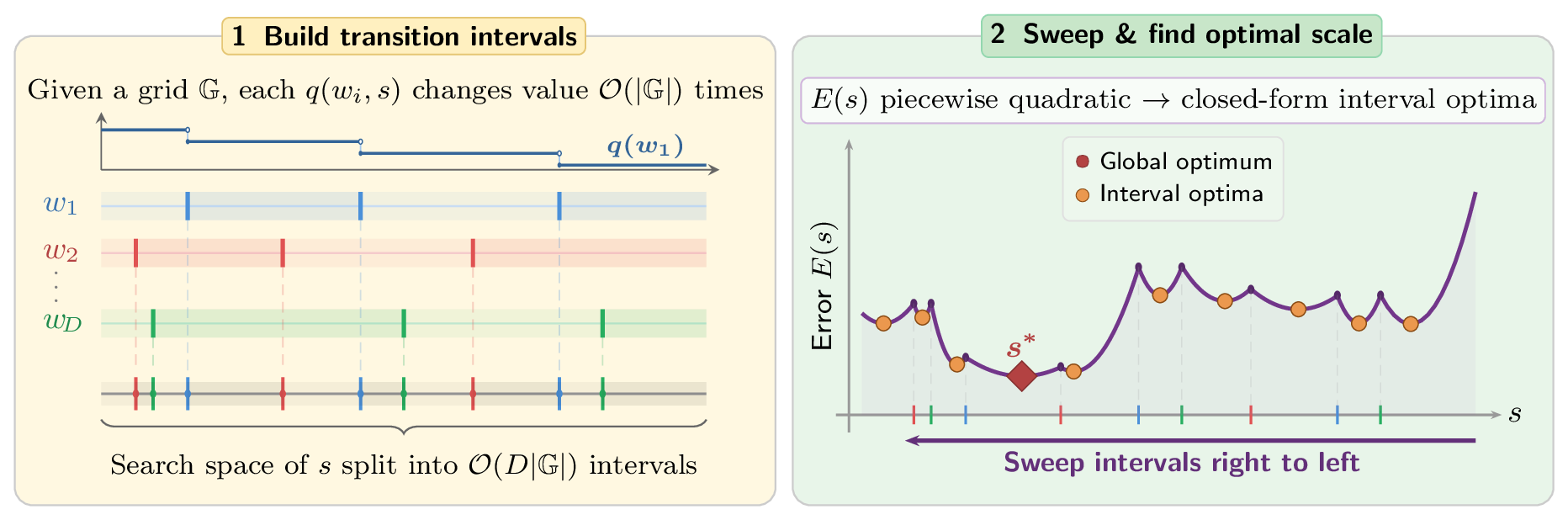}
  \caption{Overview of~\ScaleAlgorithm. The grid assignment $q(\mathbf{w}; s)$ is piecewise constant in the scale~$s$, partitioning the real line into intervals within each of which the objective in Equation~\ref{eq:layer-obj-param} simplifies to a quadratic. PiSO sweeps through these intervals, evaluates the closed-form minimizer, and returns the scale achieving the lowest error globally.}
\end{figure}

\subsection{Optimal Scale Algorithm for Channel-wise Quantization}
We now formalize the previous three observations and combine them to propose the scale optimization algorithm \ScaleAlgorithm{} (Algorithm~\ref{alg:scale-sweep} in Appendix~\ref{Implementation-det}).

\paragraph{Channel-wise separability.}
Writing $\mathbf{W}=[\mathbf{w}_1,\ldots,\mathbf{w}_M]$ as a collection of column vectors, each quantized with its own scale $s_j$, the objective in Equations~\ref{eq:layer-obj}~and~\ref{eq:layer-obj-param} separates into $M$ independent channel-wise problems (Lemma~\ref{lemma:separable}, Appendix~\ref{appendix:proofs}):
\begin{equation}\label{eq:separable}
    E(\mathbf{s}) 
    = \sum_{j=1}^M 
      \bigl\|
        \mathbf{X}\mathbf{w}_j 
        - \tilde{\mathbf{X}}\,\mathcal{Q}(\mathbf{w}_j;\, s_j)
      \bigr\|^2
    \;\;=\;\; \sum_{j=1}^M E_j(s_j).
\end{equation}
Each $s_j$ can therefore be optimized independently. In what follows, we drop the subscript $j$ and focus on finding the optimal scale $s$ for a single output channel with weight vector $\mathbf{w}\in\R^D$.

\paragraph{Closed-form solution for fixed grid assignments.}
Using $\mathcal{Q}(\mathbf{w};\,s) = s\, q(\mathbf{w};\,s)$ from Equation~\ref{eq:quantizer}, the channel-wise error reads
\begin{equation}\label{eq:error-decomp}
E(s) = \|\mathbf{X}\mathbf{w} - s\,\tilde{\mathbf{X}}\, q(\mathbf{w};\,s)\|^2,
\end{equation}
which is non-convex through the rounding operator in $q(\cdot)$. Yet, for any \emph{fixed} grid assignment $\mathbf{q}$, $E(s)$ is convex in $s$ with a closed-form minimizer (Lemma~\ref{lemma:closed_form_scale}, Appendix~\ref{appendix:proofs}). Defining
\begin{equation}\label{eq:HG-def}
\mathbf{H} := \tilde{\mathbf{X}}^\top\tilde{\mathbf{X}}, \quad \mathbf{G} := \tilde{\mathbf{X}}^\top\mathbf{X} \;\in \R^{D \times D}, \qquad \alpha(\mathbf{q}) := \mathbf{q}^\top \mathbf{G}\, \mathbf{w}, \quad \beta(\mathbf{q}) := \mathbf{q}^\top \mathbf{H}\, \mathbf{q},
\end{equation}
the optimal scale for a fixed $\mathbf{q}$ and its associated error (up to a $s$-independent constant $c$) are
\begin{equation}\label{eq:closed-form}
s^\ast(\mathbf{q}) = \frac{\alpha(\mathbf{q})}{\beta(\mathbf{q})},\qquad \text{and} \qquad E(s \mid \mathbf{q}) = c - 2s\,\alpha(\mathbf{q}) + s^2\,\beta(\mathbf{q})
\end{equation}
as it is shown in Appendix~\ref{appendix:proofs} (Remark~\ref{remark:alpha_beta_equivalence}). Equation~\ref{eq:error-decomp} accommodates several layer-wise reconstruction objectives depending on the choice of $\mathbf{X}$ and $\tilde{\mathbf{X}}$:
\begin{inparaenum}[(1)]
    \item \emph{cross-activation reconstruction} ($\mathbf{X} \neq \tilde{\mathbf{X}}$) 
    as in the objective of Qronos and GPTAQ;
    \item \emph{self-activation reconstruction} (replacing $\mathbf{X}$ by $\tilde{\mathbf{X}}$), as in the GPTQ objective; and
    \item \emph{data-free reconstruction} (setting $\mathbf{X}$ and $\tilde{\mathbf{X}}$ to the identity matrix $\mathbf{I}$).
\end{inparaenum} Our algorithm is applicable for any of these choices.

\paragraph{Piecewise-constant structure.}
The closed-form solution (Equation~\ref{eq:closed-form}) assumes a fixed grid assignment $\mathbf{q}$. Our key observation is that, while $q(\mathbf{w};\,s)$ is not really constant in $s$, it is \emph{piecewise constant} in $s$, with the breakpoints admitting a closed form. For example, for a positive scalar weight $w_i$, as $s$ decreases from $+\infty$ toward $0$ the ratio $w_i / s$ grows and the assignment $q_i = \round{w_i / s}_\G$ walks through $\G$, jumping from $g_\ell$ to $g_{\ell+1}$ exactly when $w_i / s$ crosses their midpoint
(Corollary~\ref{cor:full_vector}, Appendix~\ref{appendix:proofs}). This yields at most $|\G| - 1$ \emph{transition scales} per weight, and $D(|\G|-1)$ for the full vector $\mathbf{w}\in\R^D$ that follow the form:
\begin{equation}\label{eq:transition}
    t_{i,\ell} = \frac{2\, w_i}{g_\ell + g_{\ell+1}}, \qquad i = 1,\ldots,D, \quad \ell = 1,\ldots,|\G|-1.
\end{equation}
Crucially, each $t_{i,\ell}$ is associated with a single weight index $i$: when $s$ crosses $t_{i,\ell}$, only $q_i$ changes while all other entries stay put (the exception of coinciding transitions is handled in Appendix~\ref{appendix:degenerate_intervals}). Sorting all the transitions in decreasing order $\tau_1 > \cdots > \tau_T$ partitions $(0, +\infty)$ into $\mathcal{O}(D|\G|)$ intervals on which $\mathbf{q}$ is constant. We restrict the exposition to $s > 0$, as is standard for grids that are symmetric around zero; the signed extension is detailed in Appendix~\ref{appendix:sign_change}.

\paragraph{Sweeping with incremental updates.}
Within each non-degenerate interval $(\tau_j, \tau_{j-1})$, $\mathbf{q}$ is constant and $E(s)$ is quadratic in $s$ with minimizer $s_j^\ast = \mathrm{clamp}(\alpha/\beta,\, \tau_j,\, \tau_{j-1})$ and associated error $\phi_j = (s_j^\ast)^2\,\beta - 2\,s_j^\ast\,\alpha$ (see Equations~\ref{eq:HG-def} and~\ref{eq:closed-form}).\footnote{The clamping is needed as the unconstrained minimizer $\alpha/\beta$ may fall outside $[\tau_j,\tau_{j-1}]$.}  The globally optimal scale is the one achieving the lowest $\phi_j$ across all intervals (Proposition~\ref{prop:sweep_optimality}, Appendix~\ref{appendix:proofs}). A naive sweep would recompute $\mathbf{q}$, $\alpha$, and $\beta$ from scratch at each interval, costing $\mathcal{O}(D^2)$ per interval and $\mathcal{O}(D^3 |\G|)$ in total. To reduce the algorithm cost, we exploit that at each transition boundary one known entry of $\mathbf{q}$ changes: letting $i$ be that index and $\delta := q_i^{\mathrm{new}} - q_i^{\mathrm{old}}$, we maintain the running scalars $\alpha$, $\beta$ and the auxiliary vector $\mathbf{h} := \mathbf{H}\, \mathbf{q}$ via
\begin{equation}\label{eq:updates}
    \alpha \leftarrow \alpha + \delta \cdot (\mathbf{G}\mathbf{w})_i, \qquad
    \beta  \leftarrow \beta + 2\delta \cdot {h}_i + \delta^2 \cdot H_{ii}, \qquad
    \mathbf{h} \leftarrow \mathbf{h} + \delta \cdot \mathbf{H}_{i,:}.
\end{equation}
The vector $\mathbf{G}\mathbf{w}$ is precomputed once before the sweep. The first two updates in Equation~\ref{eq:updates} cost $\mathcal{O}(1)$ while the third one costs $\mathcal{O}(D)$, reducing the overall per-channel complexity of PiSO to $\mathcal{O}(D^2 |\G|)$ (Proposition~\ref{prop:incremental_updates}, Appendix~\ref{appendix:proofs}).

\paragraph{Algorithm analysis.}
The cost per-channel drops further when $\mathbf{H}$ is diagonal: $\mathbf{h} = \mathbf{H}\mathbf{q}$ does not need to be maintained explicitly since $h_i = {H}_{ii}\, q_i$, and the update of $\beta$ in Equation~\ref{eq:updates} simplifies to $\beta \leftarrow \beta + H_{ii}(2\, q_i^{\mathrm{old}} \delta + \delta^2)$, with $\mathcal{O}(1)$ cost. This reduces the total complexity of sweeping the intervals to $\mathcal{O}(D|\G|)$ per channel, \emph{linear} in weight dimension. This diagonal case covers two special settings:
\begin{inparaenum}[(1)]
    \item the \emph{data-free regime}: $\mathbf{X} = \tilde{\mathbf{X}} = \mathbf{I}$, and
    \item restricting $\mathbf{H}$ and $\mathbf{G}$ (Equation~\ref{eq:HG-def}) to their diagonals, which trades a loss of fidelity for a factor-$D$ speedup.
\end{inparaenum}
Finally, since the objective in Equation~\ref{eq:separable} is separable across channels and each sweep requires only read-only access to $\mathbf{H}$, $\mathbf{G}$, and $\G$, the $M$ channel problems are embarrassingly parallel

\subsection{Group-wise Quantization}\label{sec:group-wise}

In group-wise quantization with group size $G$ and $K=D/G$  groups (see Section~\ref{notation}) the channel-wise objective (Equation~\ref{eq:error-decomp}) becomes
\begin{equation}\label{eq:error-group}
    E(s_1, \ldots, s_K) = 
    \left\|
    \mathbf{X}\mathbf{w} - \tilde{\mathbf{X}}\left[s_1 q(\mathbf{w}_{\{1\}},s_1), \ldots, s_Kq(\mathbf{w}_{\{K\}},s_K)\right]^\top
    \right\|^{2}.
\end{equation}
Directly extending~\ScaleAlgorithm, as derived in Section~\ref{sec:channel-wise}, to the group-wise setting is intractable: instead of solving $O(D|\G|)$ closed-form subproblems on intervals of the real line, an exact solution would require solving $O((G|\G|)^K)$ subproblems on hyper-intervals of $\R^K$.\footnote{Each scale $s_k$ has its own set of transition points, and the joint assignment $(\mathbf{q}^1,\ldots,\mathbf{q}^K)$ is constant only on the intersection of intervals across all $K$ axes.} Given the increasing relevance of group-wise quantization, we propose two approximations to adapt~\ScaleAlgorithm{} to this setting.

\paragraph{Independent groups.}
The simplest approach treats each group independently, as it is done by \citet{nair2024cdquant}, replacing the full objective (Equation~\ref{eq:error-group}) by the following sum, which discards all cross-group interactions induced by the activations:
\begin{equation}\label{eq:independent-group}
    E_{\mathrm{ind}}(s_1,\ldots,s_K)
    := \sum_{k=1}^{K}
      \bigl\|
        \mathbf{X}_{\{k\}}\mathbf{w}_{\{k\}}
        - s_k\, \tilde{\mathbf{X}}_{\{k\}}\, q(\mathbf{w}_{\{k\}};\, s_k)
      \bigr\|^2,
\end{equation}

This alternative objective is separable in the $K$ scales, so each group reduces to a channel-wise sweep on $G$ weights (Equations~\ref{eq:error-decomp},~\ref{eq:HG-def}) with $\mathbf{H}_{kk} = \tilde{\mathbf{X}}_{\{k\}}^\top\tilde{\mathbf{X}}_{\{k\}}$ and $\mathbf{G}_{kk} = \tilde{\mathbf{X}}_{\{k\}}^\top\mathbf{X}_{\{k\}}$. Each group sweep costs $\mathcal{O}(G^2|\G|)$, for a total per-channel cost of $\mathcal{O}(DG|\G|)$ that parallelizes trivially across both groups and channels. The gap between the exact objective and $E_{\mathrm{ind}}$ is controlled by the spectral norms of the discarded off-diagonal blocks of the full $\mathbf{H}$ and $\mathbf{G}$ matrices which encode interactions across groups through the activations (Proposition~\ref{prop:block_diag_gap}, Appendix~\ref{appendix:proofs}).
\paragraph{Sequential groups.}\label{sequential_groups}
An alternative that accounts for cross-group interactions processes groups one at a time. After fixing the scales $s_{1}^\ast, ..., s_{k-1}^\ast$ for the first $k-1$ groups, $s_{k}$ is chosen to minimize the reconstruction error in which the already quantized weights (and their scale) are fixed. More precisely,
\begin{equation}\label{eq:sequential-group}
    E_{\mathrm{seq}}(s_k) := \bigl\| \mathbf{X}_{\{\leq k\}}\,\mathbf{w}_{\{\leq k\}} - \tilde{\mathbf{X}}_{\{<k\}}\,\tilde{\mathbf{w}}_{\{<k\}} - s_k\, \tilde{\mathbf{X}}_{\{k\}}\, q(\mathbf{w}_{\{k\}},\,s_k) \bigr\|^2,
\end{equation}
where $\tilde{\mathbf{w}}_{\{i\}} := s_i^\ast\,q(\mathbf{w}_{\{i\}};\,s_i^\ast)$ for $i < k$. The quantization error of all preceding groups thereby shifts the target against which group~$k$ is optimized. The procedure provides a middle ground between the independent variant and a full joint optimization but remains suboptimal as later groups cannot influence earlier ones. 
Mirroring the column reordering commonly used in GPTQ implementations~\citep{gptq, pappalardo2025xilinx}, we process groups by aggregated Hessian importance (more details in Appendix~\ref{group-proc-order}). Each group sweep costs $\mathcal{O}(G^2|\G|)$, for a total per-channel cost of $\mathcal{O}(DG|\G|)$.

\subsection{Integration with Existing PTQ Methods}\label{sec:inte}

Integration with the \emph{transform} stage of the PTQ pipeline (Section~\ref{background}) is immediate: one can just apply \ScaleAlgorithm{} to the transformed weights produced by methods such as rotation-based equalization~\citep{ashkboos2024quarot,liu2025spinquant} before moving into the round stage. Integration with \emph{error correction} methods such as GPTQ~\citep{frantar2022gptq} and Qronos~\citep{zhang2026qronos} is more delicate: these algorithms operate with a fixed scale and reassign weights away from their nearest grid point in order to compensate for errors propagated from previously quantized weights in the same channel. This conflicts with the RTN assumption underlying \ScaleAlgorithm{}. Nevertheless, given the effectiveness of error correction methods, we propose three strategies to integrate them with~\ScaleAlgorithm. We focus our exposition on GPTQ, but the strategies transfer directly to other layer-wise error correction methods such as Qronos. 

\paragraph{Decoupled optimization.}
The scales of all layers are optimized first, using the $\tilde{\mathbf{X}}$ produced by RTN-quantizing preceding layers with their optimized scales; error correction is then applied with the scales held fixed. The two stages are fully decoupled, but there is a mismatch between the $\tilde{\mathbf{X}}$ used for optimizing the scale of a given layer and the activations that the layer actually receives when error correction is being applied, since error correction modifies first the weights of its preceding layers.

\paragraph{Interleaved layer-wise optimization.}
Scale optimization and error correction alternate at the layer level: the scales of a given layer are optimized immediately before its error correction step. Since all preceding layers have been fully processed (scales optimized \emph{and} error corrected), the $\tilde{\mathbf{X}}$ used during scale optimization now matches the activations the current layer receives for error correction or at inference. Error correction itself is unchanged, as it still operates on a fixed grid.

\paragraph{Interleaved group-wise optimization.}
In group-wise quantization, the coupling can be pushed further by optimizing the scale of each group \emph{just before} error correction quantizes its weights, so that the scale is fitted on weights that already incorporate the error-diffusion updates from earlier groups in the same channel.\footnote{This optimization can use either the independent or sequential variant of Section~\ref{sec:group-wise}; in the sequential case the objective is adjusted to account for the error already diffused across groups (see Appendix~\ref{app:interleaved-group}).} As in the layer-wise strategy, the $\tilde{\mathbf{X}}$ used during scale optimization matches the activations the layer actually receives. While the weights of the group will not be quantized by RTN once the scale is fixed, the deviations from RTN are expected to be small, since each weight only needs to compensate for the error of its predecessors within the same group rather than the entire layer. This integration constrains the column processing order: error correction methods such as GPTQ typically traverse columns by decreasing diagonal-Hessian magnitude, but here all weights of a group must be processed contiguously. We therefore adopt a \emph{group-aware} importance ordering: columns are sorted by Hessian importance within each group, and groups are sorted by aggregate importance (more details in Appendix~\ref{group-proc-order}).

\section{Experiments}\label{sec:experiments}
\begin{figure}[t]
    \centering
    \begin{subfigure}[b]{0.48\textwidth}
        \centering
        \includegraphics[width=\textwidth]{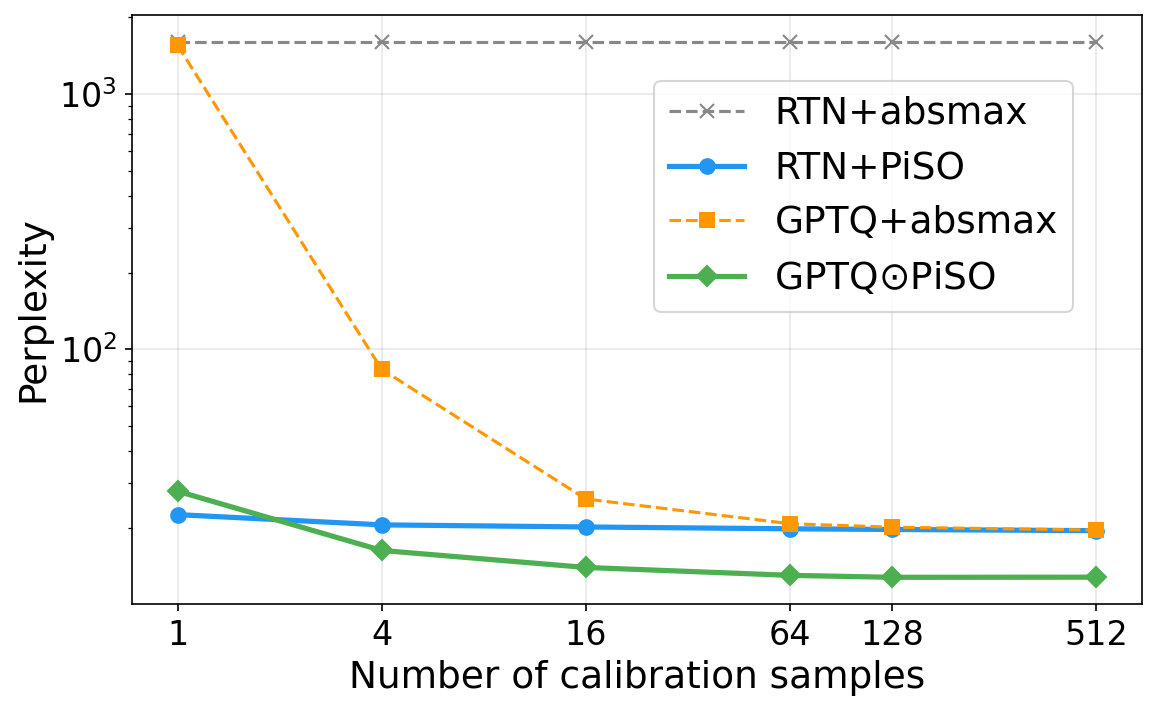}
        \caption{WikiText-2 perplexity ($\downarrow$)}\label{fig:cal-sweep-ppl}
    \end{subfigure}
    \hfill
    \begin{subfigure}[b]{0.48\textwidth}
        \centering
        \includegraphics[width=\textwidth]{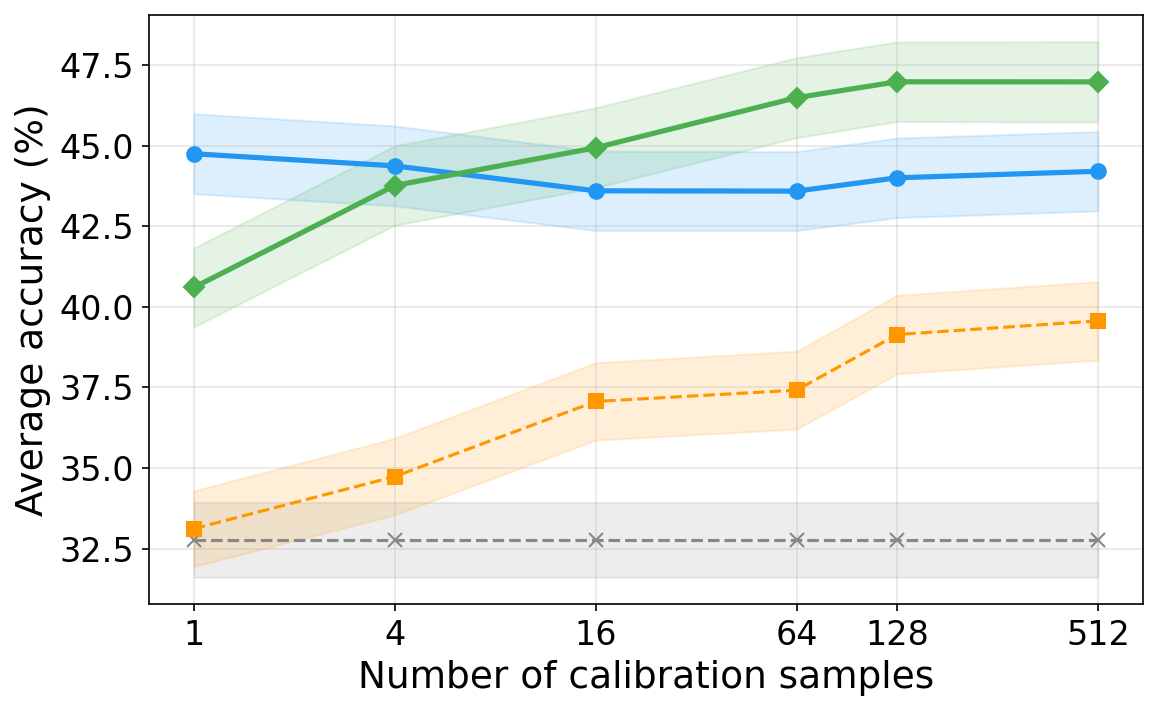}
        \caption{Zero-shot accuracy ($\uparrow$)}\label{fig:cal-sweep-acc}
    \end{subfigure}
    \caption{Effect of calibration set size on Llama-3.2-1B with 3-bit integer channel-wise weight quantization. Each sample is a 2048 token sequence. \ScaleAlgorithm{} remains stable even with a single calibration sample, whereas GPTQ with absmax requires $\geq$64 samples to converge. Combined with GPTQ,~\ScaleAlgorithmShort~achieves the best perplexity and accuracy across all calibration sizes.}\label{fig:cal-sweep}
\end{figure}

\begin{table}[t]
\centering
\caption{Channel-wise 2-bit weight-only quantization results on Llama-3 and Qwen-2.5 
models, reporting WikiText-2 perplexity ($\downarrow$) and average zero-shot accuracy ($\uparrow$). For error correction with PiSO, $+/\odot$ denote decoupled/interleaved (Section~\ref{sec:inte}).~\ScaleAlgorithmShort-optimized scales reduce absmax perplexity by orders of magnitude; the interleaved variant achieves the best results in nearly all configurations.
\vspace{5pt}}\label{tab:channel_2-bit}
\resizebox{\textwidth}{!}{
\begin{tabular}{l ccc ccc ccc ccc}
\toprule
 & \multicolumn{6}{c}{\textbf{2-bit Llama}} & \multicolumn{6}{c}{\textbf{2-bit Qwen}} \\
\cmidrule(lr){2-7} \cmidrule(lr){8-13}
 & \multicolumn{3}{c}{WikiText2 ($\downarrow$)} & \multicolumn{3}{c}{0-shot ($\uparrow$)} & \multicolumn{3}{c}{WikiText2 ($\downarrow$)} & \multicolumn{3}{c}{0-shot ($\uparrow$)} \\
\cmidrule(lr){2-4} \cmidrule(lr){5-7} \cmidrule(lr){8-10} \cmidrule(lr){11-13}
 & 1B & 3B & 8B & 1B & 3B & 8B & 1.5B & 3B & 7B & 1.5B & 3B & 7B \\
\midrule
BF16 & 8.97 & 7.14 & 5.89 & 52.6 & 62.0 & 68.6 & 8.89 & 7.88 & 6.92 & 54.9 & 56.5 & 57.7 \\
\midrule
RTN + absmax & 2e5 & 4e5 & 4e5 & 33.0 & 32.5 & 33.4 & 7e6 & 6e6 & 1e5 & 31.8 & 32.9 & 32.3 \\
RTN + \ScaleAlgorithmShort & 3e3 & 3e3 & 3e3 & 32.3 & 31.6 & 32.0 & 2e3 & 2e3 & 4e3 & 31.8 & 32.7 & 32.2 \\
\midrule
Beacon & 52.8 & 72.4 & 25.5 & 33.8 & 33.3 & 36.8 & 1e3 & 33.2 & 21.9 & 32.5 & 32.4 & 33.9 \\
\midrule
GPTQ + absmax & 3e3 & 415 & 331 & 31.6 & 31.9 & 31.5 & 675 & 596 & 72.7 & 33.0 & 32.8 & 32.1 \\
GPTQ + \ScaleAlgorithmShort & 92.5 & 47.5 & 32.5 & 33.1 & 33.2 & 34.7 & \textbf{39.2} & 32.7 & \textbf{15.6} & \textbf{34.4} & 32.9 & 37.1 \\
GPTQ $\odot$ \ScaleAlgorithmShort & 54.7 & 29.5 & 22.3 & 35.5 & 36.3 & 40.2 & 44.7 & 30.6 & 17.0 & 33.7 & 34.4 & 34.9 \\
\midrule
Qronos + absmax & 197 & 68.9 & 49.5 & 31.6 & 32.7 & 33.5 & 111 & 35.8 & 2e4 & 32.1 & 32.5 & 32.1 \\
Qronos + \ScaleAlgorithmShort & 216 & 84.8 & 76.0 & 32.1 & 33.3 & 33.0 & 196 & 27.4 & 24.5 & 32.3 & 32.9 & 36.1 \\
Qronos $\odot$ \ScaleAlgorithmShort & \textbf{27.9} & \textbf{18.9} & \textbf{16.6} & \textbf{37.5} & \textbf{41.4} & \textbf{44.0} & 131 & \textbf{17.2} & 16.3 & 32.7 & \textbf{34.8} & \textbf{38.1} \\
\bottomrule
\end{tabular}
}
\end{table}

\begin{table}[t]
\centering
\caption{Channel-wise 3-bit and 4-bit quantization results on Llama-3 models, reporting WikiText-2 perplexity ($\downarrow$) and average zero-shot accuracy ($\uparrow$). For error correction with PiSO, $+/\odot$ denote decoupled/interleaved (Section~\ref{sec:inte}). The gap between absmax and~\ScaleAlgorithmShort-optimized scales widens at lower bit-widths, and \ScaleAlgorithm~consistently achieves the best result in every configuration.
\vspace{5pt}}\label{tab:channel_llama_3-bit4-bit}
\resizebox{\textwidth}{!}{
\begin{tabular}{l ccc ccc ccc ccc}
\toprule
 & \multicolumn{6}{c}{\textbf{3-bit}} & \multicolumn{6}{c}{\textbf{4-bit}} \\
\cmidrule(lr){2-7} \cmidrule(lr){8-13}
 & \multicolumn{3}{c}{WikiText2 ($\downarrow$)} & \multicolumn{3}{c}{0-shot ($\uparrow$)} & \multicolumn{3}{c}{WikiText2 ($\downarrow$)} & \multicolumn{3}{c}{0-shot ($\uparrow$)} \\
\cmidrule(lr){2-4} \cmidrule(lr){5-7} \cmidrule(lr){8-10} \cmidrule(lr){11-13}
 & 1B & 3B & 8B & 1B & 3B & 8B & 1B & 3B & 8B & 1B & 3B & 8B \\
\midrule
BF16 & 8.97 & 7.14 & 5.89 & 52.6 & 62.0 & 68.6 & 8.97 & 7.14 & 5.89 & 52.6 & 62.0 & 68.6 \\
\midrule
RTN + absmax & 2e3 & 436 & 1e3 & 32.8 & 35.7 & 38.0 & 22.3 & 9.61 & 7.83 & 46.3 & 55.6 & 64.7 \\
RTN + \ScaleAlgorithmShort & 19.5 & 10.6 & 8.99 & 44.2 & 52.1 & 59.9 & 11.0 & 7.92 & 6.64 & 50.0 & 60.4 & 67.3 \\
\midrule
Beacon & 19.1 & 33.7 & 9.35 & 42.4 & 40.1 & 53.5 & 13.6 & 25.1 & 7.53 & 46.1 & 46.3 & 59.7 \\
\midrule
GPTQ + absmax & 19.6 & 11.3 & 9.23 & 39.6 & 49.8 & 55.1 & 10.4 & 7.79 & 6.49 & 49.3 & 59.4 & 65.6 \\
GPTQ + \ScaleAlgorithmShort & 12.9 & 9.19 & 7.42 & 47.5 & 54.6 & \textbf{63.0} & 9.94 & 7.63 & 6.32 & \textbf{52.0} & 60.2 & \textbf{68.3} \\
GPTQ $\odot$ \ScaleAlgorithmShort & 12.8 & 9.08 & 7.41 & 47.0 & \textbf{55.9} & 62.6 & 9.91 & 7.65 & 6.32 & 51.3 & 60.0 & 68.1 \\
\midrule
Qronos + absmax & 15.0 & 9.83 & 8.27 & 40.6 & 51.4 & 57.9 & 10.0 & 7.64 & 6.36 & 50.2 & 60.1 & 65.2 \\
Qronos + \ScaleAlgorithmShort & \textbf{11.6} & 8.42 & 7.01 & \textbf{48.6} & 55.5 & 61.4 & 9.61 & \textbf{7.44} & 6.19 & 51.5 & \textbf{61.2} & 67.4 \\
Qronos $\odot$ \ScaleAlgorithmShort & 11.6 & \textbf{8.41} & \textbf{6.98} & 48.0 & 55.6 & 62.7 & \textbf{9.59} & 7.44 & \textbf{6.18} & 51.1 & 60.5 & 67.0 \\
\bottomrule
\end{tabular}
}
\end{table}

\begin{table}[t]
\centering
\caption{Group-wise quantization results on Llama-3 3B and Qwen-2.5 3B with group sizes 16 and 32 (G16/G32), reporting WikiText-2 perplexity ($\downarrow$). The subscripts $\mathrm{I}/\mathrm{S}$ denote the independent/sequential heuristics for PiSO (Section~\ref{sec:group-wise}); entries without subscript report the best of the two heuristics.  For error correction with PiSO, $+/\odot$ denote decoupled/interleaved and the superscripts $\ast/\dagger$ denote the layer-wise/group-wise integration (Section~\ref{sec:inte}).\vspace{5pt}}\label{tab:per_group_3B_ppl}
\resizebox{\textwidth}{!}{
\begin{tabular}{l cccc cccc cccc}
\toprule
 & \multicolumn{4}{c}{\textbf{2-bit}} & \multicolumn{4}{c}{\textbf{3-bit}} & \multicolumn{4}{c}{\textbf{4-bit}} \\
\cmidrule(lr){2-5} \cmidrule(lr){6-9} \cmidrule(lr){10-13}
 & \multicolumn{2}{c}{Llama} & \multicolumn{2}{c}{Qwen} & \multicolumn{2}{c}{Llama} & \multicolumn{2}{c}{Qwen} & \multicolumn{2}{c}{Llama} & \multicolumn{2}{c}{Qwen} \\
\cmidrule(lr){2-3} \cmidrule(lr){4-5} \cmidrule(lr){6-7} \cmidrule(lr){8-9} \cmidrule(lr){10-11} \cmidrule(lr){12-13}
 & G16 & G32 & G16 & G32 & G16 & G32 & G16 & G32 & G16 & G32 & G16 & G32 \\
\midrule
BF16 & 7.14 & 7.14 & 7.88 & 7.88 & 7.14 & 7.14 & 7.88 & 7.88 & 7.14 & 7.14 & 7.88 & 7.88 \\
\midrule
RTN + absmax & 8e5 & 8e5 & 3e8 & 2e8 & 4e3 & 11.7 & 21.0 & 102 & 7.69 & 7.72 & 8.78 & 8.91 \\
RTN + data-free & 2e6 & 2e6 & 33.0 & 32.9 & 16.4 & 23.3 & 45.7 & 43.0 & 9.81 & 10.2 & 51.3 & 52.0 \\
RTN + \ScaleAlgorithmShort$_{\mathrm{I}}$ & 2e4 & 2e4 & 388 & 758 & 8.21 & 8.62 & 8.85 & 9.13 & 7.33 & 7.42 & 8.08 & 8.18 \\
RTN + \ScaleAlgorithmShort$_{\mathrm{S}}$ & 19.4 & 28.7 & 17.1 & 25.9 & \textbf{7.86} & \textbf{8.17} & 8.58 & 8.92 & \textbf{7.26} & \textbf{7.33} & 8.00 & 8.04 \\
\midrule
GPTQ + absmax & 30.3 & 26.4 & 34.8 & 30.4 & 8.41 & 8.29 & 8.86 & 8.91 & 7.35 & 7.38 & 8.06 & 8.03 \\
GPTQ + data-free & 34.8 & 47.0 & 35.8 & 34.5 & 10.7 & 11.1 & 49.4 & 48.6 & 9.25 & 9.34 & 52.5 & 52.7 \\
GPTQ + \ScaleAlgorithmShort & 21.5 & 25.2 & 15.5 & 18.1 & 8.12 & 8.50 & 8.55 & 8.79 & 7.30 & 7.37 & 8.01 & 8.03 \\
GPTQ $\odot$ \ScaleAlgorithmShort$^\ast$ & 20.7 & 24.0 & 15.3 & 18.1 & 8.11 & 8.44 & 8.52 & 8.72 & 7.30 & 7.36 & 8.01 & 8.03 \\
GPTQ $\odot$ \ScaleAlgorithmShort$^\dagger$ & \textbf{17.0} & \textbf{21.3} & \textbf{14.3} & \textbf{16.6} & 7.92 & 8.29 & \textbf{8.48} & \textbf{8.68} & 7.27 & 7.34 & \textbf{7.98} & \textbf{8.02} \\
\bottomrule
\end{tabular}
}
\end{table}

We conduct  experiments to evaluate \ScaleAlgorithm{} with a focus on investigating
\begin{inparaenum}[(1)]
    \item the relative performance improvement that the provably optimal channel-wise scales yield over the absmax, and \emph{data-free} heuristics in the RTN setting;
    \item the effectiveness of the independent and sequential approximations for optimizing the scale in group-wise quantization; and
    \item how the different integration strategies of \ScaleAlgorithm{} with error correction methods improve the performance of GPTQ or Qronos.
\end{inparaenum}

\paragraph{Experimental Setup.}
Our experiments are conducted on Llama-3~\citep{Grattafiori2024Llama} and Qwen-2.5 Instruct~\citep{qwen2025qwen25technicalreport}, using the publicly released implementations and instruction-tuned checkpoints available through HuggingFace~\citep{wolf-etal-2020-transformers}. For calibration, we sample 128 sequences of 2048 tokens from the WikiText2~\citep{merity2017pointer} training split. We focus on weight-only quantization, which, not only is a relevant setting by itself, but also allows us to decouple the evaluation from the choice of any specific activation-quantization scheme. We quantize weights to 2- and 3-bits, where quantization remains most challenging and a careful choice of PTQ algorithms is critical, and also to 4-bits. All scales are stored in BF16.

\paragraph{Metrics.}
We report perplexity on the WikiText2~\citep{merity2017pointer} validation split,
and zero-shot performance on downstream reasoning benchmarks. 
Specifically, we evaluate zero-shot performance on ARC-Easy~\citep{clark2018think}, ARC-Challenge~\citep{clark2018think}, and HellaSwag~\citep{zellers2019hellaswag} using LightEval~\citep{lighteval}.

\paragraph{Baselines.}
We compare \ScaleAlgorithm{} against three scale selection strategies:
\begin{inparaenum}[(1)]
    \item~\emph{absmax}: the scale maps the largest-magnitude weight to the extreme grid point;
    \item~\emph{data-free}: We leverage the simplified version of \ScaleAlgorithm{} to optimize the activation-independent objective $\|\mathbf{w} - s\,{q(\mathbf{w},s)}\|^2$. This serves as a stronger version of the grid-search heuristic commonly used in PTQ pipelines, both because the search is not restricted to $[0, s_{\mathrm{absmax}}]$ and because it is exact rather than discrete; and
    \item~\emph{Beacon}~\citep{zhang2026beacon}, which jointly optimizes channel-wise scales and grid assignments via coordinate descent (Section~\ref{background}).
\end{inparaenum}
Among the three data-aware scale selection methods reviewed in Section~\ref{background} (CDQuant, COMQ, Beacon), none has publicly available code and none appears to be more widely adopted than the others; we select Beacon  as the most recent representative, noting that it is methodologically very close to COMQ.
Throughout this section, unless stated otherwise, \ScaleAlgorithm{} combined with RTN is configured to minimize the cross-activation objective in Equation~\ref{eq:layer-obj}; when combined with GPTQ or Qronos, it minimizes the same objective as the error-correction algorithm (see Equation~\ref{eq:layer-obj}).

\paragraph{Channel-wise Quantization Results.}
Table~\ref{tab:channel_2-bit} summarizes the channel-wise results for 2-bit quantization. We observe that RTN is catastrophic regardless of the scale selection procedure. \ScaleAlgorithm{}  combined with error correction (GPTQ/Qronos) consistently improves over absmax and Beacon. Moreover, the interleaved per-layer integration (GPTQ/Qronos~$\odot$~\ScaleAlgorithmShort) outperforms the simpler decoupled composition (GPTQ/Qronos~$+$~\ScaleAlgorithmShort) in most configurations. The best overall results are obtained by Qronos~$\odot$~\ScaleAlgorithmShort. The 3-bit and 4-bit results for the Llama models are displayed in Table~\ref{tab:channel_llama_3-bit4-bit}, the results for Qwen are analogous, and are deferred to Appendix~\ref{sec:app-channel} (Table~\ref{tab:channel_qwen_3-bit4-bit_expanded}). For these bit widths we again observe that the combination of error correction and \ScaleAlgorithm{} yields the best results in most settings. Notably, at 3-bit, RTN~$+$~\ScaleAlgorithmShort{} already produces competitive perplexities, even surpassing GPTQ~$+$~absmax for multiple model sizes. The benefits of \ScaleAlgorithm{} remain pronounced at 3-bit, while at 4-bit the finer grid narrows the gap. Yet, even for 4-bit, \ScaleAlgorithm{} clearly outperforms absmax in the RTN regime where its optimality guarantees apply. The data-free variant yielded substantially degraded results in these settings; for clarity we defer those numbers to Appendix~\ref{sec:app-channel} (Table~\ref{tab:channel_datafree_llama}). 

\paragraph{Group-wise Quantization Results.}
Table~\ref{tab:per_group_3B_ppl} reports group-wise results on Llama-3.2-3B and Qwen2.5-3B at group sizes 16 and 32. Within the RTN regime, the sequential heuristic consistently outperforms the independent one, even by an order of magnitude at 2-bit, confirming the importance of accounting for cross-group interactions. When combined with GPTQ, the most tightly coupled integration (GPTQ~$\odot$~\ScaleAlgorithmShort\textsuperscript{$\dagger$}) achieves the best 2-bit perplexities and also performs strongly at 3-bit and 4-bit. Strikingly, at 3-bit and 4-bit, RTN~$+$~\ScaleAlgorithmShort\textsubscript{S} is already competitive with GPTQ-based variants and even achieves the best Llama perplexities. To keep the results compact---given that we already evaluate two group-wise heuristics, three integration variants, and two group sizes---we restricted the error-correction comparison to GPTQ. Results for other model sizes and the corresponding zero-shot accuracies are reported in Appendix~\ref{sec:app-group} (Tables~\ref{tab:group_expanded_3B_ppl}, \ref{tab:group_expanded_3B_acc}).

\paragraph{Calibration size sensitivity.}
The number of calibration samples needed by PTQ algorithms is usually overlooked. In channel-wise quantization, \ScaleAlgorithm{} combined with RTN optimizes only one parameter per channel while GPTQ has to optimize all the weight-to-grid assignments. Therefore, we expect the former to be  more data-efficient. To test this we sweep the calibration size on Llama-3.2-1B at 3-bit channel-wise quantization, comparing RTN~$+$~\ScaleAlgorithmShort, GPTQ~$+$~absmax, and GPTQ~$\odot$~\ScaleAlgorithmShort, with RTN~$+$~absmax included as a calibration-independent reference. The results are reported in Figure~\ref{fig:cal-sweep}. As expected, RTN~$+$~\ScaleAlgorithmShort{} is highly robust: a single calibration sample (sequence of 2048 tokens) already suffices to recover near-asymptotic perplexity and accuracy, while GPTQ~$+$~absmax requires tens of samples to stabilize. Somewhat surprisingly, GPTQ~$\odot$~\ScaleAlgorithmShort{} is also more data-efficient than GPTQ~$+$~absmax despite optimizing the same number of weight assignments, plus the channel scales, suggesting that providing GPTQ with a well-calibrated grid can potentially reduce its dependence on calibration size. We leave a deeper analysis of this phenomenon, together with a broader empirical study across models, bit-widths, and PTQ methods, to future work.

\paragraph{Additional experiments: rotations, ablations, and runtime.}
In Appendix~\ref{sec:app-rotation} we evaluate the composition of PiSO with the \emph{transform} stage algorithm (see Section~\ref{background})  QuaRot~\citep{ashkboos2024quarot}. Appendix~\ref{sec:app-channel} results also include 
ablations on the different \ScaleAlgorithmShort-supported objectives, and quantization to non-integer grids (FP4). We also report wall-clock runtimes and overhead with respect to GPTQ (Table~\ref{tab:runtime}). Results show minimal overhead.

\section{Conclusion}\label{sec:conclusion}

We revisited an often overlooked component of PTQ: the choice of the scaling factor. We introduced \ScaleAlgorithm{} (Piecewise Scale Optimization), which, for channel-wise RTN weight quantization, exactly minimizes a data-aware layer-wise objective in $\mathcal{O}(D^2|\G|)$ time per channel. On top of this exact channel-wise solver we proposed two approximations for group-wise quantization and three integration strategies that interleave \ScaleAlgorithm{} with GPTQ/Qronos, showing that \ScaleAlgorithm{} can also be leveraged outside the RTN regime, despite the optimality guarantees not carrying over. Across Llama and Qwen models, \ScaleAlgorithm{} consistently improves perplexity and zero-shot accuracy over absmax, data-free heuristics, and Beacon, with gains widening as the bit-width narrows. Beyond accuracy, our calibration-size study (Figure~\ref{fig:cal-sweep}) reveals that RTN~$+$~\ScaleAlgorithmShort{} reaches near-asymptotic quality from a single 2048-token calibration sequence, and that GPTQ improves its data efficiency when combined with \ScaleAlgorithmShort{} scales. We hope that our theoretical framework, and in particular \ScaleAlgorithm{}, provide a foundation for treating the scale with the same algorithmic care that has been devoted to rounding decisions.

\paragraph{Limitations and future work.}
\ScaleAlgorithm{}'s exact optimality is, by construction, tied to the channel-wise RTN regime: the group-wise variants are approximations and the interleaved integration with GPTQ/Qronos breaks the RTN assumption by design.
However in practice, our extensive experiments show that \ScaleAlgorithm{} remains the strongest scale-selection strategy in both settings, and that the tightest coupling with error correction yields the best end-to-end results. Tightening the theoretical analysis of these approximations, together with a deeper empirical study of the data-efficiency phenomenon observed in Figure~\ref{fig:cal-sweep}, are natural next steps. We focused on unconstrained BF16 scales; hardware formats such as MX~\citep{ocp2024mx} and NVFP4~\citep{alvarez2025nvfp4} impose additional constraints (e.g., low-bit or hierarchical scales) that fall outside the scope of this work, and we plan to extend our framework and \ScaleAlgorithm{} to such restricted-scale settings, as well as to asymmetric quantization, in future work.

\bibliographystyle{unsrtnat}
\bibliography{references}

@inproceedings{frantar2022gptq,
  author={Elias Frantar and Saleh Ashkboos and Torsten Hoefler and Dan Alistarh},
  title={{OPTQ}: Accurate Post-Training Quantization for Generative Pre-trained Transformers},
  booktitle={The Eleventh International Conference on Learning Representations },
  year={2023}
}

@inproceedings{
zhang2026qronos,
title={Qronos: Correcting the Past by Shaping the Future... in Post-Training Quantization},
author={Shihao Zhang and Haoyu Zhang and Ian Colbert and Rayan Saab},
booktitle={The Fourteenth International Conference on Learning Representations},
year={2026},
}

@article{sanjeet2026mixquant,
  title={{MixQuant}: Pushing the Limits of Block Rotations in Post-Training Quantization},
  author={Sanjeet, Sai and Colbert, Ian and Monteagudo-Lago, Pablo and Franco, Giuseppe and Umuroglu, Yaman and Fraser, Nicholas J},
  journal={arXiv preprint arXiv:2601.22347},
  year={2026}
}

@inproceedings{
ashkboos2024quarot,
title={{QuaRot}: Outlier-Free 4-Bit Inference in Rotated {LLM}s},
author={Saleh Ashkboos and Amirkeivan Mohtashami and Maximilian L. Croci and Bo Li and Pashmina Cameron and Martin Jaggi and Dan Alistarh and Torsten Hoefler and James Hensman},
booktitle={The Thirty-eighth Annual Conference on Neural Information Processing Systems},
year={2024},
}

@inproceedings{
liu2025spinquant,
title={{SpinQuant}: {LLM} Quantization with Learned Rotations},
author={Zechun Liu and Changsheng Zhao and Igor Fedorov and Bilge Soran and Dhruv Choudhary and Raghuraman Krishnamoorthi and Vikas Chandra and Yuandong Tian and Tijmen Blankevoort},
booktitle={The Thirteenth International Conference on Learning Representations},
year={2025},
}

@misc{alvarez2025nvfp4,
  title   = {Introducing {NVFP4} for Efficient and Accurate Low-Precision Inference},
  author  = {Alvarez, Eduardo and Almog, Omri and Chung, Eric and Layton, Simon and Stosic, Dusan and Krashinsky, Ronny and Aubrey, Kyle},
  year    = {2025},
  month   = jun,
  note    = {NVIDIA Developer Blog}
}

@article{zhang2026beacon,
  title={Beacon: Post-Training Quantization with Integrated Grid Selection},
  author={Zhang, Shihao and Saab, Rayan},
  journal={IEEE Signal Processing Letters},
  year={2026},
  publisher={IEEE}
}

@article{zhang2025comq,
  title={{COMQ}: A backpropagation-free algorithm for post-training quantization},
  author={Zhang, Aozhong and Yang, Zi and Wang, Naigang and Qi, Yingyong and Xin, Jack and Li, Xin and Yin, Penghang},
  journal={IEEE Access},
  year={2025},
  publisher={IEEE}
}

@inproceedings{nagel2020up,
  title={Up or down? {Adaptive} rounding for post-training quantization},
  author={Nagel, Markus and Amjad, Rana Ali and Van Baalen, Mart and Louizos, Christos and Blankevoort, Tijmen},
  booktitle={International conference on machine learning},
  pages={7197--7206},
  year={2020},
  organization={PMLR}
}

@inproceedings{hubara2021accurate,
  title={Accurate post training quantization with small calibration sets},
  author={Hubara, Itay and Nahshan, Yury and Hanani, Yair and Banner, Ron and Soudry, Daniel},
  booktitle={International conference on machine learning},
  pages={4466--4475},
  year={2021},
  organization={PMLR}
}

@inproceedings{
li2021brecq,
title={{BRECQ}: Pushing the Limit of Post-Training Quantization by Block Reconstruction},
author={Yuhang Li and Ruihao Gong and Xu Tan and Yang Yang and Peng Hu and Qi Zhang and Fengwei Yu and Wei Wang and Shi Gu},
booktitle={International Conference on Learning Representations},
year={2021},
}

@inproceedings{cheng2024optimize,
  title={Optimize weight rounding via signed gradient descent for the quantization of llms},
  author={Cheng, Wenhua and Zhang, Weiwei and Shen, Haihao and Cai, Yiyang and He, Xin and Kaokao, Lv and Liu, Yi},
  booktitle={Findings of the Association for Computational Linguistics: EMNLP 2024},
  pages={11332--11350},
  year={2024}
}

@inproceedings{
shao2024omniquant,
title={{OmniQuant}: Omnidirectionally Calibrated Quantization for Large Language Models},
author={Wenqi Shao and Mengzhao Chen and Zhaoyang Zhang and Peng Xu and Lirui Zhao and Zhiqian Li and Kaipeng Zhang and Peng Gao and Yu Qiao and Ping Luo},
booktitle={The Twelfth International Conference on Learning Representations},
year={2024},
}

@article{frantar2022optimal,
  title={Optimal brain compression: A framework for accurate post-training quantization and pruning},
  author={Frantar, Elias and Alistarh, Dan},
  journal={Advances in Neural Information Processing Systems},
  volume={35},
  pages={4475--4488},
  year={2022}
}

@article{lybrand2021greedy,
  title={A greedy algorithm for quantizing neural networks},
  author={Lybrand, Eric and Saab, Rayan},
  journal={Journal of Machine Learning Research},
  volume={22},
  number={156},
  pages={1--38},
  year={2021}
}

@inproceedings{
egiazarian2026bridging,
title={Bridging the Gap Between Promise and Performance for Microscaling {FP}4 Quantization},
author={Vage Egiazarian and Roberto L. Castro and Denis Kuznedelev and Andrei Panferov and Eldar Kurtic and Shubhra Pandit and Alexandre Noll Marques and Mark Kurtz and Saleh Ashkboos and Torsten Hoefler and Dan Alistarh},
booktitle={The Fourteenth International Conference on Learning Representations},
year={2026},
}

@article{clark2018think,
  title={Think you have solved question answering? try {ARC}, the {AI2} reasoning challenge},
  author={Clark, Peter and Cowhey, Isaac and Etzioni, Oren and Khot, Tushar and Sabharwal, Ashish and Schoenick, Carissa and Tafjord, Oyvind},
  journal={arXiv preprint arXiv:1803.05457},
  year={2018}
}

@inproceedings{zellers2019hellaswag,
  title={Hellaswag: Can a machine really finish your sentence?},
  author={Zellers, Rowan and Holtzman, Ari and Bisk, Yonatan and Farhadi, Ali and Choi, Yejin},
  booktitle={Proceedings of the 57th annual meeting of the association for computational linguistics},
  pages={4791--4800},
  year={2019}
}

@article{grattafiori2024llama,
  title={The Llama 3 herd of models},
  author={Grattafiori, Aaron and Dubey, Abhimanyu and Jauhri, Abhinav and Pandey, Abhinav and Kadian, Abhishek and Al-Dahle, Ahmad and Letman, Aiesha and Mathur, Akhil and Schelten, Alan and Vaughan, Alex and others},
  journal={arXiv preprint arXiv:2407.21783},
  year={2024}
}

@misc{qwen2025qwen25technicalreport,
      title={{Qwen2.5 Technical Report}}, 
      author={{Qwen Team} and An Yang and Baosong Yang and Beichen Zhang and Binyuan Hui and Bo Zheng and Bowen Yu and Chengyuan Li and Dayiheng Liu and Fei Huang and Haoran Wei and Huan Lin and Jian Yang and Jianhong Tu and Jianwei Zhang and Jianxin Yang and Jiaxi Yang and Jingren Zhou and Junyang Lin and Kai Dang and Keming Lu and Keqin Bao and Kexin Yang and Le Yu and Mei Li and Mingfeng Xue and Pei Zhang and Qin Zhu and Rui Men and Runji Lin and Tianhao Li and Tianyi Tang and Tingyu Xia and Xingzhang Ren and Xuancheng Ren and Yang Fan and Yang Su and Yichang Zhang and Yu Wan and Yuqiong Liu and Zeyu Cui and Zhenru Zhang and Zihan Qiu},
      year={2025},
      eprint={2412.15115},
      archivePrefix={arXiv},
      primaryClass={cs.CL},
}

@misc{lighteval,
  author = {Habib, Nathan and Fourrier, Clémentine and Kydlíček, Hynek and Wolf, Thomas and Tunstall, Lewis},
  title = {{LightEval}: A lightweight framework for LLM evaluation},
  year = {2023},
  version = {0.11.0},
  url = {https://github.com/huggingface/lighteval}
}

@inproceedings{
merity2017pointer,
title={Pointer Sentinel Mixture Models},
author={Stephen Merity and Caiming Xiong and James Bradbury and Richard Socher},
booktitle={International Conference on Learning Representations},
year={2017},
}

@inproceedings{wolf-etal-2020-transformers,
  title={Transformers: State-of-the-art natural language processing},
  author={Wolf, Thomas and Debut, Lysandre and Sanh, Victor and Chaumond, Julien and Delangue, Clement and Moi, Anthony and Cistac, Pierric and Rault, Tim and Louf, R{\'e}mi and Funtowicz, Morgan and others},
  booktitle={Proceedings of the 2020 conference on empirical methods in natural language processing: system demonstrations},
  pages={38--45},
  year={2020}
}

@inproceedings{hassibi1993optimal,
  title={Optimal brain surgeon and general network pruning},
  author={Hassibi, Babak and Stork, David G and Wolff, Gregory J},
  booktitle={IEEE international conference on neural networks},
  pages={293--299},
  year={1993},
  organization={IEEE}
}

@article{zhang2025provable,
  title={Provable post-training quantization: Theoretical analysis of {OPTQ} and {Qronos}},
  author={Zhang, Haoyu and Zhang, Shihao and Colbert, Ian and Saab, Rayan},
  journal={arXiv preprint arXiv:2508.04853},
  year={2025}
}

@article{nair2024cdquant,
  publtype={informal},
  author={Pranav Ajit Nair and Arun Sai Suggala},
  title={{CDQuant}: Accurate Post-training Weight Quantization of Large Pre-trained Models using Greedy Coordinate Descent},
  year={2024},
  cdate={1704067200000},
  journal={CoRR},
  volume={abs/2406.17542},
}

@misc{gptq,
  author  = {IST-DASLab},
  title   = {{GPTQ}},
  url     = {https://github.com/ist-daslab/gptq},
  year    = {2022}
}

@misc{pappalardo2025xilinx,
  title={Xilinx/brevitas: Release v0. 12.0},
  author={Pappalardo, Alessandro and Franco, Giuseppe and Colbert, Ian and Grob, Fabian and Costigan, Timothy and Savolainen, Oscar and Stoian, Andrei and Gerdelan, Anton and Umuroglu, Yaman and Paine, Tim and others},
  journal={Zenodo},
  year={2025}
}

@inproceedings{
liu2026paretoq,
title={{ParetoQ}: Improving Scaling Laws in Extremely Low-bit {LLM} Quantization},
author={Zechun Liu and Changsheng Zhao and Hanxian Huang and Sijia Chen and Jing Zhang and Jiawei Zhao and Scott Roy and Lisa Jin and Yunyang Xiong and Yangyang Shi and Lin Xiao and Yuandong Tian and Bilge Soran and Raghuraman Krishnamoorthi and Tijmen Blankevoort and Vikas Chandra},
booktitle={The Thirty-ninth Annual Conference on Neural Information Processing Systems},
year={2026},
}

@inproceedings{
li2025gptaq,
title={{GPTAQ}: Efficient Finetuning-Free Quantization for Asymmetric Calibration},
author={Yuhang Li and Ruokai Yin and Donghyun Lee and Shiting Xiao and Priyadarshini Panda},
booktitle={Forty-second International Conference on Machine Learning},
year={2025},
}

@inproceedings{Ansel_PyTorch_2_Faster_2024,
author = {Ansel, Jason and Yang, Edward and He, Horace and Gimelshein, Natalia and Jain, Animesh and Voznesensky, Michael and Bao, Bin and Bell, Peter and Berard, David and Burovski, Evgeni and Chauhan, Geeta and Chourdia, Anjali and Constable, Will and Desmaison, Alban and DeVito, Zachary and Ellison, Elias and Feng, Will and Gong, Jiong and Gschwind, Michael and Hirsh, Brian and Huang, Sherlock and Kalambarkar, Kshiteej and Kirsch, Laurent and Lazos, Michael and Lezcano, Mario and Liang, Yanbo and Liang, Jason and Lu, Yinghai and Luk, CK and Maher, Bert and Pan, Yunjie and Puhrsch, Christian and Reso, Matthias and Saroufim, Mark and Siraichi, Marcos Yukio and Suk, Helen and Suo, Michael and Tillet, Phil and Wang, Eikan and Wang, Xiaodong and Wen, William and Zhang, Shunting and Zhao, Xu and Zhou, Keren and Zou, Richard and Mathews, Ajit and Chanan, Gregory and Wu, Peng and Chintala, Soumith},
booktitle = {29th ACM International Conference on Architectural Support for Programming Languages and Operating Systems, Volume 2 (ASPLOS '24)},
doi = {10.1145/3620665.3640366},
month = apr,
publisher = {ACM},
title = {{PyTorch 2: Faster Machine Learning Through Dynamic Python Bytecode Transformation and Graph Compilation}},
year = {2024}
}

@article{ocp2024mx,
  title={{OCP} microscaling formats ({MX}) specification},
  author={Rouhani, Bita Darvish and Garegrat, Nitin and Savell, Tom and More, Ankit and Han, Kyung-Nam and Zhao, Ritchie and Hall, Mathew and Klar, Jasmine and Chung, Eric and Yu, Yuan and others},
  journal={Open Compute Project},
  year={2023}
}

@inproceedings{hassibi2002expected,
  title={On the expected complexity of integer least-squares problems},
  author={Hassibi, Babak and Vikalo, Haris},
  booktitle={2002 IEEE International Conference on Acoustics, Speech, and Signal Processing},
  volume={2},
  pages={II--1497},
  year={2002},
  organization={IEEE}
}

@inproceedings{
Esser2020LEARNED,
title={LEARNED STEP SIZE QUANTIZATION},
author={Steven K. Esser and Jeffrey L. McKinstry and Deepika Bablani and Rathinakumar Appuswamy and Dharmendra S. Modha},
booktitle={International Conference on Learning Representations},
year={2020},
}

\clearpage
\newpage
\appendix

\section{Implementation Details}\label{Implementation-det}
Algorithm~\ref{alg:scale-sweep} summarizes the optimization procedure described in Section~\ref{sec:channel-wise} for a single channel. In this implementation the sweep proceeds from large to small $|s|$ (right to left), processing positive scales first and then negative scales, if permitted. We implemented~\ScaleAlgorithmShort~using PyTorch 2.8~\citep{Ansel_PyTorch_2_Faster_2024}, and benefited from batched operations to parallelize it in GPU across channels in a given layer, and simultaneously across groups for the independent approximation (see Section~\ref{sec:group-wise}).

\begin{algorithm}[t]
\caption{PiSO: Piecewise Scale Optimization}\label{alg:scale-sweep}
\begin{algorithmic}[1]
\REQUIRE Weight vector $\mathbf{w} \in \mathbb{R}^D$, matrices $\mathbf{H}\in\mathbb{R}^{D\times D}$, $\mathbf{G}\in\mathbb{R}^{D\times D}$, sorted grid $\G = \{g_1, \ldots, g_L\}$
\ENSURE Optimal scale $\hat{s}$
\STATE Compute transition scales $t_{i,\ell} \gets \frac{2w_i}{g_\ell + g_{\ell+1}}$ for all $i \in [D]$, $\ell \in [L{-}1]$
\STATE Sort all transitions in decreasing order: $\tau_1 > \cdots > \tau_T$, recording which $(i, \ell)$ each corresponds to
\STATE $\mathbf{p} \gets \mathbf{G}\mathbf{w}$ \COMMENT{Precompute; does not change during sweep}
\STATE $\alpha \gets 0$, \; $\beta \gets 0$, \; $\mathbf{h} \gets \mathbf{0}$ \COMMENT{$\alpha = \mathbf{w}^\top\mathbf{G}^\top q$, \; $\beta = q^\top\mathbf{H}q$, \; $\mathbf{h} = \mathbf{H}q$}
\STATE $\hat{s} \gets \texttt{undef}$, \; $\hat{\phi} \gets +\infty$
\STATE Precompute sign-change transition values $(\alpha^{\mathrm{tr}}, \beta^{\mathrm{tr}}, \mathbf{h}^{\mathrm{tr}})$
\FOR{$j = 1, \ldots, T$}
    \IF{first negative-scale interval}
        \STATE $\alpha \gets \alpha^{\mathrm{tr}}$, \; $\beta \gets \beta^{\mathrm{tr}}$, \; $\mathbf{h} \gets \mathbf{h}^{\mathrm{tr}}$ \COMMENT{Sign change at $s = 0$}
    \ENDIF
    \STATE Let $i$ be the index whose $q_i$ changes by $\delta = q_i^{\mathrm{new}} - q_i^{\mathrm{old}}$
    \STATE $\alpha \gets \alpha + \delta \cdot p_i$ \COMMENT{$\mathcal{O}(1)$}
    \STATE $\beta \gets \beta + 2\delta \cdot h_i + \delta^2 \cdot H_{ii}$ \COMMENT{$\mathcal{O}(1)$}
    \STATE $\mathbf{h} \gets \mathbf{h} + \delta \cdot \mathbf{H}_{i,:}$ \COMMENT{$\mathcal{O}(D)$; omitted if $\mathbf{H}$ is diagonal}
    \STATE $s^\ast_j \gets \mathrm{clamp}\bigl(\alpha / \beta, \;\tau_{j},\;\tau_{j-1}\bigr)$
    \STATE $\phi_j \gets (s^\ast_j)^2\, \beta - 2\, s^\ast_j\, \alpha$
    \IF{$\phi_j < \hat{\phi}$ \textbf{and} $|\tau_{j-1} - \tau_j| > \varepsilon$}
        \STATE $\hat{s} \gets s^\ast_j$, \; $\hat{\phi} \gets \phi_j$ \COMMENT{Skip degenerate intervals}
    \ENDIF
\ENDFOR
\RETURN $\hat{s}$
\end{algorithmic}
\end{algorithm}

In the following, we provide additional details on how corner cases arising in the scale optimization are handled.

\paragraph{Degenerate Intervals.}\label{appendix:degenerate_intervals}

Two transition scales coincide ($t_{i,\ell} = t_{j,\ell'}$) whenever $w_i / w_j = m_{\ell'} / m_\ell$ for grid midpoints $m_\ell, m_{\ell'}$. When this happens, the interval between them has zero length. At such a point several entries of $\mathbf{q}$ attempt to change simultaneously, but since our sweep processes them one at a time, the intermediate state does not correspond to any realizable rounding. We detect these degenerate intervals by checking $|\tau_{j-1} - \tau_j| < \varepsilon$ and skip the error update on them.

\paragraph{Handling the scale sign change.}\label{appendix:sign_change}

For quantization grids that are symmetric around zero, i.e. they verify the property that if $q \in \G \Rightarrow -q \in \G$, only positive scales $s > 0$ need to be considered to guarantee optimality. This holds since $s \cdot q(w; s) = (-s) \cdot q(w; -s)$. However, for asymmetric quantization grids, such as those of integer datatypes, the optimal scale may be negative. If one wants to allow for such negative scales,  the algorithm needs to handle the sign change at $s=0$, to account for the simultaneous changes in the unscaled grid assignments. Precisely, when the sweep crosses from positive to negative scales, the ratio $w_i / s$ flips sign for \emph{every} weight simultaneously. Indeed, this is the only point where more than one entry of $\mathbf{q}$ changes at once, and incremental updates do not apply. We handle it by precomputing a \emph{transition vector} $\mathbf{q}^{\mathrm{tr}}$ that assigns each weight to the extreme grid point matching its sign
\[
q^{\mathrm{tr}}_i = \begin{cases} g_L & \text{if } w_i < 0, \\ g_1 & \text{if } w_i \geq 0, \end{cases}
\]
together with the corresponding $\alpha^{\mathrm{tr}}$, $\beta^{\mathrm{tr}}$, and $\mathbf{h}^{\mathrm{tr}}$. When the sweep first enters a negative-scale interval, the running state is replaced by these precomputed values, after which incremental updates resume normally for $s < 0$.

\paragraph{Group processing order.}\label{group-proc-order}
Error-correction methods typically process the weights of a given layer in
descending order of the diagonal entries of $\mathbf{H} := \tilde{\mathbf{X}}^\top\tilde{\mathbf{X}}$~\citep{gptq, zhang2026qronos, pappalardo2025xilinx}:
for a weight vector $\mathbf{w}\in\R^D$, the entry $w_i$ is quantized earlier the
larger its associated diagonal element $H_{ii}$. As mentioned in
Section~\ref{sec:group-wise}, in sequential-groups variant of \ScaleAlgorithm{}
the order in which the groups are processed matters, since later groups cannot influence
earlier ones. We adopt a natural group-wise analogue of the standard
weight-level rule: the importance of group~$k$ is
$\sum_{i \in \mathcal{I}_k^G} H_{ii}$, the sum of the diagonal Hessian entries
of its weights, and groups are processed in descending order of this score.

When applying PiSO using the group-wise interleaved integration
with error correction described in the \emph{Interleaved group-wise
optimization} strategy (Section~\ref{sec:inte}), the scale of each
group must be optimized \emph{just before} error correction quantizes its
weights, which forces all weights of a given group to be processed, before any weight of a different group. The standard $\mathbf{H}$-based ordering used
in GPTQ implementations breaks this contiguity. To overcome this we use a \emph{group-aware} reordering: groups are sorted in descending
order of their aggregated importance $\sum_{i \in \mathcal{I}_k^G} H_{ii}$
defined above, and within each group columns are sorted in the usual way by
$H_{ii}$. We impose this constraint in the reordering only whenever \ScaleAlgorithm{} is
combined with GPTQ at the tightest integration level
(GPTQ~$\odot$~\ScaleAlgorithmShort\textsuperscript{$\dagger$} in the relevant Tables).

\section{Integration with Greedy Error Correction Algorithms}\label{app:interleaved-group}

\begin{figure}[t]
  \centering
  \includegraphics[width=\textwidth]{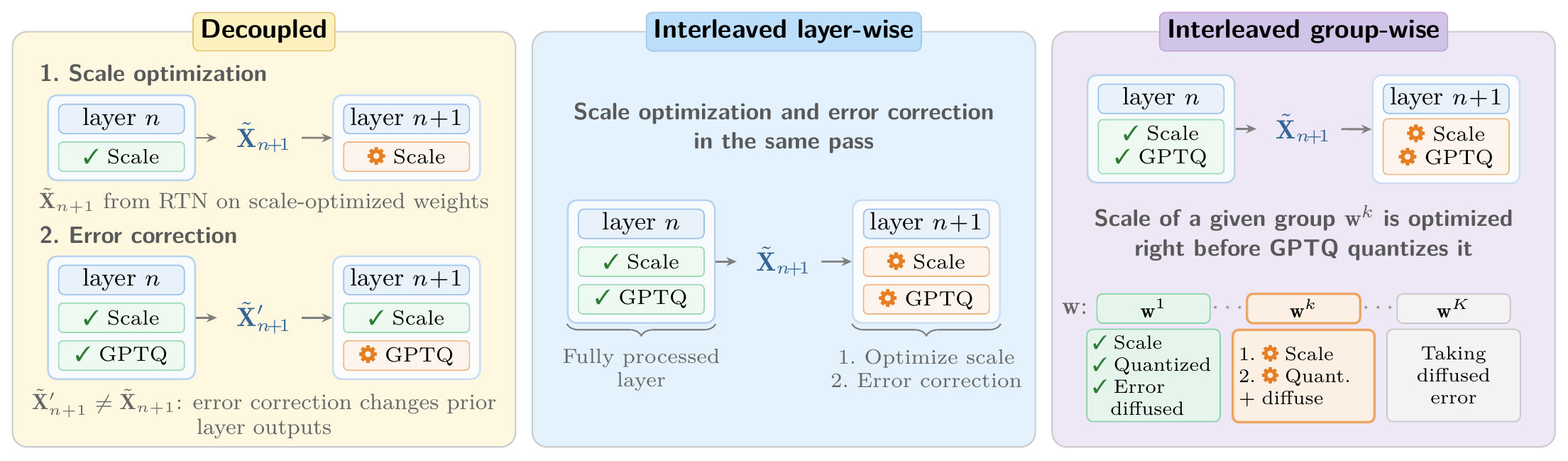}

  \caption{Overview of the three integration strategies of~\ScaleAlgorithm~with error correction algorithms (e.g., GPTQ). The strategies differ in how tightly scale optimization and error correction are interleaved: \emph{decoupled} optimizes all scales first, \emph{interleaved} alternates scale and error-correction updates within each layer, and \emph{fully coupled} re-optimizes scales after each error-correction step.}\label{fig:algorithm_gpxq_integration}
\end{figure}

Greedy error correction algorithms such as GPTQ~\citep{frantar2022gptq} and Qronos~\citep{zhang2026qronos} process the $D$ weights sequentially in two alternating steps: error correction and error diffusion. Following the formulation in~\citet{zhang2026qronos}, let $\tilde{\mathbf{w}}^{t}$ denote the partially quantized state after step $t$, with
\[
\tilde{\mathbf{w}}^{t} = \big(q_{\{\le t\}},\, \mathbf{w}^{t}_{\{>t\}}\big),
\qquad
\tilde{\mathbf{w}}^{0} = \mathbf{w},\qquad t \in \{0,\ldots,D\}
\]
where $q_{\{\le t\}} = [q_1,\ldots,q_t]$ and
$\mathbf{w}^{t}_{\{>t\}} = [w^{t}_{t+1},\ldots,w^{t}_D]$.
At step $t$, error correction selects $q_t$ while holding the other coordinates fixed,
and error diffusion updates the remaining weights:
\begin{align}
q_t
&= \argmin_{p \in \G}
\frac{1}{2}\left\|
\mathbf{X}\mathbf{w}
- \sum_{j=1}^{t-1}s_{\lceil j/G\rceil} q_j \tilde{\mathbf{X}}_j
- s_{\lceil t/G\rceil}p \tilde{\mathbf{X}}_t
- \sum_{j=t+1}^{D} w^{t-1}_j \tilde{\mathbf{X}}_j
\right\|^2,
\label{eq:ec}\\
\mathbf{w}^{t}_{\{>t\}}
&= \argmin_{\mathbf{v} \in \mathbb{R}^{D-t}}
\frac{1}{2}\left\|
\mathbf{X}\mathbf{w}
- \sum_{j=1}^t s_{\lceil j/G\rceil}q_j \tilde{\mathbf{X}}_j
- \sum_{j=t+1}^{D} v_j \tilde{\mathbf{X}}_j
\right\|^2.
\label{eq:ed}
\end{align}
These subproblems admit closed forms: error correction (Equation~\ref{eq:ec}) is a scalar
projection onto $\G$, and error diffusion (Equation~\ref{eq:ed}) is a least-squares update over the
remaining weights.

In group-wise quantization,~\ScaleAlgorithmShort~can be run each time the error correction process reaches a group boundary. Let $t = (k{-}1)G+1$ be the first index of group $k$ in the processing order. Before the weights of group $k$ are quantized,~\ScaleAlgorithmShort~optimizes $s_k$ using the current partially quantized state $\tilde{\mathbf{w}}^{t-1}$. The form of this optimization depends on the group-wise heuristic:

\paragraph{Independent groups interleaved with error correction.}
Cross-group interactions are discarded and the scale of group $k$ is optimized in isolation, analogously to what is proposed in Equation~\ref{eq:independent-group}. The key difference is that the weight vector has been modified by error diffusion from preceding groups
\begin{equation*}
    s_k^\ast=
      \argmin_{s_{k}}\frac{1}{2}\left\|
        \mathbf{X}_{\{k\}}\tilde{\mathbf{w}}^{t}_{\{k\}}
        - s_{k}\, \tilde{\mathbf{X}}_{\{k\}}\, q(\mathbf{w}^{t}_{\{k\}};\, s_{k})
      \right\|^2.
\end{equation*}

\paragraph{Sequential groups interleaved with error correction.}
The scale of group $k$ is optimized under the layer-wise output reconstruction objective, accounting for the quantization error of the preceding groups, analogously to Equation~\ref{eq:sequential-group}. The key difference is that, under interleaving, the preceding groups $i < k$ have been quantized by error correction rather than by round-to-nearest, and the remaining groups $i > k$ have received the error diffusion updates
\begin{equation*}
    s_k^\ast = \argmin_{s_{k}} \left\| \mathbf{X}\,\mathbf{w} - \tilde{\mathbf{X}}\left[\tilde{\mathbf{w}}_{\{1\}}^{t},\ldots,s_{k}\,q(\mathbf{w}_{\{k\}}^{t};s_{k}),\ldots,\mathbf{w}_{\{K\}}^{t}\right]^\top \right\|^2.
\end{equation*}

In Figure~\ref{fig:algorithm_gpxq_integration} we provide a graphic summary of the three integration strategies of \ScaleAlgorithm{} with error correction algorithms proposed in the main text (Section~\ref{sec:inte}).

\section{Proofs}\label{appendix:proofs}

This appendix collects the formal statements and proofs of results referenced in the main text. We begin by deriving the optimal functional form of the unscaled assignment $q$, as a function of the scaling factor $s \in \mathbb{R} \setminus \{0\}$, in the sense of weight mean-squared error minimization:

\begin{restatable}[Weight mean-squared error minimization]{proposition}{prop:optimal_mse}\label{prop:optimal_mse}
Let $\mathbf{w} \in \mathbb{R}^{D}$ denote a weight vector, and assume that the scaling  factor $s \in \mathbb{R} \setminus \{0\}$ is shared for all the weights in the vector. Then, the optimal assignment to the grid can be recovered by minimizing\footnote{We do not claim novelty for this result, although we could not find it explicitly stated as such in prior literature. We include the proof here for completeness.}
\begin{equation*}
\min_{\substack{s \in \mathbb{R} \setminus \{0\} \\\mathbf{q} \in \G^{D}}}\left\lVert \mathbf{w} -s\,\mathbf{q} \right\rVert^{2}=\min_{s \in \mathbb{R} \setminus \{0\}}\left\lVert \mathbf{w} -s\, q(\mathbf{w};s) \right\rVert^{2}, \quad q(\mathbf{w};s):=\left\lfloor\frac{\mathbf{w}}{s}\right\rceil_{\G}.
\end{equation*}
\end{restatable}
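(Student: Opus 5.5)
The plan is to prove the equality by exchanging the order of minimization in the joint problem:
\[
\min_{s\neq 0,\ \mathbf{q}\in\G^D}\|\mathbf{w}-s\mathbf{q}\|^2=\min_{s\neq 0}\ \min_{\mathbf{q}\in\G^D}\|\mathbf{w}-s\mathbf{q}\|^2 ,
\]
and then to show that, for every fixed $s\neq 0$, the inner minimum is attained at $\mathbf{q}=q(\mathbf{w};s)$. Since $\G^D$ is finite, the inner minimum exists for each $s$, and the outer infimum is the same on both sides once the inner problems agree pointwise. Equality of the two outer values (as infima, or as minima when attained) then follows immediately.

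For the inner problem with $s$ fixed, I would first use separability:
\[
\|\mathbf{w}-s\mathbf{q}\|^2=\sum_{i=1}^D (w_i-s q_i)^2 .
\]
Each $q_i$ appears in exactly one term and the constraint set $\G^D$ is a product, so the minimization decouples into $D$ scalar problems $\min_{g\in\G}(w_i-sg)^2$. Next I would factor out the scale:
\[
(w_i-sg)^2=s^2\,(w_i/s-g)^2 .
\]
Because $s^2>0$, the minimizers coincide with those of $|w_i/s-g|$ over $g\in\G$. By the definition of $\round{\cdot}_\G$, the point $\round{w_i/s}_\G$ is exactly such a minimizer. When two grid points tie, both give the same objective value, so the tie-breaking rule does not change the minimum. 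Hence $q(\mathbf{w};s)$ attains the inner minimum.

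I expect the only real care to be needed in two places. The first is where $s$ is allowed to be negative: the factorization still holds since $s^2>0$, so nothing changes. The second is the distinction between inf and min on the outer level, since $s$ ranges over a non-compact set. I would state the result as equality of infima, noting that it becomes equality of minima whenever either side is attained, because the inner values agree for every $s$. No earlier results are needed beyond the definition of the RTN operator.
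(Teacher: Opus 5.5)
Your proposal is correct and follows essentially the same route as the paper. Both arguments fix $s$, decouple the objective coordinate-wise, and observe that each scalar problem $\min_{g\in\G}(w_i-sg)^2$ is solved by rounding $w_i/s$ to the nearest grid point. Your explicit factorization $(w_i-sg)^2=s^2(w_i/s-g)^2$, and your remarks on ties, negative $s$, and infimum versus minimum over the non-compact set $s\neq 0$, make precise what the paper's ``projection of the unconstrained optimum'' chain of equalities states more loosely.
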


\begin{proof}
First, note that, for fixed $s \in \mathbb{R} \setminus \{0\}$, the objective $\min_{\mathbf{q} \in \G^{D}}\Vert \mathbf{w} -s\,\mathbf{q} \Vert^{2}$ is separable in each individual weight
\begin{equation*}
\Vert \mathbf{w} -s\,\mathbf{q} \Vert^{2}=\sum_{i=1}^{D}|w_{i}-s\, q_{i}|^{2}.
\end{equation*}
For each variable, the problem $\min_{q_{i} \in \G}|w_{i}-sq_{i}|^{2}$ is a constrained scalar quadratic problem, whose solution is the projection of the unconstrained optimum onto the feasible set:
\begin{equation*}
\min_{q_{i} \in \G}|w_{i}-s\, q_{i}|^{2}=\min_{q_{i} \in \G}|q_{i}-\min_{p \in \mathbb{R}}|w_{i}-s\, p|^{2}|=\min_{q_{i} \in \G}|q_{i}-w_{i}/s|=\left\lfloor \frac{w_{i}}{s}\right\rceil_{\G}.
\end{equation*}
Consequently, the optimal assignment of the weights to the discrete grid can be recovered by optimizing the scaling factor for the objective:
\begin{equation*}
\min_{s \in \mathbb{R} \setminus \{0\}}\left\lVert \mathbf{w} -s\, q(\mathbf{w};s) \right\rVert^{2}, \quad q(\mathbf{w};s):=\left\lfloor\frac{\mathbf{w}}{s}\right\rceil_{\G}.
\end{equation*}
\end{proof}

The preceding result justifies the use of the round-to-nearest quantizer for fixed scale $s$. However, once calibration data enters the picture, the situation changes:

\begin{remark}\label{rem:subopt_data_aware}
Proposition~\ref{prop:optimal_mse} does not extend to the data-aware objective 
\begin{equation*}
\min_{\substack{s \in \mathbb{R} \setminus \{0\} \\\mathbf{q} \in \G^{D}}}\left\lVert \mathbf{X}\mathbf{w} -s\,\tilde{\mathbf{X}}\mathbf{q} \right\rVert^{2},
\end{equation*}
which corresponds to an integer least‑squares problem, a class of problems that is NP‑hard~\citep{hassibi2002expected}. Precisely, when the scaling factor $s \in \mathbb{R} \setminus \{0\}$ is fixed, the optimal solution \(\mathbf{q}^\ast\) is not necessarily $q(\mathbf{w};s):=\left\lfloor\mathbf{w}/s\right\rceil_{\G}$.
\end{remark}

We next analyze the expected layer-wise reconstruction error. Although Remark~\ref{rem:subopt_data_aware} shows that RTN is not optimal for the data-aware objective in general, the following decomposition reveals that the expected error on unseen data is bounded by a term involving the \emph{parameter mismatch} $\|\mathbf{w} - \mathbf{q}_{\mathbf{X}}\|^{2}$. Thus, for a given scale, this quantity is minimized by the RTN quantizer (Proposition~\ref{prop:optimal_mse}). Consequently, restricting the grid assignments to those produced by round-to-nearest controls one of the three factors in the generalization bound and provides a principled prior on the solution space of the data-aware reconstruction objective.

\begin{restatable}[Layer-wise error decomposition]{remark}{remark:layerwise_error}\label{remark:layerwise_error}
Let $\mathbf{z} \sim \mathcal{X}$ be a random activation from the test distribution, and let  $\tilde{\mathbf{z}}$ denote the activation obtained after passing through previously quantized layers. Let $\mathbf{X}$ be a calibration dataset, and suppose $\mathbf{q}_{\mathbf{X}}$ denotes the quantization assignments calibrated on $\mathbf{X}$. Then, for an unseen random activation $\mathbf{z}$, the expected layer-wise error can be decomposed as follows:
\begin{align*}
\label{eq:layer-error-error}
\mathbb{E}_{\mathbf{z}\sim\mathcal{X}}\left[
\bigl\|\mathbf{z}^\top\mathbf{w} - \tilde{\mathbf{z}}^\top\mathbf{q_{\mathbf{X}}}\bigr\|^2\right]
&=
\mathbb{E}_{\mathbf{z}\sim\mathcal{X}}\Big[
\underbrace{\bigl\|\mathbf{z}^\top(\mathbf{w}-\mathbf{q_{\mathbf{X}}})\bigr\|^2}_{(1)}
\\
&\quad
+ \underbrace{2\big\langle
\mathbf{z}^\top(\mathbf{w}-\mathbf{q_{\mathbf{X}}}),(\mathbf{z}-\tilde{\mathbf{z}})^\top\mathbf{q_{\mathbf{X}}}
\big\rangle}_{(2)}
\\
&\quad
+ \underbrace{\bigl\|(\mathbf{z}-\tilde{\mathbf{z}})^\top\mathbf{q_{\mathbf{X}}}\bigr\|^2}_{(3)}
\Big],
\end{align*}
Assuming $\tilde{\mathbf{z}} \approx \mathbf{z}$, terms (2) and (3) vanish, and, from~\cite[Remark~3.8]{zhang2025provable}, term (1) can be rewritten and bounded as
\begin{align*}
\mathbb{E}_{\mathbf{z}\sim\mathcal{X}}\Big[
\bigl\|\mathbf{z}^\top(\mathbf{w}-\mathbf{q_{\mathbf{X}}})\bigr\|^2\Big]&=\frac{1}{N}\|\mathbf{X}(\mathbf{w}-\mathbf{q_{\mathbf{X}}})\|^{2}+(\mathbf{w}-\mathbf{q_{\mathbf{X}}})^\top\left[\mathbb{E}_{\mathbf{z}\sim\mathcal{X}}\left[\mathbf{z}\mathbf{z}^\top\right]-\frac{1}{N}\mathbf{X}^\top\mathbf{X}\right](\mathbf{w}-\mathbf{q_{\mathbf{X}}})\\
&\le
\underbrace{\frac{1}{N}\|\mathbf{X}(\mathbf{w}-\mathbf{q_{\mathbf{X}}})\|^{2}}_{\text{(1)~calibration fit}}
\;+\;
\underbrace{\left\|
\mathbb{E}_{\mathbf{z}\sim\mathcal{X}}\!\left[\mathbf{z}\mathbf{z}^\top\right]
-
\frac{1}{N}\mathbf{X}^\top\mathbf{X}
\right\|_{\mathrm{op}}}_{\text{(2)~covariance deviation}}
\cdot
\underbrace{\big\|\mathbf{w}-\mathbf{q_{\mathbf{X}}}\big\|^{2}}_{\text{(3)~parameter mismatch}}.
\end{align*}

The bound highlights three factors governing generalization error~\citep{zhang2025provable}:
\begin{inparaenum}[(1)]
    \item calibration fit on $\mathbf X$,
    \item second-moment estimation error; and
    \item proximity between $\mathbf{w}$ and $\mathbf{q_{\mathbf{X}}}$.
\end{inparaenum}

\end{restatable}

The following sequence of results (Lemma~\ref{lemma:separable}, Lemma~\ref{lemma:closed_form_scale}, Lemma~\ref{lemma:decision_regions}, Corollary~\ref{cor:step_wise} and Proposition~\ref{prop:sweep_optimality}) establishes the theoretical foundation for the algorithm presented in Section~\ref{sec:channel-wise}, proving that it is guaranteed to return the globally optimal channel‑wise scale under round‑to‑nearest quantization.

\begin{restatable}[Channel-wise separability]{lemma}{lemma:separable}\label{lemma:separable}
Writing $\mathbf{W}=[\mathbf{w}_1,\ldots,\mathbf{w}_M]$ and assigning an independent scale $s_j$ to each column, the matrix products decompose column-wise as
\begin{equation*}
\mathbf{X}\mathbf{W}
= [\mathbf{X}\mathbf{w}_1,\ldots,\mathbf{X}\mathbf{w}_M],
\qquad
\tilde{\mathbf{X}}\,\mathcal{Q}(\mathbf{W};\mathbf{s})
= [\tilde{\mathbf{X}}\,\mathcal{Q}(\mathbf{w}_1;s_1),\ldots,\tilde{\mathbf{X}}\,\mathcal{Q}(\mathbf{w}_M;s_M)] .
\end{equation*}

Since $\|\mathbf{A}\|_F^2 = \sum_{j=1}^M \|\mathbf{a}_j\|^2$, the objective in Equation~\ref{eq:layer-obj} decomposes as
\begin{equation}
\label{eq:remark_separable}
E(\mathbf{s})
= \sum_{j=1}^M 
\bigl\|
\mathbf{X}\mathbf{w}_j
-
\tilde{\mathbf{X}}\,\mathcal{Q}(\mathbf{w}_j; s_j)
\bigr\|^2.
\end{equation}
Hence, the objective in Equation~\ref{eq:remark_separable} separates into $M$ independent channel-wise problems, thus each $s_j$ can therefore be optimized independently.
\end{restatable}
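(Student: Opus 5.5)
The claim is essentially a bookkeeping identity, so the plan is a direct computation in three steps.

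First, the column-wise structure of the quantized matrix. The quantizer $\mathcal{Q}(\mathbf{W};\mathbf{s}) = \mathbf{s}\cdot\lfloor \mathbf{W}/\mathbf{s}\rceil_{\G}$ in Equation~\ref{eq:layer-obj-param} acts entrywise, with the scale of entry $(i,j)$ equal to $s_j$. Therefore its $j$-th column is $s_j\,\lfloor \mathbf{w}_j/s_j\rceil_{\G} = \mathcal{Q}(\mathbf{w}_j;s_j)$, and this column depends only on $\mathbf{w}_j$ and $s_j$. By definition of matrix multiplication, $(\mathbf{A}\mathbf{B})_{:,j} = \mathbf{A}\mathbf{B}_{:,j}$ for any conformable $\mathbf{A},\mathbf{B}$. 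Applying this once with $\mathbf{A}=\mathbf{X},\mathbf{B}=\mathbf{W}$ and once with $\mathbf{A}=\tilde{\mathbf{X}},\mathbf{B}=\mathcal{Q}(\mathbf{W};\mathbf{s})$ gives the two displayed column decompositions.

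Second, the decomposition of the objective. Subtracting the two products, the $j$-th column of the residual $\mathbf{X}\mathbf{W}-\tilde{\mathbf{X}}\mathcal{Q}(\mathbf{W};\mathbf{s})$ is $\mathbf{X}\mathbf{w}_j-\tilde{\mathbf{X}}\mathcal{Q}(\mathbf{w}_j;s_j)$. Writing the squared Frobenius norm as a sum of squared entries and grouping those entries by column yields $\|\mathbf{A}\|_F^2=\sum_j\|\mathbf{a}_j\|^2$. This gives Equation~\ref{eq:remark_separable}, and each summand $E_j(s_j)$ depends on $s_j$ alone.

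Third, optimizing the channels independently. Since $E(\mathbf{s})=\sum_j E_j(s_j)$ with disjoint arguments, $\inf_{\mathbf{s}} E(\mathbf{s})=\sum_j \inf_{s_j} E_j(s_j)$. Concretely, $E(\mathbf{s})\ge\sum_j E_j(s_j^\ast)$ for every $\mathbf{s}$, and equality is attained at $\mathbf{s}^\ast=(s_1^\ast,\ldots,s_M^\ast)$ whenever the per-channel minimizers exist. Conversely, a minimizer of $E$ must minimize each $E_j$, since otherwise improving a single coordinate would strictly decrease the sum. The existence of each $s_j^\ast$ is not needed for separability itself; it is supplied later by the sweep analysis.

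There is no real obstacle. The only point needing care is checking that the quantizer involves no cross-column coupling: ties are broken per entry and the scales are per column. This holds by the entrywise definition in Equation~\ref{eq:quantizer}.
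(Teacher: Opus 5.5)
Your proposal is correct and follows the same route as the paper: apply the column identity $(\mathbf{A}\mathbf{B})_{:,j}=\mathbf{A}\mathbf{B}_{:,j}$ to $\mathbf{X}\mathbf{W}$ and to the entrywise, per-column-scaled quantizer, then use $\|\mathbf{A}\|_F^2=\sum_j\|\mathbf{a}_j\|^2$. Your third step, which shows why minimizing the separable sum reduces to minimizing each $E_j(s_j)$ on its own, just spells out what the paper asserts in its final sentence.
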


\begin{restatable}[Closed-form for grid assignments]{lemma}{lemma:closed_form_scale}\label{lemma:closed_form_scale}
The following result is standard and appears, e.g., as in~\cite[Proposition~2.1]{zhang2026beacon}; we restate it here for completeness. For fixed $\mathbf{q} \in \G^{D}$ with $\tilde{\mathbf{X}}\,\mathbf{q} \neq \mathbf{0}$, the objective
\begin{equation}\label{eq:channel_error}
E(s) = \|\mathbf{X}\mathbf{w} - s\,\tilde{\mathbf{X}}\, \mathbf{q}\|^2
\end{equation}
is a strictly convex quadratic in~$s$, uniquely minimized at
\begin{equation*}
s^\ast=\frac{\langle \tilde{\mathbf{X}}\mathbf{q}, \mathbf{X}\mathbf{w}\rangle}{\|\,\tilde{\mathbf{X}}\, \mathbf{q}\|^{2}}.
\end{equation*}
\end{restatable}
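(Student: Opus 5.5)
The plan is to expand the squared norm in Equation~\ref{eq:channel_error} as a polynomial in the scalar $s$ and read off its convexity and minimizer directly. Set $\mathbf{a} := \mathbf{X}\mathbf{w}$ and $\mathbf{b} := \tilde{\mathbf{X}}\mathbf{q}$; both are fixed vectors once $\mathbf{q}$ is fixed. Bilinearity of the inner product then gives
\begin{equation*}
E(s) = \|\mathbf{a}\|^2 - 2s\,\langle \mathbf{b}, \mathbf{a}\rangle + s^2\,\|\mathbf{b}\|^2 ,
\end{equation*}
which is a univariate polynomial of degree at most two. This is exactly the form $c - 2s\,\alpha(\mathbf{q}) + s^2\beta(\mathbf{q})$ in Equation~\ref{eq:closed-form}, with $c = \|\mathbf{X}\mathbf{w}\|^2$ independent of $s$, $\alpha(\mathbf{q}) = \mathbf{q}^\top\tilde{\mathbf{X}}^\top\mathbf{X}\mathbf{w} = \mathbf{q}^\top\mathbf{G}\mathbf{w}$ and $\beta(\mathbf{q}) = \mathbf{q}^\top\mathbf{H}\mathbf{q}$.

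For strict convexity, the second derivative is $E''(s) = 2\|\mathbf{b}\|^2$. The hypothesis $\tilde{\mathbf{X}}\mathbf{q}\neq\mathbf{0}$ makes this strictly positive. So $E$ is a strictly convex quadratic: a parabola with positive leading coefficient.

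Such a function is coercive and has at most one critical point, so its unique global minimizer is the root of $E'(s) = -2\langle\mathbf{b},\mathbf{a}\rangle + 2s\|\mathbf{b}\|^2$. Solving gives $s^\ast = \langle \tilde{\mathbf{X}}\mathbf{q},\mathbf{X}\mathbf{w}\rangle / \|\tilde{\mathbf{X}}\mathbf{q}\|^2$, as claimed. Alternatively, completing the square, $E(s) = \|\mathbf{b}\|^2 (s - s^\ast)^2 + \|\mathbf{a}\|^2 - \langle\mathbf{a},\mathbf{b}\rangle^2/\|\mathbf{b}\|^2$, shows uniqueness without calculus.

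There is no genuine obstacle here. The only care point is the degenerate case $\tilde{\mathbf{X}}\mathbf{q}=\mathbf{0}$, which the hypothesis excludes: there $E$ is constant and no unique minimizer exists. This degeneracy also affects the later sweep, on intervals where $\beta = 0$.
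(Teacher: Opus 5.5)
Your proof is correct and follows the paper's argument essentially verbatim: expand $E(s)$ as a quadratic in $s$, use $\|\tilde{\mathbf{X}}\mathbf{q}\|^2>0$ for strict convexity, and solve the first-order condition $E'(s^\ast)=0$. The completing-the-square remark confirms uniqueness without calculus, but it is a variant of the same argument rather than a different route.
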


\begin{proof}
Expanding the squared norm:
\begin{equation*}
E(s) = \|\mathbf{X}\mathbf{w}\|^{2} - 2s\,\langle\mathbf{X}\mathbf{w},\tilde{\mathbf{X}}\, \mathbf{q}\rangle +s^{2}\|\,\tilde{\mathbf{X}}\, \mathbf{q}\|^{2}.
\end{equation*}
Since $\|\tilde{\mathbf{X}}\,\mathbf{q}\|^2 > 0$, this is a strictly convex quadratic in~$s$. Setting the derivative to zero:
\begin{align*}
\frac{dE}{ds}(s^\ast) = 0 &\iff - 2\,\langle\mathbf{X}\mathbf{w},\tilde{\mathbf{X}}\, \mathbf{q}\rangle +2s^\ast\|\,\tilde{\mathbf{X}}\, \mathbf{q}\|^{2} = 0 \\
&\iff s^\ast= \frac{\langle \tilde{\mathbf{X}}\mathbf{q}, \mathbf{X}\mathbf{w}\rangle}{\|\,\tilde{\mathbf{X}}\, \mathbf{q}\|^{2}},
\end{align*}
which is the unique global minimizer.
\end{proof}

\begin{remark}[Connection to $\alpha/\beta$ notation]\label{remark:alpha_beta_equivalence}
The closed-form minimizer $s^\ast=\langle \tilde{\mathbf{X}}\mathbf{q}, \mathbf{X}\mathbf{w}\rangle / \|\tilde{\mathbf{X}}\mathbf{q}\|^{2}$ can be expressed in terms of the matrices $\mathbf{H} = \tilde{\mathbf{X}}^\top\tilde{\mathbf{X}}$ and $\mathbf{G} = \tilde{\mathbf{X}}^\top\mathbf{X}$ defined in Equation~\ref{eq:HG-def}. Indeed,
\[
\alpha(\mathbf{q}) := \langle \tilde{\mathbf{X}}\mathbf{q}, \mathbf{X}\mathbf{w}\rangle = (\tilde{\mathbf{X}}\mathbf{q})^\top (\mathbf{X}\mathbf{w}) = \mathbf{q}^\top \tilde{\mathbf{X}}^\top \mathbf{X}\, \mathbf{w} = \mathbf{q}^\top \mathbf{G}\, \mathbf{w},
\]
and
\[
\beta(\mathbf{q}) := \|\tilde{\mathbf{X}}\mathbf{q}\|^{2} = (\tilde{\mathbf{X}}\mathbf{q})^\top (\tilde{\mathbf{X}}\mathbf{q}) = \mathbf{q}^\top \tilde{\mathbf{X}}^\top \tilde{\mathbf{X}}\, \mathbf{q} = \mathbf{q}^\top \mathbf{H}\, \mathbf{q},
\]
recovering the definitions in Equation~\ref{eq:HG-def} and the closed form $s^\ast(\mathbf{q}) = \alpha(\mathbf{q})  / \beta(\mathbf{q}) $ in Equation~\ref{eq:closed-form}.
Similarly, Equation~\ref{eq:channel_error} can be expressed in terms of $\alpha$, $\beta$ and a constant term, $c:=\|\mathbf{X}\mathbf{w}\|^{2}$, independent of $s$:
\begin{equation*}
E(s \mid \mathbf{q}) = \|\mathbf{X}\mathbf{w}\|^{2} - 2s\,\langle\mathbf{X}\mathbf{w},\tilde{\mathbf{X}}\, \mathbf{q}\rangle +s^{2}\|\,\tilde{\mathbf{X}}\, \mathbf{q}\|^{2} = c - 2s\,\alpha(\mathbf{q}) + s^{2}\,\beta(\mathbf{q}).
\end{equation*}

\end{remark}

\begin{lemma}[Decision regions of round-to-nearest]\label{lemma:decision_regions}
Let $\G=\{g_1<g_2<\cdots<g_L\}\subset\mathbb{R}$ be a finite grid and define midpoints
\[
m_\ell := \frac{g_\ell+g_{\ell+1}}{2}, \qquad \ell=1,\dots,L-1.
\]
Let $\round{\cdot}_\G$ denote round-to-nearest on $\G$ with ties broken by rounding up.
Define the intervals
\[
I_1 := (-\infty,m_1), \qquad
I_\ell := [m_{\ell-1},m_\ell)\ (2\le \ell\le L-1), \qquad
I_L := [m_{L-1},+\infty).
\]
Then, for all $y\in\mathbb{R}$ and $\ell\in\{1,\dots,L\}$,
\[
\round{y}_\G = g_\ell \quad\Longleftrightarrow\quad y\in I_\ell.
\]
\end{lemma}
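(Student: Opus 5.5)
The plan is to prove the forward implication directly and obtain the converse from the fact that the intervals $I_1,\dots,I_L$ partition $\mathbb{R}$. First I verify the partition. Since $g_1<\cdots<g_L$, the midpoints are strictly increasing, $m_1<m_2<\cdots<m_{L-1}$. Hence the half-open intervals $(-\infty,m_1),[m_1,m_2),\dots,[m_{L-1},+\infty)$ are pairwise disjoint and their union is $\mathbb{R}$. The degenerate case $L=1$ is trivial: there are no midpoints, $I_1=\mathbb{R}$, and rounding always returns $g_1$. Because $\round{\cdot}_\G$ is a well-defined function, once I show that $y\in I_\ell$ implies $\round{y}_\G=g_\ell$, the converse follows. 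Indeed, if $\round{y}_\G=g_\ell$ but $y\in I_{\ell'}$ with $\ell'\neq\ell$, the forward direction would give $\round{y}_\G=g_{\ell'}\neq g_\ell$, a contradiction.

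For the forward direction, fix $y\in I_\ell$. I need $|y-g_\ell|\le|y-g_k|$ for all $k$, with strict inequality for $k<\ell$ and also for $k>\ell$ except where the tie rule applies. The key elementary fact compares two grid points $g_a<g_b$ with midpoint $m=(g_a+g_b)/2$. One has $|y-g_a|<|y-g_b|$ iff $y<m$, equality iff $y=m$, and $|y-g_b|<|y-g_a|$ iff $y>m$. This follows by squaring: $|y-g_b|^2-|y-g_a|^2=(g_b-g_a)(g_a+g_b-2y)$.

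Now take $k>\ell$. The midpoint of $g_\ell$ and $g_k$ is at least $m_\ell$, and strictly greater when $k>\ell+1$. Since $y<m_\ell$ (for $\ell<L$), we get $y<(g_\ell+g_k)/2$, so $g_\ell$ is strictly closer than $g_k$. Take $k<\ell$. The midpoint of $g_k$ and $g_\ell$ is at most $m_{\ell-1}\le y$. So $g_\ell$ is at least as close as $g_k$, with equality only if $k=\ell-1$ and $y=m_{\ell-1}$. In that tie, the up-rounding rule selects the larger point $g_\ell$. Therefore $g_\ell$ is the value returned by $\argmin$ with ties broken upward. The boundary cases $\ell=1$ and $\ell=L$ are the same argument with one of the two comparisons vacuous.

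The only delicate point is interpreting the tie-breaking convention precisely: "rounding up" must mean choosing the largest minimizer. I also need to check that, at $y=m_{\ell-1}$, the only tied candidates are $g_{\ell-1}$ and $g_\ell$. This holds because the midpoints of non-adjacent pairs are strictly separated from $m_{\ell-1}$, so no third grid point can be equally close. With that settled, the rest is routine.
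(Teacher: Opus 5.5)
Your proof is correct and uses essentially the same elementary midpoint-comparison argument as the paper. The differences are minor. You compare $g_\ell$ with every $g_k$ through the pairwise midpoint $(g_k+g_\ell)/2$, whereas the paper first compares $g_\ell$ with its two neighbors and then extends using the monotonicity of $|y-g_j|$ along the ordered grid. You also spell out two things the paper leaves implicit: the converse, via the partition of $\mathbb{R}$ by the $I_\ell$, and the tie at $y=m_{\ell-1}$. On the latter, the paper loosely claims $g_\ell$ is the unique minimizer there, which is not accurate.
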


\begin{proof}
Fix $\ell\in\{2,\dots,L-1\}$.  
We first compare the distances from $y$ to the neighboring grid points.

Since $g_{\ell-1}<g_\ell$, we have
\[
\begin{aligned}
|y-g_\ell|\le |y-g_{\ell-1}|
&\iff y-g_\ell \ge -(y-g_{\ell-1})\qquad (\text{since } g_{\ell-1}<g_\ell) \\
&\iff 2y \ge g_\ell+g_{\ell-1} \\
&\iff y \ge \frac{g_{\ell-1}+g_\ell}{2}
= m_{\ell-1}.
\end{aligned}
\]
Similarly,
\[
\begin{aligned}
|y-g_\ell|<|y-g_{\ell+1}|
&\iff y-g_\ell < -(y-g_{\ell+1}) \qquad (\text{since } g_\ell<g_{\ell+1}) \\
&\iff 2y < g_\ell+g_{\ell+1} \\
&\iff y < \frac{g_\ell+g_{\ell+1}}{2}
= m_\ell.
\end{aligned}
\]

Therefore,
\[
y\in[m_{\ell-1},m_\ell)
\iff
\bigl(|y-g_\ell|\le |y-g_{\ell-1}|\bigr)
\ \wedge\
\bigl(|y-g_\ell|<|y-g_{\ell+1}|\bigr).
\tag{$\star$}
\]

Now let $y\in[m_{\ell-1},m_\ell)$.  
For any $j\le \ell-1$, since $g_j\le g_{\ell-1}\le y$,
\[
|y-g_j|=y-g_j \ge y-g_{\ell-1}\ge |y-g_\ell|.
\]
For any $j\ge \ell+1$, since $y<m_\ell<g_{\ell+1}\le g_j$,
\[
|y-g_j|=g_j-y \ge g_{\ell+1}-y > |y-g_\ell|.
\]
Hence $g_\ell$ uniquely minimizes $|y-g|$ over $g\in\G$, and thus
\(\round{y}_\G=g_\ell\).

The endpoint cases $\ell=1$ and $\ell=L$ follow analogously.  
The half‑open interval convention is consistent with ties rounded up:
if $y=m_\ell$, then $|y-g_\ell|=|y-g_{\ell+1}|$ and the rule selects $g_{\ell+1}$.
\end{proof}

\begin{corollary}[Step-wise structure]\label{cor:step_wise}
Let $\G=\{g_1<\cdots<g_L\}$ be a finite grid with midpoints
$m_\ell=(g_\ell+g_{\ell+1})/2$, and let the decision regions be
\[
I_1 := (-\infty,m_1), \qquad
I_\ell := [m_{\ell-1},m_\ell)\ (2\le \ell\le L-1), \qquad
I_L := [m_{L-1},+\infty).
\]
For any $w\neq 0$, define $q(s):=\round{w/s}_\G$ for $s>0$.  
Then, for every $\ell\in\{1,\dots,L\}$,
\[
q(s)=g_\ell
\quad\Longleftrightarrow\quad
\frac{w}{s}\in I_\ell
\]
Consequently, $q(s)$ is piecewise constant on $(0,\infty)$, with jumps occurring
only at the interval boundaries $t_\ell:=w/m_\ell$,~$\ell=1,\dots,L-1$.
\end{corollary}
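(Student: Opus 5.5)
The plan is to reduce the claim to Lemma~\ref{lemma:decision_regions} through the substitution $y = w/s$. Fix $w \neq 0$ and $s > 0$. Since $w/s$ is a real number, Lemma~\ref{lemma:decision_regions} applied to $y = w/s$ gives $\round{w/s}_\G = g_\ell \iff w/s \in I_\ell$ for every $\ell$. This is exactly the stated equivalence $q(s) = g_\ell \iff w/s \in I_\ell$, so the first part needs nothing beyond the lemma.

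For the piecewise-constant structure, I would study the map $\psi(s) := w/s$ on $(0,\infty)$. It is continuous and strictly monotone: decreasing if $w > 0$, increasing if $w < 0$. It is also a bijection onto $(0,\infty)$ or $(-\infty,0)$ respectively. Hence the preimage of each interval $I_\ell$ under $\psi$ is an interval in $s$ (possibly empty). The endpoints of these preimages are obtained by solving $w/s = m_\ell$, which gives $s = w/m_\ell$. This is admissible, i.e.\ positive, only when $m_\ell \neq 0$ and $\mathrm{sign}(m_\ell) = \mathrm{sign}(w)$; otherwise $m_\ell$ is never crossed.

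The preimages $\psi^{-1}(I_\ell)$ are disjoint and cover $(0,\infty)$, because the $I_\ell$ partition $\mathbb{R}$. They are also ordered consistently with $\ell$, by monotonicity of $\psi$. Therefore $(0,\infty)$ is partitioned into finitely many consecutive intervals, and $q(s)$ is constant on each of them. It follows that any discontinuity of $q$ must lie at a common endpoint of two consecutive nonempty preimages. Such an endpoint satisfies $w/s = m_\ell$ for some $\ell$, so every jump occurs at some $t_\ell = w/m_\ell$.

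The only delicate point is bookkeeping rather than substance. First, the corollary claims jumps occur only at the $t_\ell$, not at every $t_\ell$. I will therefore phrase the conclusion as an inclusion: the jump set is contained in $\{t_\ell : t_\ell > 0\}$. This automatically discards midpoints of the wrong sign, and a midpoint equal to $0$, for which $t_\ell$ is undefined. Second, the half-open convention of $I_\ell$, inherited from ties-round-up, becomes reversed in $s$ when $w > 0$, because $\psi$ is then decreasing. I would note this only to confirm that each $t_\ell$ is assigned to exactly one side, without affecting the conclusion.
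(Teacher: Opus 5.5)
Your proposal is correct and follows the paper's proof essentially step for step: substitute $y=w/s$ into Lemma~\ref{lemma:decision_regions} to get the equivalence, then use continuity and strict monotonicity of $s\mapsto w/s$ to see that the preimages of the $I_\ell$ partition $(0,\infty)$, with boundaries solving $w/s=m_\ell$. Your extra bookkeeping is a valid sharpening of points the paper leaves implicit: you keep only those $t_\ell$ with $t_\ell>0$, exclude $m_\ell=0$, and phrase the jump set as an inclusion.
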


\begin{proof}
For $w\neq 0$, the map $s\mapsto w/s$ is continuous and strictly monotone on
$(0,\infty)$. From Lemma~\ref{lemma:decision_regions},
$\round{y}_\G=g_\ell$ if and only if $y\in I_\ell$. Substituting $y=w/s$ yields
the stated equivalence. Monotonicity implies that the preimages of the intervals
$I_\ell$ under $s\mapsto w/s$ form a partition of $(0,\infty)$, with boundaries
given by the solutions of $w/s=m_\ell$, i.e., at the transition scales $t_\ell=w/m_\ell$.
\end{proof}

\begin{corollary}[Piecewise-constant structure for the full weight vector]\label{cor:full_vector}
Let $\G=\{g_1<\cdots<g_L\}$ be a finite grid with midpoints
$m_\ell=(g_\ell+g_{\ell+1})/2$, and let $\mathbf{w}\in\R^D$ with $w_i\neq 0$
for all $i$.  For each weight $w_i$, Corollary~\ref{cor:step_wise} provides
$L-1$ transition scales $t_{i,\ell}=w_i/m_\ell$,
$\ell=1,\dots,L-1$.  Collect all $T$ positive transition scales ($T\leq D(L-1)$) and,
assuming they are distinct (the degenerate case is treated in
Appendix~\ref{appendix:degenerate_intervals}), sort them in decreasing order
\[
\tau_1>\tau_2>\cdots>\tau_T.
\]
Define the half-open intervals
\[
J_0:=(\tau_1,+\infty),\qquad
J_j:=[\tau_j,\tau_{j-1})\;\;(j=1,\dots,T),\qquad
J_{T+1}:=(0,\tau_T).
\]
Then the vector-valued assignment
$\mathbf{q}(s):=\lfloor\mathbf{w}/s\rceil_{\G}$ is constant on each $J_j$,
$j=0,\dots,T+1$.  Moreover, at each transition $\tau_j$ exactly one
component of $\mathbf{q}$ changes: if $\tau_j=t_{i,\ell}$, then only $q_i$
jumps (from $g_\ell$ to $g_{\ell+1}$ or vice versa) while all other entries
remain fixed.
\end{corollary}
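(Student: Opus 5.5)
The plan is to reduce everything to the scalar statement in Corollary~\ref{cor:step_wise}, applied coordinate by coordinate, and then use the distinctness assumption on the transition scales.

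\textbf{Step 1 (each coordinate is constant between its own transitions).} For each $i$, Corollary~\ref{cor:step_wise} with $w=w_i\neq 0$ shows that $q_i(s)=\round{w_i/s}_\G$ changes value on $(0,\infty)$ only at its own transition scales $t_{i,\ell}=w_i/m_\ell$. Moreover, $q_i$ is constant on each connected component of $(0,\infty)\setminus\{t_{i,\ell}\}_\ell$. Only positive $t_{i,\ell}$ matter: a non-positive $w_i/m_\ell$ never lies in $(0,\infty)$.

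\textbf{Step 2 (the vector is constant on each $J_j$).} Every $J_j$ is an interval that contains no $\tau$ in its interior. I will also check the half-open endpoint convention: as $s$ ranges over $[\tau_j,\tau_{j-1})$, the ratio $w_i/s$ ranges over a half-open interval. It must be closed or open on the correct side to sit inside a single decision region $I_\ell$ of Lemma~\ref{lemma:decision_regions}. This point needs care because of the sign of $w_i$.
\begin{itemize}
\item For $w_i>0$, the map $s\mapsto w_i/s$ is decreasing. So $s\in[\tau_j,\tau_{j-1})$ gives $w_i/s\in(w_i/\tau_{j-1},\,w_i/\tau_j]$, which is closed on the right.
\item The regions $I_\ell=[m_{\ell-1},m_\ell)$ are closed on the left.
\end{itemize}
A point $s=\tau_j=t_{i,\ell}$ therefore gives $w_i/s=m_\ell$, which rounds up to $g_{\ell+1}$. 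Meanwhile, $s$ slightly larger than $\tau_j$ gives $w_i/s$ slightly below $m_\ell$, hence $g_\ell$. So the assignment at the left endpoint $\tau_j$ would actually differ from the interior of $J_j$ for the coordinate $i$ transitioning there. This is the main obstacle.

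I would resolve it as follows. For the coordinate whose transition is $\tau_j$ itself, I will verify which side the tie lands on and, if needed, argue that the statement's intervals should be read up to the (measure-zero) endpoint convention. For every other coordinate $k$, $\tau_j$ is not one of its transitions by the distinctness assumption, so $q_k$ is constant on a neighbourhood of $\tau_j$. For the optimization in Proposition~\ref{prop:sweep_optimality}, constancy on the open interior $(\tau_j,\tau_{j-1})$ plus continuity of $E(s\mid\mathbf q)$ suffices, so I will state and prove constancy on the interior cleanly and treat endpoints separately. The same monotonicity argument with reversed orientation covers $w_i<0$.

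\textbf{Step 3 (exactly one coordinate changes at each transition).} Take $\tau_j=t_{i,\ell}$. By distinctness, no other $t_{k,\ell'}$ equals $\tau_j$, so by Step 1 all $q_k$ with $k\neq i$ are constant across a neighbourhood of $\tau_j$. For $q_i$ itself, crossing $s=t_{i,\ell}$ moves $w_i/s$ across $m_\ell$, so by Lemma~\ref{lemma:decision_regions} $q_i$ switches between the adjacent values $g_\ell$ and $g_{\ell+1}$. The direction depends on the sign of $w_i$: increasing as $s$ decreases when $w_i>0$, and decreasing when $w_i<0$. Finally, $T\le D(L-1)$ holds simply because each weight contributes at most $L-1$ transitions.
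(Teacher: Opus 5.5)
Your reduction is the same as the paper's: apply Corollary~\ref{cor:step_wise} coordinatewise, then use distinctness of the $\tau_j$ to get the single-coordinate jump.

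Your endpoint worry, however, should not be argued away ``if needed''. It is a genuine error in the statement. The paper's proof has the same hole: it asserts that no component crosses a midpoint for $s\in[\tau_j,\tau_{j-1})$ without examining the tie at $\tau_j$. Concretely, take $\G=\{-4,\dots,3\}$, $D=1$ and $w_1=1$. The positive transitions are $2>2/3>0.4$. Then $q(2/3)=\round{1.5}_\G=2$, while $q(s)=1$ for $s\in(2/3,2)$, so $\mathbf q$ is not constant on $J_2=[2/3,2)$.

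So commit to the corrected claim your argument actually proves. First, $\mathbf q$ is constant on each open interval $(\tau_j,\tau_{j-1})$, as well as on $J_0$ and $J_{T+1}$. Second, if $\tau_j=t_{i,\ell}$, then $\mathbf q(\tau_j)$ equals the value on the interval just below $\tau_j$ when $w_i>0$. When $w_i<0$ it equals the value just above, since $s\mapsto w_i/s$ is then increasing; for negative weights the stated convention $[\tau_j,\tau_{j-1})$ is correct. Hence no single half-open convention works for a mixed-sign $\mathbf w$.

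The statement also has an indexing glitch: $J_1=[\tau_1,\tau_0)$ is undefined as written. With $\tau_0:=+\infty$ it overlaps $J_0$, so there are really only $T+1$ intervals.

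Your corrected version is enough for the infimum in Proposition~\ref{prop:sweep_optimality}, as you note. Attainment additionally uses the endpoint values, which your corrected statement supplies. The attainment step (Case~2, which relies on $\tau_{j^\ast-1}\in J_{j^\ast-1}$) must then move toward whichever neighbouring interval owns the endpoint, rather than always toward the interval above.
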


\begin{proof}
By Corollary~\ref{cor:step_wise}, each component $q_i(s)=\lfloor w_i/s\rceil_{\G}$
is piecewise constant on $(0,\infty)$ with jumps only at
$\{t_{i,\ell}\}_{\ell=1}^{L-1}$.  The full vector $\mathbf{q}(s)$ can therefore
change only when \emph{some} component $q_i$ changes, which occurs only at a
transition scale $\tau_j\in\{t_{i,\ell}: i\in[D],\,\ell\in[L{-}1]\}$.  Between
consecutive transition scales (i.e., for $s\in[\tau_j,\tau_{j-1})$), no
component crosses a midpoint, so every $q_i(s)$ is constant, and hence
$\mathbf{q}(s)$ is constant on $J_j$.

Since all transition scales are assumed distinct, each $\tau_j$ corresponds to a
unique pair $(i,\ell)$.  The only component that crosses a midpoint at
$s=\tau_j$ is $q_i$ (moving from $g_\ell$ to $g_{\ell+1}$, or the reverse,
depending on the sign of $w_i$ and the sweep direction).  All other components
$q_k$, $k\neq i$, have their nearest transition scales at values different from
$\tau_j$ and thus remain unchanged.
\end{proof}

\begin{proposition}[Global optimality of the piecewise-quadratic sweep]\label{prop:sweep_optimality}
Under the setting and notation of Corollary~\ref{cor:full_vector}, let
$\mathbf{H}=\tilde{\mathbf{X}}^\top\tilde{\mathbf{X}}\succeq 0$ and
$\mathbf{G}=\tilde{\mathbf{X}}^\top\mathbf{X}$, and consider the channel-wise
error
\[
E(s):=\|\mathbf{X}\mathbf{w}-s\,\tilde{\mathbf{X}}\,q(\mathbf{w};\,s)\|^2,
\qquad s>0.
\]
Write $\mathbf{q}_j$ for the constant value of $\mathbf{q}(s)$ on $J_j$, and
define
\[
\alpha_j:=\mathbf{q}_j^\top\mathbf{G}\,\mathbf{w},\qquad
\beta_j:=\mathbf{q}_j^\top\mathbf{H}\,\mathbf{q}_j,\qquad
\varphi_j(s):=s^2\beta_j-2s\,\alpha_j+c,
\]
where $c:=\|\mathbf{X}\mathbf{w}\|^2$ is a constant independent of $s$.  Then:
\begin{enumerate}
\item $\varphi_j$ is a convex quadratic that coincides with $E$ on $J_j$, i.e.,\
      $E(s)=\varphi_j(s)$ for all $s\in J_j$;
\item $\displaystyle\inf_{s\in J_j}E(s)
      =\min_{s\in\overline{J_j}}\varphi_j(s)
      =\varphi_j(s_j^\ast)$,
      where
      $s_j^\ast:=\mathrm{clamp}(\alpha_j/\beta_j,\;\tau_j,\;\tau_{j-1})$;
\item $\displaystyle\inf_{s>0}E(s)=\min_{j\in\{0,\dots,T+1\}}\varphi_j(s_j^\ast)$,
      and this infimum is attained.
\end{enumerate}
\end{proposition}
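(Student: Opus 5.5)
The plan is to prove the three items in order, each one building on the previous.

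\textbf{Item 1.} By Corollary~\ref{cor:full_vector}, $q(\mathbf{w};s)=\mathbf{q}_j$ for every $s\in J_j$. Substituting into $E$ and expanding as in Remark~\ref{remark:alpha_beta_equivalence} gives $E(s)=c-2s\alpha_j+s^2\beta_j=\varphi_j(s)$ on $J_j$. Convexity follows from $\beta_j=\|\tilde{\mathbf{X}}\mathbf{q}_j\|^2\ge 0$.

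\textbf{Item 2.} I will first assume $\beta_j>0$, so $\varphi_j$ is strictly convex with unconstrained minimizer $\alpha_j/\beta_j$ (Lemma~\ref{lemma:closed_form_scale}). Convexity then gives two facts. First, the minimizer of a one-dimensional convex quadratic over a closed interval $[a,b]$ is the projection $\mathrm{clamp}(\alpha_j/\beta_j,a,b)$, since $\varphi_j$ is monotone on each side of its vertex. Second, because $\varphi_j$ is continuous, the infimum of $E=\varphi_j$ over the half-open $J_j$ equals the minimum over its closure. Both claims together give item 2.

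A few cases need separate bookkeeping. For the unbounded intervals $J_0=(\tau_1,\infty)$ and $J_{T+1}=(0,\tau_T)$, I interpret the clamp bounds as $\tau_0=+\infty$ and $\tau_{T+1}=0$.
\begin{itemize}
\item On $J_0$ all entries round to the grid point nearest $0$. If that point is $0$, then $\beta_0=0$ and $\varphi_0\equiv c$, so any $s$ works.
\item The degenerate case $\beta_j=0$ in general makes $\varphi_j$ affine in $s$ with slope $-2\alpha_j$. Here $\alpha_j=\mathbf{q}_j^\top\tilde{\mathbf{X}}^\top\mathbf{X}\mathbf{w}=(\tilde{\mathbf{X}}\mathbf{q}_j)^\top\mathbf{X}\mathbf{w}=0$, since $\tilde{\mathbf{X}}\mathbf{q}_j=0$. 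So $\varphi_j$ is the constant $c$, and any choice, including the clamp under a convention such as $\alpha/\beta:=\tau_j$, is optimal.
\item On $J_{T+1}$ the closure contains $s=0$, which is outside the feasible set $s>0$. This is why item 2 is phrased with $\inf$. If the minimizer over the closure is $0$, then $\varphi_{T+1}(0)=c$.
\end{itemize}

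\textbf{Item 3.} The intervals $J_0,\dots,J_{T+1}$ partition $(0,\infty)$, so
\[
\inf_{s>0}E(s)=\min_j\inf_{J_j}E=\min_j\varphi_j(s_j^\ast),
\]
and a finite minimum of infima is well defined.

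Attainment is the main obstacle, because $s_j^\ast$ may sit on an excluded endpoint of $J_j$. There are two kinds of such endpoints.
\begin{itemize}
\item The right endpoint $\tau_{j-1}$, which is open in $J_j=[\tau_j,\tau_{j-1})$.
\item The point $0$ for $J_{T+1}$ (and $+\infty$ for $J_0$, which never occurs for a finite clamp, since $\varphi_0$ is coercive when $\beta_0>0$).
\end{itemize}

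For an excluded right endpoint $\tau_{j-1}$, I will use that $\tau_{j-1}\in J_{j-1}$. There $E(\tau_{j-1})=\varphi_{j-1}(\tau_{j-1})\ge\varphi_{j-1}(s_{j-1}^\ast)$, so this contribution does not lower the overall min directly. The issue is whether $\varphi_j(\tau_{j-1})$ could be strictly smaller than every attained value. I would try to show continuity-type control: at $\tau_{j-1}$ only $q_i$ jumps between adjacent grid values $g_\ell$ and $g_{\ell+1}$, with $w_i/\tau_{j-1}=m_\ell$ equidistant from both. Then $s\mathbf{q}_j$ and $s\mathbf{q}_{j-1}$ differ by $s(g_{\ell+1}-g_\ell)\mathbf{e}_i$, which is not generally continuous, so $E$ can genuinely jump there. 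If that fails, I would fall back on the following.
\begin{itemize}
\item Since $s_j^\ast$ is a limit of points in $J_j$, the infimum is approached but perhaps not attained.
\item To secure attainment, I would show that the global min is achieved either at an interior clamp or at a point belonging to its own interval.
\item If that too fails, I would state attainment as attainment of $\min_j\varphi_j(s_j^\ast)$ by the realizable scale $s_j^\ast$ with the assignment $\mathbf{q}_j$. This is precisely what the algorithm returns, and it is a valid quantized weight $s_j^\ast\mathbf{q}_j$ with error $\varphi_j(s_j^\ast)$. It matches the joint problem restricted to RTN-realizable assignments on the closure, which the argument would make explicit.
\end{itemize}

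For the endpoint $0$ of $J_{T+1}$: if $\beta_{T+1}>0$ the infimum there is at least $c-\alpha_{T+1}^2/\beta_{T+1}$. The case where the clamp sits at $0$ yields the value $c$, and $c$ is also attained in $J_0$ (or matched by some larger-$s$ value), so this case does not obstruct attainment.
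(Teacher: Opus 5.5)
Your items (i) and (ii), and the identity $\inf_{s>0}E(s)=\min_j\varphi_j(s_j^\ast)$ in (iii), are correct and follow the paper's route: expand on $J_j$, pass to $\overline{J_j}$ by continuity, clamp the vertex, and minimize over the finite partition. Your remark that $\beta_j=0$ forces $\alpha_j=(\tilde{\mathbf{X}}\mathbf{q}_j)^\top\mathbf{X}\mathbf{w}=0$ covers a case the paper skips when it writes $\alpha_j/\beta_j$.

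The real issue is attainment, and you were right to be suspicious. It cannot be proved because it is false, so your second fallback (showing the global minimum sits at an interior clamp or at a point of its own interval) would fail. Take the grid $\{-2,-1,0,1\}$, $\mathbf{w}=(-1,-2)^\top$, $\tilde{\mathbf{X}}=\mathbf{I}$ and $\mathbf{X}=\mathrm{diag}(4,\tfrac12)$, so that $\mathbf{X}\mathbf{w}=(-4,-1)^\top$ and $\mathbf{H}=\mathbf{I}$. The transitions are $4>2>\tfrac43>\tfrac23$. The assignments on the five cells are $(0,0)$, $(0,-1)$, $(-1,-1)$, $(-1,-2)$, $(-2,-2)$, and
\[
E(s)=
\begin{cases}
17, & s\ge 4,\\
16+(s-1)^2, & 2\le s<4,\\
(s-4)^2+(s-1)^2, & \tfrac43\le s<2,\\
5s^2-12s+17, & \tfrac23\le s<\tfrac43,\\
8s^2-20s+17, & 0<s<\tfrac23.
\end{cases}
\]
The third piece decreases strictly to $5$ as $s\to 2^-$. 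The fourth piece is at least $9.8$, the fifth exceeds $65/9$, and $E(2)=17$. Hence $\inf_{s>0}E=5$, but $E(s)>5$ for every $s>0$. I chose negative weights so that the left-closed convention of Corollary~\ref{cor:full_vector} actually holds under ties-up rounding; for $w_i>0$, the value at $t_{i,\ell}$ agrees with the smaller-$s$ side, so that convention is itself reversed.

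The paper's proof breaks exactly where you stopped. In Case 2 it derives $\inf E=\varphi_{j^\ast}(s_{j^\ast}^\ast)\le\varphi_{j^\ast-1}(s_{j^\ast-1}^\ast)\le E(\tau_{j^\ast-1})$. It then declares all inequalities to be equalities because $E(\tau_{j^\ast-1})\ge\inf E$, which does not follow; in the example the chain reads $5\le 17\le 17$.

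The correct form of (iii) is your third fallback. The value $\min_j\varphi_j(s_j^\ast)$ is attained by the pair $(s_j^\ast,\mathbf{q}_j)$, i.e.\ by the feasible quantized weight $s_j^\ast\mathbf{q}_j$. This is a statement about the joint problem over pairs $(s,\mathbf{q}_j)$ with $s\in\overline{J_j}$, not about attainment for $E$.

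Be careful with your claim that this is ``precisely what the algorithm returns''. Algorithm~\ref{alg:scale-sweep} returns only $\hat s$. When $\hat s$ is a clamped, excluded endpoint, $q(\mathbf{w};\hat s)\neq\mathbf{q}_j$, so re-rounding gives $E(\hat s)$, which can be far larger ($17$ versus $5$ above). You must either keep $\mathbf{q}_j$ alongside $\hat s$, or take $\hat s$ strictly inside $J_j$ and settle for an $\varepsilon$-optimal scale.

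Finally, your treatment of the endpoint $0$ of $J_{T+1}$ relies on $E\equiv c$ on $J_0$. That requires $0\in\G$, which holds for the integer and E2M1 grids but should be stated as an assumption.
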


\begin{proof}
\emph{Part~(i).}
On $J_j$ the assignment $\mathbf{q}(s)\equiv\mathbf{q}_j$ is constant by
Corollary~\ref{cor:full_vector}.  Expanding the squared norm gives
\[
E(s)
=\|\mathbf{X}\mathbf{w}\|^2
  -2s\,\mathbf{q}_j^\top\mathbf{G}\,\mathbf{w}
  +s^2\,\mathbf{q}_j^\top\mathbf{H}\,\mathbf{q}_j
=\varphi_j(s).
\]
Since $\mathbf{H}\succeq 0$, we have
$\beta_j=\mathbf{q}_j^\top\mathbf{H}\,\mathbf{q}_j
        =\|\tilde{\mathbf{X}}\,\mathbf{q}_j\|^2\ge 0$,
so $\varphi_j$ is convex.

\emph{Part~(ii).}
The polynomial $\varphi_j$ coincides with $E$ on $J_j$ but is defined and
continuous on all of $\R$.  In particular, $\varphi_j$ is continuous on the
compact interval $\overline{J_j}=[\tau_j,\tau_{j-1}]$, so
\[
\inf_{s\in J_j}E(s)
=\inf_{s\in[\tau_j,\,\tau_{j-1})}\varphi_j(s)
=\min_{s\in[\tau_j,\,\tau_{j-1}]}\varphi_j(s),
\]
where the second equality holds because the infimum of a continuous function on
a half-open interval equals the minimum over its closure.

The unconstrained minimizer of the convex quadratic $\varphi_j$ is
$s^{\mathrm{unc}}=\alpha_j/\beta_j$.  Since $\varphi_j$ is non-increasing on
$(-\infty,s^{\mathrm{unc}}]$ and non-decreasing on $[s^{\mathrm{unc}},+\infty)$,
the minimum of $\varphi_j$ on $[\tau_j,\tau_{j-1}]$ is attained at the
projection of $s^{\mathrm{unc}}$ onto this interval:
\[
s_j^\ast=\mathrm{clamp}(s^{\mathrm{unc}},\,\tau_j,\,\tau_{j-1})
=\begin{cases}
\tau_j       &\text{if } s^{\mathrm{unc}}<\tau_j,\\
s^{\mathrm{unc}} &\text{if } \tau_j\le s^{\mathrm{unc}}\le\tau_{j-1},\\
\tau_{j-1}   &\text{if } s^{\mathrm{unc}}>\tau_{j-1}.
\end{cases}
\]

Note that when $s_j^\ast=\tau_{j-1}$, this point lies \emph{outside}
$J_j=[\tau_j,\tau_{j-1})$, and $E(\tau_{j-1})\neq\varphi_j(\tau_{j-1})$ in
general (since the assignment changes at $\tau_{j-1}$).  This does not affect
the result: $\varphi_j(s_j^\ast)$ equals the infimum of $E$ over $J_j$ (as
established above), regardless of the value of $E$ at $\tau_{j-1}$ itself.

An analogous argument applies to the unbounded intervals $J_0$ and $J_{T+1}$,
where clamping reduces to one-sided projection.

\emph{Part~(iii).}
The half-open intervals $\{J_j\}_{j=0}^{T+1}$ are pairwise disjoint and their
union is $(0,+\infty)$.  Therefore
\[
\inf_{s>0}E(s)
=\min_{j\in\{0,\dots,T+1\}}\inf_{s\in J_j}E(s)
=\min_j\,\varphi_j(s_j^\ast),
\]
where the second equality uses Part~(ii).

It remains to show the infimum is attained, i.e.,\ that there exists
$\hat{s}\in(0,+\infty)$ with $E(\hat{s})=\min_j\varphi_j(s_j^\ast)$.
Let $j^\ast$ be an index achieving the outer minimum.  We distinguish two
cases for $s_{j^\ast}^\ast$:

\emph{Case~1: $s_{j^\ast}^\ast \in J_{j^\ast}$.}
Then $E(s_{j^\ast}^\ast)=\varphi_{j^\ast}(s_{j^\ast}^\ast)$, so the infimum is
realized at $\hat{s}=s_{j^\ast}^\ast$.

\emph{Case~2: $s_{j^\ast}^\ast=\tau_{j^\ast-1}$.}
The point $\tau_{j^\ast-1}$ lies outside $J_{j^\ast}$, so we cannot evaluate
$E$ there using $\varphi_{j^\ast}$.  However, $\tau_{j^\ast-1}$ is the
\emph{left} endpoint of $J_{j^\ast-1}$, and therefore
$\tau_{j^\ast-1}\in J_{j^\ast-1}$.  Since $s_{j^\ast-1}^\ast$ minimizes
$\varphi_{j^\ast-1}$ on $\overline{J_{j^\ast-1}}$ and $\tau_{j^\ast-1}\in\overline{J_{j^\ast-1}}$,
\[
\varphi_{j^\ast-1}(s_{j^\ast-1}^\ast)
\;\le\;
\varphi_{j^\ast-1}(\tau_{j^\ast-1})
\;=\;
E(\tau_{j^\ast-1}).
\]
Moreover, since $j^\ast$ achieves the global minimum,
$\varphi_{j^\ast}(s_{j^\ast}^\ast)\le\varphi_{j^\ast-1}(s_{j^\ast-1}^\ast)$.
Combining:
\[
\inf_{s>0}E(s)
=\varphi_{j^\ast}(s_{j^\ast}^\ast)
\le\varphi_{j^\ast-1}(s_{j^\ast-1}^\ast)
\le E(\tau_{j^\ast-1}).
\]
Since $E(\tau_{j^\ast-1})\ge\inf_{s>0}E(s)$ trivially, all inequalities are
equalities.  In particular,
$\varphi_{j^\ast-1}(s_{j^\ast-1}^\ast)=\varphi_{j^\ast}(s_{j^\ast}^\ast)
=\inf_{s>0}E(s)$, so $j^\ast-1$ also achieves the global minimum.
If $s_{j^\ast-1}^\ast$ falls in Case~1 for the interval $J_{j^\ast-1}$,
attainment follows.  Otherwise $s_{j^\ast-1}^\ast$ is again at the open right
endpoint of $J_{j^\ast-1}$, and we repeat the argument with $j^\ast-2$.  Since
there are finitely many intervals and each step decreases the index, this
process terminates in at most $T+2$ steps at an interval whose minimizer is
interior or at a closed endpoint, establishing attainment.
\end{proof}

Proposition~\ref{prop:sweep_optimality} guarantees that the global optimum is found by enumerating all $T \le D(|\G|{-}1)$ intervals, but a naive implementation that recomputes $\alpha$, $\beta$ from scratch at each transition would cost $\mathcal{O}(D^2)$ per interval, yielding $\mathcal{O}(D^3|\G|)$ per channel. The following result shows that, because exactly one component of $\mathbf{q}$ changes at each transition (Corollary~\ref{cor:full_vector}), the quantities $\alpha$, $\beta$, and the auxiliary vector $\mathbf{h} = \mathbf{H}\mathbf{q}$ can be maintained incrementally: $\alpha$ and $\beta$ update in $\mathcal{O}(1)$, while $\mathbf{h}$ updates in $\mathcal{O}(D)$, reducing the channel-wise cost of the sweep to $\mathcal{O}(D^2|\G|)$ with a dense Hessian and to $\mathcal{O}(D|\G|)$ when $\mathbf{H}$ is diagonal.

\begin{restatable}[Correctness of incremental updates]{proposition}{prop:incremental_updates}\label{prop:incremental_updates}
Define
\begin{align*}
    \alpha &:=\alpha(\mathbf{q}) = \mathbf{q}^\top \mathbf{G}\,\mathbf{w}, \\
    \mathbf{h} &:= \mathbf{h}(\mathbf{q}) = \mathbf{H}\,\mathbf{q}, \\
    \beta  &:=\beta(\mathbf{q}) = \mathbf{q}^\top \mathbf{h}(\mathbf{q}),
\end{align*}
where $\mathbf{q} \in \G^D$ is the current grid assignment.  Suppose entry~$i$ of $\mathbf{q}$ changes from $q_i^{\mathrm{old}}$ to $q_i^{\mathrm{new}} = q_i^{\mathrm{old}} + \delta$, 
while all other entries remain unchanged. Writing $\mathbf{q}' = \mathbf{q} + \delta\,\mathbf{e}_i$ for the updated assignment (where $\mathbf{e}_i$ is the $i$-th standard basis vector), the updated quantities
\begin{align*}
    \alpha' &= \alpha + \delta \cdot (\mathbf{G}\mathbf{w})_i, \\
    \mathbf{h}' &= \mathbf{h} + \delta \cdot \mathbf{H}_{:,i} \\
    \beta'  &= \beta + 2\delta \cdot h_i + \delta^2 \cdot H_{ii},
\end{align*}
satisfy $\alpha' = \alpha(\mathbf{q}')$, $\beta' =\beta(\mathbf{q}')$, and $\mathbf{h}' = \mathbf{h}(\mathbf{q}')$.
\end{restatable}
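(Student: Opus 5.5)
The plan is direct substitution of $\mathbf{q}' = \mathbf{q} + \delta\,\mathbf{e}_i$ into the three defining expressions, followed by expansion using bilinearity. The only structural fact needed is symmetry of $\mathbf{H}$, which holds because $\mathbf{H} = \tilde{\mathbf{X}}^\top\tilde{\mathbf{X}}$ by Equation~\ref{eq:HG-def}. No symmetry of $\mathbf{G}$ is required, since $\mathbf{G}$ only enters through the fixed vector $\mathbf{G}\mathbf{w}$.

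First, $\alpha$ is linear in $\mathbf{q}$, so
\[
\alpha(\mathbf{q}') = \mathbf{q}^\top\mathbf{G}\mathbf{w} + \delta\,\mathbf{e}_i^\top\mathbf{G}\mathbf{w} = \alpha + \delta\,(\mathbf{G}\mathbf{w})_i .
\]
Second, $\mathbf{h}$ is also linear, so
\[
\mathbf{h}(\mathbf{q}') = \mathbf{H}\mathbf{q} + \delta\,\mathbf{H}\mathbf{e}_i = \mathbf{h} + \delta\,\mathbf{H}_{:,i} .
\]
By symmetry, $\mathbf{H}_{:,i}$ coincides with the row $\mathbf{H}_{i,:}$ used in Equation~\ref{eq:updates}.

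Third, I would expand the quadratic form:
\[
\beta(\mathbf{q}') = \mathbf{q}^\top\mathbf{H}\mathbf{q} + \delta\,\mathbf{e}_i^\top\mathbf{H}\mathbf{q} + \delta\,\mathbf{q}^\top\mathbf{H}\mathbf{e}_i + \delta^2\,\mathbf{e}_i^\top\mathbf{H}\mathbf{e}_i .
\]
The two cross terms are equal: both equal $(\mathbf{H}\mathbf{q})_i = h_i$, using $\mathbf{q}^\top\mathbf{H}\mathbf{e}_i = (\mathbf{H}^\top\mathbf{q})_i = (\mathbf{H}\mathbf{q})_i$. The last term is $H_{ii}$. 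Together these give $\beta' = \beta + 2\delta\,h_i + \delta^2 H_{ii}$.

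A remark is needed here. The $h_i$ in the $\beta$ update must be the \emph{old} value $\mathbf{h}(\mathbf{q})$. So the $\beta$ update has to be executed before $\mathbf{h}$ is overwritten, which is the ordering used in Algorithm~\ref{alg:scale-sweep}.

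The main (mild) obstacle is exactly this point: the cross-term identification relies on the symmetry of $\mathbf{H}$ and on evaluating $h_i$ at the pre-update assignment. For a nonsymmetric matrix the formula would instead read $\delta\,((\mathbf{H}+\mathbf{H}^\top)\mathbf{q})_i$. I would state both conditions explicitly.

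Finally, I would add the cost accounting. The $\alpha$ and $\beta$ updates are $\mathcal{O}(1)$ given the precomputed $\mathbf{G}\mathbf{w}$, and the $\mathbf{h}$ update is $\mathcal{O}(D)$. Combined with Corollary~\ref{cor:full_vector}, which gives one changed entry per transition, and $T \le D(|\G|-1)$ transitions, this yields $\mathcal{O}(D^2|\G|)$ per channel. When $\mathbf{H}$ is diagonal, $h_i = H_{ii} q_i$ need not be stored, giving $\mathcal{O}(D|\G|)$ per channel.
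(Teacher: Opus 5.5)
Your proposal is correct and follows the same route as the paper: substitute $\mathbf{q}' = \mathbf{q} + \delta\,\mathbf{e}_i$ and expand $\alpha$, $\mathbf{h}$, and $\beta$ using linearity and bilinearity. You also state explicitly two points the paper's proof uses without comment: the symmetry of $\mathbf{H} = \tilde{\mathbf{X}}^\top\tilde{\mathbf{X}}$, which merges the two cross terms into $2\delta\,h_i$ and identifies $\mathbf{H}_{:,i}$ with $\mathbf{H}_{i,:}$, and the requirement that $h_i$ be the pre-update value.
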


\begin{proof}
Since $\mathbf{q}' = \mathbf{q} + \delta\,\mathbf{e}_i$, we have:

\emph{Update for~$\alpha$:}
$\alpha(\mathbf{q'}) = (\mathbf{q}')^\top \mathbf{G}\,\mathbf{w} = (\mathbf{q} + \delta\,\mathbf{e}_i)^\top \mathbf{G}\,\mathbf{w} = \mathbf{q}^\top \mathbf{G}\,\mathbf{w} + \delta\,\mathbf{e}_i^\top \mathbf{G}\,\mathbf{w} = \alpha + \delta \cdot (\mathbf{G}\mathbf{w})_i = \alpha'$.

\emph{Update for~$\mathbf{h}$:}
$\mathbf{h}' = \mathbf{H}\,\mathbf{q}' = \mathbf{H}(\mathbf{q} + \delta\,\mathbf{e}_i) = \mathbf{H}\mathbf{q} + \delta\,\mathbf{H}\mathbf{e}_i = \mathbf{h} + \delta \cdot \mathbf{H}_{:,i}$.

\emph{Update for~$\beta$:}
\begin{align*}
\beta' &= (\mathbf{q}')^\top \mathbf{H}\,\mathbf{q}' = (\mathbf{q} + \delta\,\mathbf{e}_i)^\top \mathbf{H}\,(\mathbf{q} + \delta\,\mathbf{e}_i) \\
       &= \mathbf{q}^\top\mathbf{H}\mathbf{q} + 2\delta\,\mathbf{e}_i^\top\mathbf{H}\mathbf{q} + \delta^2\,\mathbf{e}_i^\top\mathbf{H}\mathbf{e}_i \\
       &= \beta + 2\delta\,(\mathbf{H}\mathbf{q})_i + \delta^2\,H_{ii} = \beta + 2\delta \cdot h_i + \delta^2 \cdot H_{ii}.
\end{align*}
\end{proof}

The remaining results concern the group-wise heuristics (Section~\ref{sec:group-wise}). Proposition~\ref{prop:block_diag_gap} bounds the error introduced by the block-diagonal (independent groups) approximation, while Proposition~\ref{prop:sequential_equivalence} shows that, under the self-activation assumption ($\mathbf{X}=\tilde{\mathbf{X}}$), the sequential group-wise objective coincides with the layer-wise reconstruction objective.

\begin{restatable}[Block-diagonal approximation error]{proposition}{prop}\label{prop:block_diag_gap}
Let $\mathbf{s}=(s_1,\ldots,s_K)$ and define the stacked vectors
\[
\mathbf{w}
:= \left[\mathbf{w}_{\{1\}}, \ldots, \mathbf{w}_{\{K\}}\right]^\top
\qquad
\mathbf{q}(\mathbf{s}):=q(\mathbf{W};\mathbf{s})
= \left[\mathbf{q}_{\{1\}}, \ldots, \mathbf{q}_{\{K\}}\right]^\top
\qquad
\mathbf{q}_{\{k\}}:=q(\mathbf{w}_{\{k\}};s_k)\;,
\]
and the block-wise matrices $\mathbf{G}_{ij}:=(\mathbf{X}_{\{i\}})^\top\tilde{\mathbf{X}}_{\{j\}}$ and
$\mathbf{H}_{ij}:=(\tilde{\mathbf{X}}_{\{i\}})^\top\tilde{\mathbf{X}}_{\{j\}}$.
Let $\mathbf{S}(\mathbf{s}) := \mathrm{blkdiag}(s_1\mathbf{I},\ldots,s_K\mathbf{I})$ (with $\mathbf{I}$ matching the group dimension),
and define the block matrices
\[
\mathbf{G}:=\big[\mathbf{G}_{ij}\big]_{i,j=1}^K,\qquad
\mathbf{H}:=\big[\mathbf{H}_{ij}\big]_{i,j=1}^K,
\]
together with their block-diagonal parts
\[
\mathbf{G}_{\mathrm{ind}}:=\mathrm{blkdiag}(\mathbf{G}_{11},\ldots,\mathbf{G}_{KK}),
\qquad
\mathbf{H}_{\mathrm{ind}}:=\mathrm{blkdiag}(\mathbf{H}_{11},\ldots,\mathbf{H}_{KK}),
\]
and off-diagonal parts $\mathbf{G}_{\mathrm{off}}:=\mathbf{G}-\mathbf{G}_{\mathrm{ind}}$,
$\mathbf{H}_{\mathrm{off}}:=\mathbf{H}-\mathbf{H}_{\mathrm{ind}}$.

With these definitions, the objective
\begin{equation*}
E(\mathbf{s}) = \|\mathbf{X}\,\mathbf{W} - \mathbf{\tilde{X}}\,Q(\mathbf{W};\mathbf{s})\|^{2}
\end{equation*}
can be expressed as
\begin{equation*}
E(\mathbf{s})
=
\mathrm{const.}
\;-\;
2\,\mathbf{w}^\top \mathbf{G}\,\mathbf{S}(\mathbf{s})\mathbf{q}(\mathbf{s})
\;+\;
\big(\mathbf{S}(\mathbf{s})\mathbf{q}(\mathbf{s})\big)^\top
\mathbf{H}\,
\big(\mathbf{S}(\mathbf{s})\mathbf{q}(\mathbf{s})\big),
\end{equation*}
and its block-diagonal approximation
\begin{equation*}
E_{\mathrm{ind}}(\mathbf{s})
=
\mathrm{const.}
\;-\;
2\,\mathbf{w}^\top \mathbf{G}_{\mathrm{ind}}\,\mathbf{S}(\mathbf{s})\mathbf{q}(\mathbf{s})
\;+\;
\big(\mathbf{S}(\mathbf{s})\mathbf{q}(\mathbf{s})\big)^\top
\mathbf{H}_{\mathrm{ind}}\,
\big(\mathbf{S}(\mathbf{s})\mathbf{q}(\mathbf{s})\big).
\end{equation*}

Then, defining 
\begin{equation*}
\Delta(\mathbf{s}) := 2\,\|\mathbf{w}\|_2\,\|\mathbf{G}_{\mathrm{off}}\|_2\,\|\mathbf{S}(\mathbf{s})\mathbf{q}(\mathbf{s})\|_2
\;+\;
\|\mathbf{H}_{\mathrm{off}}\|_2\,\|\mathbf{S}(\mathbf{s})\mathbf{q}(\mathbf{s})\|_2^2,
\end{equation*}
and the minimizers $\mathbf{s_{\mathrm{ind}}^\ast} = \argmin_{\mathbf{s}}E_{\mathrm{ind}}(\mathbf{s})$ and $\mathbf{s^\ast} = \argmin_{\mathbf{s}}E(\mathbf{s})$, it is verified that
\begin{equation*}
|E(\mathbf{s^\ast_{\mathrm{ind}}}) - E(\mathbf{s^\ast})| \leq \Delta(\mathbf{s^\ast_{\mathrm{ind}}})+\Delta(\mathbf{s^\ast}).
\end{equation*}

\end{restatable}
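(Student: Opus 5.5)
The argument has three steps: rewrite the objective in block form, bound the pointwise gap $|E-E_{\mathrm{ind}}|$ by $\Delta$, and then run a standard "approximate minimizer" comparison.

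\emph{Step 1 (quadratic expansion).} Writing $\tilde{\mathbf{w}}(\mathbf{s}):=\mathbf{S}(\mathbf{s})\mathbf{q}(\mathbf{s})$ for the stacked quantized channel, we expand the squared norm:
\[
E(\mathbf{s})=\|\mathbf{X}\mathbf{w}\|^2-2\,\mathbf{w}^\top\mathbf{X}^\top\tilde{\mathbf{X}}\,\tilde{\mathbf{w}}(\mathbf{s})+\tilde{\mathbf{w}}(\mathbf{s})^\top\tilde{\mathbf{X}}^\top\tilde{\mathbf{X}}\,\tilde{\mathbf{w}}(\mathbf{s}),
\]
as in Remark~\ref{remark:alpha_beta_equivalence}. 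Partitioning columns by groups, $\mathbf{X}^\top\tilde{\mathbf{X}}$ has blocks $\mathbf{G}_{ij}$ (with the statement's orientation) and $\tilde{\mathbf{X}}^\top\tilde{\mathbf{X}}$ has blocks $\mathbf{H}_{ij}$. This gives the first displayed form.

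For $E_{\mathrm{ind}}$ of Equation~\ref{eq:independent-group}, we expand each group term. Summed over groups, the constant terms $\|\mathbf{X}_{\{k\}}\mathbf{w}_{\{k\}}\|^2$ give a (possibly different) $\mathbf{s}$-independent constant. The cross and quadratic terms then give exactly the block-diagonal forms with $\mathbf{G}_{\mathrm{ind}},\mathbf{H}_{\mathrm{ind}}$.

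I will note that the two constants differ. The comparison is therefore made up to the additive constant, which the statement absorbs into ``const.''. Equivalently, I interpret $E_{\mathrm{ind}}$ with the same constant as $E$; this does not change $\mathbf{s}^\ast_{\mathrm{ind}}$.

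\emph{Step 2 (pointwise bound).} Subtracting the two expressions gives, for every $\mathbf{s}$,
\[
E(\mathbf{s})-E_{\mathrm{ind}}(\mathbf{s})=-2\,\mathbf{w}^\top\mathbf{G}_{\mathrm{off}}\tilde{\mathbf{w}}(\mathbf{s})+\tilde{\mathbf{w}}(\mathbf{s})^\top\mathbf{H}_{\mathrm{off}}\tilde{\mathbf{w}}(\mathbf{s}).
\]
By Cauchy–Schwarz and the definition of the spectral norm, this is bounded in absolute value by $\Delta(\mathbf{s})$.

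\emph{Step 3 (comparison of minimizers).} Since $\mathbf{s}^\ast$ minimizes $E$,
\[
0\le E(\mathbf{s}^\ast_{\mathrm{ind}})-E(\mathbf{s}^\ast).
\]
We then split
\[
E(\mathbf{s}^\ast_{\mathrm{ind}})-E(\mathbf{s}^\ast)=\bigl[E(\mathbf{s}^\ast_{\mathrm{ind}})-E_{\mathrm{ind}}(\mathbf{s}^\ast_{\mathrm{ind}})\bigr]+\bigl[E_{\mathrm{ind}}(\mathbf{s}^\ast_{\mathrm{ind}})-E_{\mathrm{ind}}(\mathbf{s}^\ast)\bigr]+\bigl[E_{\mathrm{ind}}(\mathbf{s}^\ast)-E(\mathbf{s}^\ast)\bigr].
\]
The middle bracket is $\le 0$ because $\mathbf{s}^\ast_{\mathrm{ind}}$ minimizes $E_{\mathrm{ind}}$. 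The outer brackets are bounded by $\Delta(\mathbf{s}^\ast_{\mathrm{ind}})$ and $\Delta(\mathbf{s}^\ast)$ by Step 2. Together with nonnegativity, this yields the claimed bound on the absolute value.

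\emph{Main obstacle.} The main difficulty is not analytic but one of well-posedness. We need the minimizers to exist; this follows from Proposition~\ref{prop:sweep_optimality} applied per group and attainment arguments, or else we can replace minimizers by near-minimizers and pass to the limit. We also need to reconcile the constant terms and the transpose orientation of $\mathbf{G}_{ij}$, so that Step 1 really produces $\mathbf{G}$ and $\mathbf{G}_{\mathrm{ind}}$ with the same constant. I would address the constants first, since the clean cancellation in Step 2 depends on it.
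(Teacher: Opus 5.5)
Your proposal is correct and follows the paper's proof essentially verbatim: first the pointwise Cauchy--Schwarz bound $|E(\mathbf{s})-E_{\mathrm{ind}}(\mathbf{s})|\le\Delta(\mathbf{s})$, then the three-term comparison that uses the optimality of $\mathbf{s}^\ast_{\mathrm{ind}}$ for $E_{\mathrm{ind}}$. Two points the paper's write-up leaves implicit are handled precisely in yours: you explicitly use $E(\mathbf{s}^\ast_{\mathrm{ind}})\ge E(\mathbf{s}^\ast)$ to justify the absolute value, and you reconcile the additive constants (which in any case cancel between the first and third brackets of your telescoping sum).
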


\begin{proof}
Write $\mathbf{z}(\mathbf{s}):=\mathbf{S}(\mathbf{s})\mathbf{q}(\mathbf{s})$.
Then
\begin{equation*}
E(\mathbf{s})-E_{\mathrm{ind}}(\mathbf{s})
=
-2\,\mathbf{w}^\top \mathbf{G}_{\mathrm{off}}\,\mathbf{z}(\mathbf{s})
+
\mathbf{z}(\mathbf{s})^\top \mathbf{H}_{\mathrm{off}}\,\mathbf{z}(\mathbf{s}).
\end{equation*}
Taking absolute values and applying Cauchy--Schwarz and submultiplicativity
of the operator norm gives
\begin{equation}\label{eq:group_optim_gap}
\big|E(\mathbf{s})-E_{\mathrm{ind}}(\mathbf{s})\big|
\le
2\,\|\mathbf{w}\|_2\,\|\mathbf{G}_{\mathrm{off}}\|_2\,\|\mathbf{z}(\mathbf{s})\|_2
+
\|\mathbf{H}_{\mathrm{off}}\|_2\,\|\mathbf{z}(\mathbf{s})\|_2^2 =: \Delta(\mathbf{s}).
\end{equation}
Therefore, the following chain of inequalities is verified:
\begin{align*}
E(\mathbf{s}^\ast_{\mathrm{ind}}) &\le E_{\mathrm{ind}}(\mathbf{s}^\ast_{\mathrm{ind}}) + \Delta(\mathbf{s}^\ast) && \text{from Equation~\ref{eq:group_optim_gap}}\\
&\le E_{\mathrm{ind}}(\mathbf{s}^\ast) + \Delta(\mathbf{s}^\ast_{\mathrm{ind}}) && \text{optimality of $\mathbf{s}^\ast_{\mathrm{ind}}$ for $E_{\mathrm{ind}}$}\\
&\le E(\mathbf{s}^\ast) + \Delta(\mathbf{s}^\ast) + \Delta(\mathbf{s}_{\mathrm{ind}}^\ast) && \text{from Equation~\ref{eq:group_optim_gap}.}
\end{align*}
Therefore
\begin{equation*}
|E(\mathbf{s^\ast_{\mathrm{ind}}}) - E(\mathbf{s^\ast})| \leq \Delta(\mathbf{s^\ast_{\mathrm{ind}}})+\Delta(\mathbf{s^\ast}),
\end{equation*}
concluding the proof.
\end{proof}

\begin{proposition}[Equivalence of the sequential objective]\label{prop:sequential_equivalence}
Let $\mathbf{w} = [\mathbf{w}_{\{1\}}, \ldots, \mathbf{w}_{\{K\}}]^\top$ be partitioned into
$K$ groups. Define the partially quantized vector
\begin{equation*}
\tilde{\mathbf{w}}(s_{k}) := [\tilde{\mathbf{w}}_{\{1\}}, \ldots,
\tilde{\mathbf{w}}_{\{k-1\}}, s_{k}\, q(\mathbf{w}_{\{k\}}, s_{k}), \mathbf{w}_{\{k+1\}}, \ldots, \mathbf{w}_{\{K\}}]^\top,
\end{equation*}
where $\tilde{\mathbf{w}}_{\{i\}} = s_i^\ast q(\mathbf{w}_{\{i\}}, s_i^\ast)$, $i \in [k-1]$, with the scales $s_1^\ast, \ldots, s_{k-1}^\ast$ fixed for the first $k-1$ groups. Consider the objectives
\begin{align}
\overline{E}_{\mathrm{seq}}(s_k)
&:= \bigl\| \mathbf{X}\,\mathbf{w}
     - \tilde{\mathbf{X}}\,\tilde{\mathbf{w}}(s_{k}) \bigr\|^2,
\label{eq:mix} \\
E_{\mathrm{seq}}(s_k)
&:= \bigl\| \mathbf{X}_{\{\leq k\}}\,\mathbf{w}_{\{\leq k\}}
     - \tilde{\mathbf{X}}_{\{<k\}}\,\tilde{\mathbf{w}}_{\{<k\}}
     - s_k\,\tilde{\mathbf{X}}_{\{k\}}\,q(\mathbf{w}_{\{k\}},s_k) \bigr\|^2.
\label{eq:seq}
\end{align}
In the self-activation case ($\mathbf{X} = \tilde{\mathbf{X}}$), the two are equivalent, and share the same minimizer: $\argmin_{s_k} \overline{E}_{\mathrm{seq}}(s_k) = \argmin_{s_k} E_{\mathrm{seq}}(s_k)$.
\end{proposition}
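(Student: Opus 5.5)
The plan is to show that, with $\mathbf{X}=\tilde{\mathbf{X}}$, the two objectives differ only by an additive constant independent of $s_k$. Equal minimizers then follow immediately.

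Write the residual of \eqref{eq:mix} blockwise using the column partition $\mathbf{X}=[\mathbf{X}_{\{1\}},\ldots,\mathbf{X}_{\{K\}}]$. Since $\mathbf{X}\mathbf{v}=\sum_i \mathbf{X}_{\{i\}}\mathbf{v}_{\{i\}}$, the groups $i>k$ appear in both $\mathbf{X}\mathbf{w}$ and $\tilde{\mathbf{X}}\tilde{\mathbf{w}}(s_k)$ with the unmodified weights $\mathbf{w}_{\{i\}}$. Under $\mathbf{X}=\tilde{\mathbf{X}}$ these contributions $\mathbf{X}_{\{i\}}\mathbf{w}_{\{i\}}-\tilde{\mathbf{X}}_{\{i\}}\mathbf{w}_{\{i\}}$ cancel exactly. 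The remaining residual is
\[
\mathbf{X}_{\{\le k\}}\mathbf{w}_{\{\le k\}}-\tilde{\mathbf{X}}_{\{<k\}}\tilde{\mathbf{w}}_{\{<k\}}-s_k\tilde{\mathbf{X}}_{\{k\}}q(\mathbf{w}_{\{k\}},s_k),
\]
which is the residual of \eqref{eq:seq}. So in the self-activation case the two residual vectors coincide for every $s_k$, and hence $\overline{E}_{\mathrm{seq}}(s_k)=E_{\mathrm{seq}}(s_k)$ identically, which is stronger than differing by a constant. The argmin sets are therefore equal, and in particular the minimizers computed by the sweep of Proposition~\ref{prop:sweep_optimality} agree.

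There are two routine points to check. First, concatenation is along the column dimension, so $\mathbf{X}_{\{\le k\}}\mathbf{w}_{\{\le k\}}=\sum_{i\le k}\mathbf{X}_{\{i\}}\mathbf{w}_{\{i\}}$. Second, the fixed quantities $\tilde{\mathbf{w}}_{\{i\}}$ for $i<k$ do not depend on $s_k$.

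The only real obstacle is conceptual: to make clear where the hypothesis is used. Without $\mathbf{X}=\tilde{\mathbf{X}}$, the tail term $(\mathbf{X}_{\{>k\}}-\tilde{\mathbf{X}}_{\{>k\}})\mathbf{w}_{\{>k\}}$ survives in $\overline{E}_{\mathrm{seq}}$. Its cross term with $s_k\tilde{\mathbf{X}}_{\{k\}}q$ depends on $s_k$, so the minimizers can differ. I would note this to justify why the statement is restricted to the self-activation case.
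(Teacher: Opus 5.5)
Your proposal is correct and follows essentially the same argument as the paper: you split the residual blockwise into the part over groups $\le k$, which is exactly the $E_{\mathrm{seq}}$ residual, and the tail $(\mathbf{X}_{\{>k\}}-\tilde{\mathbf{X}}_{\{>k\}})\mathbf{w}_{\{>k\}}$, which vanishes when $\mathbf{X}=\tilde{\mathbf{X}}$, so the two objectives coincide identically. Your closing remark on the $s_k$-dependent cross term in the cross-activation case matches the paper's expansion $\|A+B\|^2=\|A\|^2+2\langle A,B\rangle+\|B\|^2$.
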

\begin{proof}
Write $\mathbf{q}_{\{k\}} := q(\mathbf{w}_{\{k\}}, s_k)$ for brevity.
Decomposing column-wise:
\[
\mathbf{X}\,\mathbf{w}
= \mathbf{X}_{\{\leq k\}}\,\mathbf{w}_{\{\leq k\}}
  + \mathbf{X}_{\{>k\}}\,\mathbf{w}_{\{>k\}},
\]
\[
\tilde{\mathbf{X}}\,\tilde{\mathbf{w}}(s_{k})
= \tilde{\mathbf{X}}_{\{<k\}}\,\tilde{\mathbf{w}}_{\{<k\}}
  + s_k\,\tilde{\mathbf{X}}_{\{k\}}\,\mathbf{q}_{\{k\}}
  + \tilde{\mathbf{X}}_{\{>k\}}\,\mathbf{w}_{\{>k\}}.
\]
Subtracting and defining
\begin{align*}
A(s_k) &:= \mathbf{X}_{\{\leq k\}}\,\mathbf{w}_{\{\leq k\}}
           - \tilde{\mathbf{X}}_{\{<k\}}\,\tilde{\mathbf{w}}_{\{<k\}}
           - s_k\,\tilde{\mathbf{X}}_{\{k\}}\,\mathbf{q}_{\{k\}}, \\
B      &:= \left(\mathbf{X}_{\{>k\}} - \tilde{\mathbf{X}}_{\{>k\}}\right)\,\mathbf{w}_{\{>k\}},
\end{align*}
we obtain
\[
\mathbf{X}\,\mathbf{w} - \tilde{\mathbf{X}}\,\tilde{\mathbf{w}}(s_{k})
= A(s_k) + B.
\]
Note that $E_{\mathrm{seq}}(s_k) = \|A(s_k)\|^2$.  Expanding $\overline{E}_{\mathrm{seq}}$:
\begin{equation}\label{eq:mixed_expansion}
\overline{E}_{\mathrm{seq}}(s_k)
= \|A(s_k) + B\|^2
= \underbrace{\|A(s_k)\|^2}_{= E_{\mathrm{seq}}(s_k)}
  + 2\langle A(s_k),\, B \rangle
  + \underbrace{\|B\|^2}_{\text{const.\ in } s_k}.
\end{equation}

Then, if $\mathbf{X} = \tilde{\mathbf{X}}$, then $\mathbf{X}_{\{>k\}} - \tilde{\mathbf{X}}_{\{>k\}} = \mathbf{0}$, so $B = \mathbf{0}$. Hence,
Equation~\ref{eq:mixed_expansion} reduces to
$\overline{E}_{\mathrm{seq}}(s_k) = E_{\mathrm{seq}}(s_k) + 2\langle A(s_k),\, \mathbf{0} \rangle+ \|\mathbf{0}\|^2
 = E_{\mathrm{seq}}(s_k)$, so the two objectives coincide, thus sharing the same minimizer.
\end{proof}

\section{Complementary Results}\label{appendix:full_results}

This appendix presents the complete quantization results summarized in the main text. The results are organized by quantization granularity: channel-wise without rotations, group-wise, and channel-wise with rotations.

\paragraph{Compute Setup.}\label{check_list_setup}Baseline PTQ algorithms were run using the Brevitas quantization library~\citep{pappalardo2025xilinx}, with the exception of Beacon, whose implementation was not publicly available and was therefore implemented by us. Experiments were run in an internal cluster equipped with AMD Instinct\texttrademark{} MI210 (64 GB) and MI300 (192 GB) GPUs; each model was quantized and evaluated on a single GPU. Running the experiments required an estimated total of $\sim1100$ GPU hours. Ideation, development and debugging were run in small examples so their impact in the total compute time was negligible.

\paragraph{Runtime analysis.} Table~\ref{tab:runtime} reports the wall-clock runtimes of GPTQ with~\ScaleAlgorithmShort~on two Llama 3 models (1B and 3B) at 2-bit and 3-bit weight quantization. GPTQ $\odot$~\ScaleAlgorithmShort~introduces negligible overhead (between $0.99\times$ and $1.13\times$ the baseline GPTQ runtime), while GPTQ $+$~\ScaleAlgorithmShort~incurs a $1.25$–$1.37\times$ overhead due to the additional forward passes over the calibration data.

\begin{table}[htbp]
\centering
\caption{Wall-clock absolute (in minutes) and relative (compared to GPTQ) runtimes of GPTQ with~\ScaleAlgorithmShort~scale optimization. $+/\odot$ denote decoupled/interleaved optimization (see Section~\ref{sec:inte}).\vspace{5pt}}\label{tab:runtime}
\begin{tabular}{ll c cc cc}
\toprule
 & & GPTQ & \multicolumn{2}{c}{GPTQ $+$ PiSO} & \multicolumn{2}{c}{GPTQ $\odot$ PiSO} \\
\cmidrule(lr){3-3} \cmidrule(lr){4-5} \cmidrule(lr){6-7}
Model & Bits & min & min & $\times$ & min & $\times$ \\
\midrule
Llama-3.2-1B & 2 & 35.3 & 48.3 & $1.37$ & 36.4 & $1.03$ \\
Llama-3.2-1B & 3 & 35.9 & 44.7 & $1.25$ & 35.5 & $0.99$ \\
Llama-3.2-3B & 2 & 77.0 & 104.8 & $1.36$ & 85.9 & $1.11$ \\
Llama-3.2-3B & 3 & 78.1 & 105.1 & $1.34$ & 88.3 & $1.13$ \\
\bottomrule
\end{tabular}
\end{table}

\paragraph{Scale Distribution.}\label{ap:scale-dist}\label{sec:scale_dist}
Figure~\ref{fig:scale-dist} shows the optimal channel-wise scale distributions, normalized by the absmax scale, for the data‑free ($\mathbf{X}=\tilde{\mathbf{X}}=\mathbf{I}$) and cross‑activation ($\mathbf{X}\neq\tilde{\mathbf{X}}$) objectives in third down‑projection layer of Llama‑3.2‑1B. One can clearly observe how the distribution of the scales varies depending on the objective with respect to which they are optimized. Notably, the distribution of optimal scales under the cross‑activation objective exhibits a larger spread, with a non‑negligible fraction exceeding the absmax value (ratios greater than~1). This behavior highlights one of the limitations of grid‑search–based methods such as CDQuant~\citep{nair2024cdquant}, which restrict the search space to the interval $(0, s_{\mathrm{absmax}}]$.

\begin{figure}[t]
    \centering
    \begin{subfigure}[b]{0.48\textwidth}
        \centering
        \includegraphics[width=\textwidth]{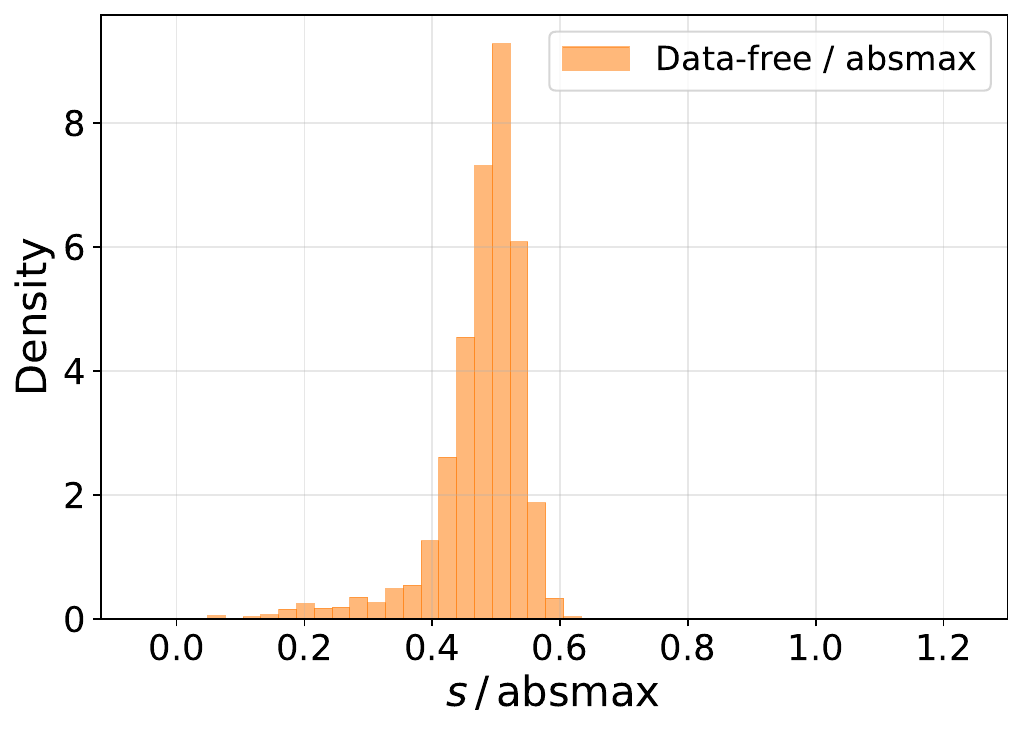}
        \caption*{$s^{*}=\argmin_{s}\|\mathbf{w}-s\,q(\mathbf{w};s)\|^{2}$}\label{fig:scale-dist-datafree}
    \end{subfigure}
    \hfill
    \begin{subfigure}[b]{0.48\textwidth}
        \centering
        \includegraphics[width=\textwidth]{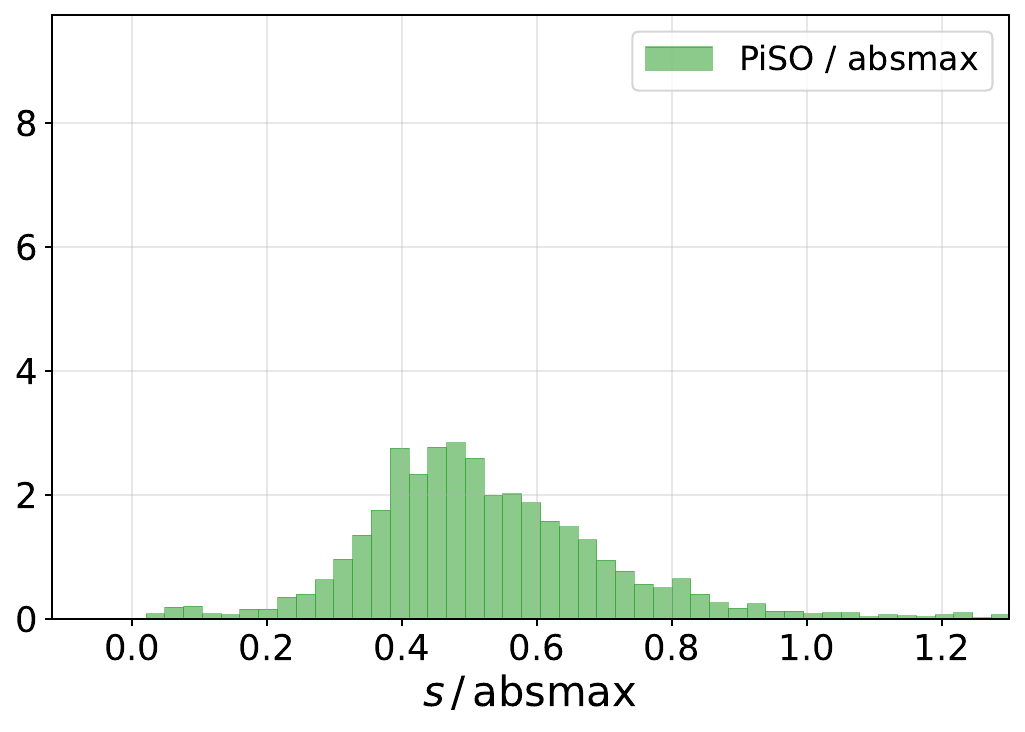}
        \caption*{$s^{*}=\argmin_{s}\|\mathbf{X}\mathbf{w}-s\,\tilde{\mathbf{X}}\,q(\mathbf{w};s)\|^{2}$}\label{fig:scale-dist-piso}
    \end{subfigure}
    \caption{Optimal scale distributions for the data‑free (left) and cross-activation (right) objectives, normalized by the absmax scales, for the 2048 output channels of the third down‑projection layer of Llama‑3.2‑1B.}\label{fig:scale-dist}
\end{figure}

\paragraph{Channel-wise Quantization.}\label{sec:app-channel}
Tables~\ref{tab:channel_2-bit_expanded}--\ref{tab:channel_qwen_3-bit4-bit_expanded} report channel-wise weight-only quantization results at 2, 3, and 4~bits on Llama-3 and Qwen-2.5 Instruct models using the standard integer grid, unpacking the scale optimization objectives and integration strategies with error correction methods that are aggregated in Tables~\ref{tab:channel_2-bit} and~\ref{tab:channel_llama_3-bit4-bit}. Table~\ref{tab:channel_datafree_llama} demonstrates the catastrophic quality degradation of data-free scale selection strategies in channel-wise low-bit quantization. Table~\ref{tab:channel_e2m1_expanded} evaluates~\ScaleAlgorithmShort{} on the non-uniform \texttt{E2M1} (4-bit floating-point) grid, demonstrating that the algorithm applies to arbitrary quantization grids beyond integer formats.

\paragraph{Channel-wise Quantization with Rotations.}\label{sec:app-rotation}

Tables~\ref{tab:rotations_ppl} and~\ref{tab:rotations_acc} report channel-wise quantization results when rotations~\citep{ashkboos2024quarot} are applied prior to quantization on Llama~3 and Qwen~2.5 Instruct models.

\begin{table}[!htbp]
\centering
\caption{Channel-wise quantization results on Llama-3 and Qwen-2.5 models with rotation-based equalization~\citep{ashkboos2024quarot}, reporting WikiText-2 perplexity ($\downarrow$). For PiSO with GPTQ, $+/\odot$ denote decoupled/interleaved optimization (see Section~\ref{sec:inte}).\vspace{5pt}}\label{tab:rotations_ppl}
\resizebox{\textwidth}{!}{
\begin{tabular}{l cccc cccc cccc}
\toprule
 & \multicolumn{4}{c}{\textbf{2-bit}} & \multicolumn{4}{c}{\textbf{3-bit}} & \multicolumn{4}{c}{\textbf{4-bit}} \\
\cmidrule(lr){2-5} \cmidrule(lr){6-9} \cmidrule(lr){10-13}
 & \multicolumn{2}{c}{Llama} & \multicolumn{2}{c}{Qwen} & \multicolumn{2}{c}{Llama} & \multicolumn{2}{c}{Qwen} & \multicolumn{2}{c}{Llama} & \multicolumn{2}{c}{Qwen} \\
\cmidrule(lr){2-3} \cmidrule(lr){4-5} \cmidrule(lr){6-7} \cmidrule(lr){8-9} \cmidrule(lr){10-11} \cmidrule(lr){12-13}
 & 1B & 3B & 1.5B & 3B & 1B & 3B & 1.5B & 3B & 1B & 3B & 1.5B & 3B \\
\midrule
BF16 & 8.97 & 7.14 & 8.89 & 7.88 & 8.97 & 7.14 & 8.89 & 7.88 & 8.97 & 7.14 & 8.89 & 7.88 \\
\midrule
RTN + absmax & 3e5 & 3e5 & 3e5 & 5e5 & 5e3 & 1e3 & 87.0 & 93.4 & 15.8 & 11.7 & 12.4 & 10.1 \\
RTN + grid search & 8e5 & 7e5 & 8e7 & 2e9 & 70.2 & 55.8 & 31.1 & 15.7 & 12.2 & 8.92 & 10.8 & 9.27 \\
RTN + data-free & 7e5 & 7e5 & 5e7 & 2e9 & 64.1 & 73.9 & 27.4 & 15.9 & 12.0 & 8.81 & 10.8 & 9.25 \\
RTN + \ScaleAlgorithmShort & 801 & 528 & 1e3 & 238 & 15.2 & 9.91 & 13.4 & 10.5 & 10.2 & 7.77 & 9.64 & 8.54 \\
\midrule
GPTQ + absmax & 1e3 & 182 & 464 & 213 & 15.7 & 10.5 & 12.1 & 10.0 & 10.0 & 7.72 & 9.46 & 8.30 \\
GPTQ + data-free & \textbf{50.4} & 44.1 & 61.1 & 20.8 & \textbf{11.8} & \textbf{8.70} & \textbf{10.5} & \textbf{8.92} & 9.61 & 7.49 & \textbf{9.23} & 8.14 \\
GPTQ + \ScaleAlgorithmShort & 57.4 & 43.5 & 33.3 & \textbf{17.7} & 11.9 & 8.77 & \textbf{10.5} & 8.93 & \textbf{9.60} & 7.51 & \textbf{9.23} & 8.13 \\
GPTQ $\odot$ \ScaleAlgorithmShort & 52.7 & \textbf{34.5} & \textbf{28.7} & 19.4 & 12.0 & 8.78 & 10.6 & \textbf{8.92} & 9.66 & \textbf{7.48} & 9.29 & \textbf{8.12} \\
\bottomrule
\end{tabular}
}
\end{table}

\begin{table}[!htbp]
\centering
\caption{Channel-wise quantization results on Llama-3 and Qwen-2.5 models with rotation-based equalization~\citep{ashkboos2024quarot}, reporting average zero-shot accuracy ($\uparrow$). For PiSO with GPTQ, $+/\odot$ denote decoupled/interleaved optimization (see Section~\ref{sec:inte}).\vspace{5pt}}\label{tab:rotations_acc}
\resizebox{\textwidth}{!}{
\begin{tabular}{l cccc cccc cccc}
\toprule
 & \multicolumn{4}{c}{\textbf{2-bit}} & \multicolumn{4}{c}{\textbf{3-bit}} & \multicolumn{4}{c}{\textbf{4-bit}} \\
\cmidrule(lr){2-5} \cmidrule(lr){6-9} \cmidrule(lr){10-13}
 & \multicolumn{2}{c}{Llama} & \multicolumn{2}{c}{Qwen} & \multicolumn{2}{c}{Llama} & \multicolumn{2}{c}{Qwen} & \multicolumn{2}{c}{Llama} & \multicolumn{2}{c}{Qwen} \\
\cmidrule(lr){2-3} \cmidrule(lr){4-5} \cmidrule(lr){6-7} \cmidrule(lr){8-9} \cmidrule(lr){10-11} \cmidrule(lr){12-13}
 & 1B & 3B & 1.5B & 3B & 1B & 3B & 1.5B & 3B & 1B & 3B & 1.5B & 3B \\
\midrule
BF16 & 52.6 & 62.0 & 54.9 & 56.5 & 52.6 & 62.0 & 54.9 & 56.5 & 52.6 & 62.0 & 54.9 & 56.5 \\
\midrule
RTN + absmax & 31.8 & 32.0 & 31.5 & 32.4 & 33.2 & 31.6 & 36.0 & 35.0 & 45.4 & 51.1 & 50.3 & 51.8 \\
RTN + grid search & 33.0 & 32.3 & 32.0 & 32.7 & 36.9 & 35.6 & 38.0 & 44.2 & 48.1 & 58.3 & 52.8 & 51.2 \\
RTN + data-free & 33.1 & 32.3 & 30.5 & 32.6 & 36.9 & 34.7 & 40.0 & 44.0 & 49.3 & 58.8 & 54.0 & 53.2 \\
RTN + \ScaleAlgorithmShort & 32.1 & 31.8 & 32.6 & 31.5 & 45.7 & 55.3 & 45.8 & 49.6 & 50.1 & 59.9 & \textbf{55.1} & 55.1 \\
\midrule
GPTQ + absmax & 32.5 & 32.3 & 32.1 & 32.1 & 42.6 & 50.7 & 44.5 & 45.9 & 50.0 & 60.6 & 52.8 & 54.2 \\
GPTQ + data-free & \textbf{35.3} & 35.1 & 32.5 & 34.9 & \textbf{48.4} & 55.7 & 49.3 & \textbf{53.8} & 50.5 & \textbf{62.1} & 54.4 & \textbf{56.7} \\
GPTQ + \ScaleAlgorithmShort & 35.1 & 34.7 & 32.3 & 36.8 & 47.0 & \textbf{58.2} & \textbf{51.6} & 53.6 & \textbf{51.2} & 61.6 & \textbf{54.8} & 55.9 \\
GPTQ $\odot$ \ScaleAlgorithmShort & 34.4 & \textbf{36.2} & \textbf{34.3} & \textbf{40.6} & 47.8 & 57.2 & 50.3 & 51.4 & 50.6 & 61.5 & 54.5 & 54.7 \\
\bottomrule
\end{tabular}
}
\end{table}

\begin{table}[!htbp]
\centering
\caption{Channel-wise 2-bit quantization results on Llama-3 and Qwen-2.5 models, reporting WikiText-2 perplexity ($\downarrow$) and average zero-shot accuracy ($\uparrow$). The ``Obj.'' column denotes the scale optimization objective (see Equation~\ref{eq:layer-obj}): $\mathbf{X}\text{-}\mathbf{X}$ (float activations), $\tilde{\mathbf{X}}\text{-}\tilde{\mathbf{X}}$ (self-activation), or $\mathbf{X}\text{-}\tilde{\mathbf{X}}$ (cross-activation); blank where not applicable (absmax, Beacon). For PiSO, $+/\odot$ denote decoupled/interleaved optimization (see Section~\ref{sec:inte}).\vspace{5pt}}\label{tab:channel_2-bit_expanded}
\resizebox{\textwidth}{!}{
\begin{tabular}{ll ccc ccc ccc ccc}
\toprule
 & & \multicolumn{6}{c}{\textbf{2-bit Llama}} & \multicolumn{6}{c}{\textbf{2-bit Qwen}} \\
\cmidrule(lr){3-8} \cmidrule(lr){9-14}
 & & \multicolumn{3}{c}{WikiText2 ($\downarrow$)} & \multicolumn{3}{c}{0-shot ($\uparrow$)} & \multicolumn{3}{c}{WikiText2 ($\downarrow$)} & \multicolumn{3}{c}{0-shot ($\uparrow$)} \\
\cmidrule(lr){3-5} \cmidrule(lr){6-8} \cmidrule(lr){9-11} \cmidrule(lr){12-14}
 & Obj. & 1B & 3B & 8B & 1B & 3B & 8B & 1.5B & 3B & 7B & 1.5B & 3B & 7B \\
\midrule
BF16 & & 8.97 & 7.14 & 5.89 & 52.6 & 62.0 & 68.6 & 8.89 & 7.88 & 6.92 & 54.9 & 56.5 & 57.7 \\
\midrule
RTN + absmax & & 2e5 & 4e5 & 4e5 & 33.0 & 32.5 & 33.4 & 7e6 & 6e6 & 1e5 & 31.8 & 32.9 & 32.3 \\
RTN + \ScaleAlgorithmShort & $\mathbf{X}\text{-}\mathbf{X}$ & 1e5 & 4e4 & 2e5 & 32.0 & 31.6 & 32.5 & 1e4 & 2e4 & 9e3 & 31.8 & 32.2 & 32.3 \\
RTN + \ScaleAlgorithmShort & $\tilde{\mathbf{X}}\text{-}\tilde{\mathbf{X}}$ & 2e4 & 2e4 & 8e4 & 31.9 & 32.2 & 32.7 & 3e3 & 3e3 & 7e3 & 32.4 & 31.6 & 31.8 \\
RTN + \ScaleAlgorithmShort & $\mathbf{X}\text{-}\tilde{\mathbf{X}}$ & 3e3 & 3e3 & 3e3 & 32.3 & 31.6 & 32.0 & 2e3 & 2e3 & 4e3 & 31.8 & 32.7 & 32.2 \\
\midrule
Beacon & & 52.8 & 72.4 & 25.5 & 33.8 & 33.3 & 36.8 & 1e3 & 33.2 & 21.9 & 32.5 & 32.4 & 33.9 \\
\midrule
GPTQ + absmax & & 3e3 & 415 & 331 & 31.6 & 31.9 & 31.5 & 675 & 596 & 72.7 & 33.0 & 32.8 & 32.1 \\
GPTQ + \ScaleAlgorithmShort & $\mathbf{X}\text{-}\mathbf{X}$ & 116 & 69.7 & 59.1 & 32.6 & 33.1 & 33.3 & 45.3 & 32.7 & \textbf{15.6} & 33.3 & 32.9 & 37.1 \\
GPTQ + \ScaleAlgorithmShort & $\tilde{\mathbf{X}}\text{-}\tilde{\mathbf{X}}$ & 92.5 & 47.5 & 32.5 & 33.1 & 33.2 & 34.7 & \textbf{39.2} & 38.5 & 15.9 & \textbf{34.4} & 33.6 & 34.9 \\
GPTQ $\odot$ \ScaleAlgorithmShort & $\tilde{\mathbf{X}}\text{-}\tilde{\mathbf{X}}$ & 54.7 & 29.5 & 22.3 & 35.5 & 36.3 & 40.2 & 44.7 & 30.6 & 17.0 & 33.7 & \textbf{34.4} & 34.9 \\
\midrule
Qronos + absmax & & 197 & 68.9 & 49.5 & 31.6 & 32.7 & 33.5 & 111 & 35.8 & 2e4 & 32.1 & 32.5 & 32.1 \\
Qronos + \ScaleAlgorithmShort & $\mathbf{X}\text{-}\tilde{\mathbf{X}}$ & 216 & 84.8 & 76.0 & 32.1 & 33.3 & 33.0 & 196 & 27.4 & 24.5 & 32.3 & 32.9 & 36.1 \\
Qronos $\odot$ \ScaleAlgorithmShort & $\mathbf{X}\text{-}\tilde{\mathbf{X}}$ & \textbf{27.9} & \textbf{18.9} & \textbf{16.6} & \textbf{37.5} & \textbf{41.4} & \textbf{44.0} & 131 & \textbf{17.2} & 16.3 & 32.7 & \textbf{34.8} & \textbf{38.1} \\
\bottomrule
\end{tabular}
}
\end{table}

\begin{table}[!htbp]
\centering
\caption{Channel-wise 3-bit and 4-bit quantization results on Llama-3 models, reporting WikiText-2 perplexity ($\downarrow$) and average zero-shot accuracy ($\uparrow$). The ``Obj.'' column denotes the scale optimization objective (see Equation~\ref{eq:layer-obj}): $\mathbf{X}\text{-}\mathbf{X}$ (float activations), $\tilde{\mathbf{X}}\text{-}\tilde{\mathbf{X}}$ (self-activation), or $\mathbf{X}\text{-}\tilde{\mathbf{X}}$ (cross-activation); blank where not applicable (absmax, Beacon). For PiSO, $+/\odot$ denote decoupled/interleaved optimization (see Section~\ref{sec:inte}).\vspace{5pt}}\label{tab:channel_llama_3-bit4-bit_expanded}
\resizebox{\textwidth}{!}{
\begin{tabular}{ll ccc ccc ccc ccc}
\toprule
 & & \multicolumn{6}{c}{\textbf{3-bit}} & \multicolumn{6}{c}{\textbf{4-bit}} \\
\cmidrule(lr){3-8} \cmidrule(lr){9-14}
 & & \multicolumn{3}{c}{WikiText2 ($\downarrow$)} & \multicolumn{3}{c}{0-shot ($\uparrow$)} & \multicolumn{3}{c}{WikiText2 ($\downarrow$)} & \multicolumn{3}{c}{0-shot ($\uparrow$)} \\
\cmidrule(lr){3-5} \cmidrule(lr){6-8} \cmidrule(lr){9-11} \cmidrule(lr){12-14}
 & Obj. & 1B & 3B & 8B & 1B & 3B & 8B & 1B & 3B & 8B & 1B & 3B & 8B \\
\midrule
BF16 & & 8.97 & 7.14 & 5.89 & 52.6 & 62.0 & 68.6 & 8.97 & 7.14 & 5.89 & 52.6 & 62.0 & 68.6 \\
\midrule
RTN + absmax & & 2e3 & 436 & 1e3 & 32.8 & 35.7 & 38.0 & 22.3 & 9.61 & 7.83 & 46.3 & 55.6 & 64.7 \\
RTN + \ScaleAlgorithmShort & $\mathbf{X}\text{-}\mathbf{X}$ & 27.7 & 11.4 & 9.69 & 42.5 & 50.4 & 59.3 & 11.3 & 8.03 & 6.72 & 50.7 & 59.9 & 67.2 \\
RTN + \ScaleAlgorithmShort & $\tilde{\mathbf{X}}\text{-}\tilde{\mathbf{X}}$ & 29.7 & 11.5 & 9.96 & 41.3 & 49.2 & 59.1 & 11.5 & 8.04 & 6.74 & 49.2 & 60.2 & 66.7 \\
RTN + \ScaleAlgorithmShort & $\mathbf{X}\text{-}\tilde{\mathbf{X}}$ & 19.5 & 10.6 & 8.99 & 44.2 & 52.1 & 59.9 & 11.0 & 7.92 & 6.64 & 50.0 & 60.4 & 67.3 \\
\midrule
Beacon & & 19.1 & 33.7 & 9.35 & 42.4 & 40.1 & 53.5 & 13.6 & 25.1 & 7.53 & 46.1 & 46.3 & 59.7 \\
\midrule
GPTQ + absmax & & 19.6 & 11.3 & 9.23 & 39.6 & 49.8 & 55.1 & 10.4 & 7.79 & 6.49 & 49.3 & 59.4 & 65.6 \\
GPTQ + \ScaleAlgorithmShort & $\mathbf{X}\text{-}\mathbf{X}$ & 12.9 & 9.19 & 7.42 & 47.5 & 54.6 & \textbf{63.0} & 9.94 & 7.63 & 6.32 & \textbf{52.0} & 60.2 & \textbf{68.3} \\
GPTQ + \ScaleAlgorithmShort & $\tilde{\mathbf{X}}\text{-}\tilde{\mathbf{X}}$ & 13.0 & 9.20 & 7.51 & 46.1 & 53.5 & 62.4 & 9.94 & 7.64 & 6.32 & 50.9 & 59.7 & 67.5 \\
GPTQ $\odot$ \ScaleAlgorithmShort & $\tilde{\mathbf{X}}\text{-}\tilde{\mathbf{X}}$ & 12.8 & 9.08 & 7.41 & 47.0 & \textbf{55.9} & 62.6 & 9.91 & 7.65 & 6.32 & 51.3 & 60.0 & 68.1 \\
\midrule
Qronos + absmax & & 15.0 & 9.83 & 8.27 & 40.6 & 51.4 & 57.9 & 10.0 & 7.64 & 6.36 & 50.2 & 60.1 & 65.2 \\
Qronos + \ScaleAlgorithmShort & $\mathbf{X}\text{-}\tilde{\mathbf{X}}$ & 11.6 & 8.42 & 7.01 & \textbf{48.6} & 55.5 & 61.4 & 9.61 & \textbf{7.44} & 6.19 & 51.5 & \textbf{61.2} & 67.4 \\
Qronos $\odot$ \ScaleAlgorithmShort & $\mathbf{X}\text{-}\tilde{\mathbf{X}}$ & \textbf{11.6} & \textbf{8.41} & \textbf{6.98} & 48.0 & 55.6 & 62.7 & \textbf{9.59} & \textbf{7.44} & \textbf{6.18} & 51.1 & 60.5 & 67.0 \\
\bottomrule
\end{tabular}
}
\end{table}

\begin{table}[!htbp]
\centering
\caption{Channel-wise 3-bit and 4-bit quantization results on Qwen-2.5 models reporting WikiText-2 perplexity ($\downarrow$) and average zero-shot accuracy ($\uparrow$). The ``Obj.'' column denotes the scale optimization objective (see Equation~\ref{eq:layer-obj}): $\mathbf{X}\text{-}\mathbf{X}$ (float activations), $\tilde{\mathbf{X}}\text{-}\tilde{\mathbf{X}}$ (self-activation), or $\mathbf{X}\text{-}\tilde{\mathbf{X}}$ (cross-activation); blank where not applicable (absmax, Beacon). For PiSO, $+/\odot$ denote decoupled/interleaved optimization (see Section~\ref{sec:inte}).\vspace{5pt}}\label{tab:channel_qwen_3-bit4-bit_expanded}
\resizebox{\textwidth}{!}{
\begin{tabular}{ll ccc ccc ccc ccc}
\toprule
 & & \multicolumn{6}{c}{\textbf{3-bit}} & \multicolumn{6}{c}{\textbf{4-bit}} \\
\cmidrule(lr){3-8} \cmidrule(lr){9-14}
 & & \multicolumn{3}{c}{WikiText2 ($\downarrow$)} & \multicolumn{3}{c}{0-shot ($\uparrow$)} & \multicolumn{3}{c}{WikiText2 ($\downarrow$)} & \multicolumn{3}{c}{0-shot ($\uparrow$)} \\
\cmidrule(lr){3-5} \cmidrule(lr){6-8} \cmidrule(lr){9-11} \cmidrule(lr){12-14}
 & Obj. & 1.5B & 3B & 7B & 1.5B & 3B & 7B & 1.5B & 3B & 7B & 1.5B & 3B & 7B \\
\midrule
BF16 & & 8.89 & 7.88 & 6.92 & 54.9 & 56.5 & 57.7 & 8.89 & 7.88 & 6.92 & 54.9 & 56.5 & 57.7 \\
\midrule
RTN + absmax & & 861 & 5e5 & 9e7 & 33.7 & 32.1 & 32.7 & 13.9 & 1e4 & 9.13 & 49.5 & 31.6 & 53.5 \\
RTN + \ScaleAlgorithmShort & $\mathbf{X}\text{-}\mathbf{X}$ & 16.9 & 11.9 & 9.88 & 46.8 & 48.1 & 54.6 & 10.0 & 8.60 & 7.55 & 54.7 & 54.6 & 58.7 \\
RTN + \ScaleAlgorithmShort & $\tilde{\mathbf{X}}\text{-}\tilde{\mathbf{X}}$ & 16.8 & 12.0 & 10.0 & 44.6 & 46.8 & 53.2 & 10.0 & 8.60 & 7.56 & 54.5 & 55.2 & 58.0 \\
RTN + \ScaleAlgorithmShort & $\mathbf{X}\text{-}\tilde{\mathbf{X}}$ & 14.9 & 11.5 & 9.50 & 46.9 & 50.8 & 51.8 & 10.00 & 8.67 & 7.51 & 54.0 & 55.1 & 57.6 \\
\midrule
Beacon & & 380 & 12.3 & 12.6 & 31.6 & 39.9 & 40.4 & 142 & 10.2 & 10.8 & 33.6 & 44.8 & 43.2 \\
\midrule
GPTQ + absmax & & 14.1 & 11.5 & 9.23 & 41.1 & 42.0 & 44.6 & 9.76 & 8.54 & 7.34 & 53.2 & 54.7 & 55.0 \\
GPTQ + \ScaleAlgorithmShort & $\mathbf{X}\text{-}\mathbf{X}$ & 11.5 & 9.43 & 7.90 & 48.2 & \textbf{55.0} & 56.5 & 9.50 & 8.26 & 7.17 & \textbf{55.2} & 55.7 & 59.0 \\
GPTQ + \ScaleAlgorithmShort & $\tilde{\mathbf{X}}\text{-}\tilde{\mathbf{X}}$ & 11.6 & 9.45 & 7.91 & 47.0 & 53.3 & 55.7 & 9.49 & 8.25 & 7.18 & 53.5 & \textbf{57.8} & \textbf{59.5} \\
GPTQ $\odot$ \ScaleAlgorithmShort & $\tilde{\mathbf{X}}\text{-}\tilde{\mathbf{X}}$ & \textbf{11.4} & 9.41 & 7.82 & \textbf{50.6} & 52.0 & 57.0 & 9.47 & 8.26 & 7.15 & 54.7 & 56.9 & 58.5 \\
\midrule
Qronos + absmax & & 16.7 & 10.4 & 8.86 & 35.3 & 41.9 & 52.2 & 9.66 & 8.42 & 7.30 & 51.4 & 54.2 & 55.6 \\
Qronos + \ScaleAlgorithmShort & $\mathbf{X}\text{-}\tilde{\mathbf{X}}$ & 11.8 & \textbf{9.10} & 7.73 & 43.8 & 53.5 & \textbf{58.3} & 9.36 & \textbf{8.19} & 7.12 & 52.0 & 56.8 & 58.1 \\
Qronos $\odot$ \ScaleAlgorithmShort & $\mathbf{X}\text{-}\tilde{\mathbf{X}}$ & 12.0 & \textbf{9.10} & \textbf{7.73} & 39.9 & 53.8 & 57.5 & \textbf{9.34} & 8.20 & \textbf{7.11} & 52.5 & 54.7 & 58.1 \\
\bottomrule
\end{tabular}
}
\end{table}

\begin{table}[!htbp]
\centering
\caption{Channel-wise quantization results on Llama-3 models, reporting WikiText-2 perplexity ($\downarrow$) and average zero-shot accuracy ($\uparrow$). None of these scale selection methods requires calibration data: the grid search of the Brevitas library \citep{pappalardo2025xilinx} approximates the minimizer of the weight reconstruction objective by evaluating a uniform grid of candidates; while data-free optimizes this objective analytically.\vspace{5pt}}\label{tab:channel_datafree_llama}
\resizebox{\textwidth}{!}{
\begin{tabular}{l c c c c c c c c c c c c}
\toprule
 & \multicolumn{4}{c}{\textbf{2-bit}} & \multicolumn{4}{c}{\textbf{3-bit}} & \multicolumn{4}{c}{\textbf{4-bit}} \\
\cmidrule(lr){2-5} \cmidrule(lr){6-9} \cmidrule(lr){10-13}
 & \multicolumn{2}{c}{WikiText2 ($\downarrow$)} & \multicolumn{2}{c}{0-shot ($\uparrow$)} & \multicolumn{2}{c}{WikiText2 ($\downarrow$)} & \multicolumn{2}{c}{0-shot ($\uparrow$)} & \multicolumn{2}{c}{WikiText2 ($\downarrow$)} & \multicolumn{2}{c}{0-shot ($\uparrow$)} \\
\cmidrule(lr){2-3} \cmidrule(lr){4-5} \cmidrule(lr){6-7} \cmidrule(lr){8-9} \cmidrule(lr){10-11} \cmidrule(lr){12-13}
 & 1B & 3B & 1B & 3B & 1B & 3B & 1B & 3B & 1B & 3B & 1B & 3B \\
\midrule
BF16 & 8.97 & 7.14 & 52.6 & 62.0 & 8.97 & 7.14 & 52.6 & 62.0 & 8.97 & 7.14 & 52.6 & 62.0 \\
\midrule
RTN + grid search & 2e6 & 8e5 & \textbf{32.9} & 32.3 & 4e3 & 1e5 & 32.0 & 31.7 & \textbf{108} & \textbf{4e3} & 35.0 & \textbf{32.3} \\
RTN + data-free & 2e6 & 8e5 & 32.8 & 32.3 & 9e3 & 6e4 & 31.9 & \textbf{32.4} & 136 & 8e3 & 33.8 & 31.8 \\
\midrule
GPTQ + grid search & \textbf{4e3} & \textbf{7e3} & 32.3 & \textbf{32.5} & 452 & \textbf{9e3} & 32.8 & 32.4 & 258 & 8e3 & \textbf{35.1} & 31.8 \\
GPTQ + data-free & 5e3 & 8e3 & 31.8 & 32.4 & \textbf{390} & 1e4 & \textbf{33.3} & 32.1 & 300 & 9e3 & 34.6 & 31.8 \\
\bottomrule
\end{tabular}
}
\end{table}

\begin{table}[!htbp]
\centering
\caption{Channel-wise 4-bit  quantization results using the \texttt{E2M1} (\texttt{FP4}) floating-point format on Llama-3 models, reporting WikiText-2 perplexity ($\downarrow$) and average zero-shot accuracy ($\uparrow$). The ``Obj.''~column specifies the scale-optimization objective (see Equation~\ref{eq:layer-obj}): $\mathbf{X}\text{-}\mathbf{X}$ (float activations), $\tilde{\mathbf{X}}\text{-}\tilde{\mathbf{X}}$ (self-activation), or $\mathbf{X}\text{-}\tilde{\mathbf{X}}$ (cross-activation); blank where not applicable (absmax).\vspace{5pt}}\label{tab:channel_e2m1_expanded}
\begin{tabular}{ll cc cc}
\toprule
 & & \multicolumn{2}{c}{WikiText2 ($\downarrow$)} & \multicolumn{2}{c}{0-shot ($\uparrow$)} \\
\cmidrule(lr){3-4} \cmidrule(lr){5-6}
 & Obj. & 1B & 3B & 1B & 3B \\
\midrule
BF16 & & 8.97 & 7.14 & 52.6 & 62.0 \\
\midrule
RTN + absmax &  & 12.6 & 8.39 & 49.2 & 59.5 \\
RTN + \ScaleAlgorithmShort & $\mathbf{X}\text{-}\mathbf{X}$ & 10.4 & 7.75 & 50.7 & 60.5 \\
RTN + \ScaleAlgorithmShort & $\tilde{\mathbf{X}}\text{-}\tilde{\mathbf{X}}$ & 10.4 & 7.75 & 50.5 & 61.0 \\
RTN + \ScaleAlgorithmShort & $\mathbf{X}\text{-}\tilde{\mathbf{X}}$ & 10.3 & 7.69 & 50.6 & 59.8 \\
\midrule
GPTQ + absmax &  & 9.78 & 7.52 & 50.8 & 60.1 \\
GPTQ + \ScaleAlgorithmShort & $\mathbf{X}\text{-}\mathbf{X}$ & 9.67 & 7.52 & \textbf{51.1} & \textbf{62.1} \\
GPTQ + \ScaleAlgorithmShort & $\tilde{\mathbf{X}}\text{-}\tilde{\mathbf{X}}$ & \textbf{9.67} & \textbf{7.52} & 50.5 & 60.8 \\
GPTQ $\odot$ \ScaleAlgorithmShort & $\tilde{\mathbf{X}}\text{-}\tilde{\mathbf{X}}$ & 9.69 & 7.52 & 50.5 & 60.7 \\
\bottomrule
\end{tabular}
\end{table}

\paragraph{Group-wise Quantization.}\label{sec:app-group}

Tables~\ref{tab:group_expanded_3B_ppl} and~\ref{tab:group_expanded_3B_acc} report group-wise quantization results, with group sizes 16 and 32, on Llama-3.2-3B and Qwen-2.5-3B, covering all combinations of independent/sequential heuristics and decoupled/interleaved integration strategies (Sections~\ref{sec:group-wise} and~\ref{sec:inte}) that are aggregated in the main text (Table~\ref{tab:per_group_3B_ppl}). Table~\ref{tab:group_combined_big_g16} shows the group-wise quantization results for Llama-3.1-8B and Qwen-2.5-7B for group size 16.

\begin{table}[!htbp]
\centering
\caption{\textbf{WikiText-2 perplexity ($\downarrow$)} results for group-wise quantization on Llama-3.2-3B and Qwen-2.5-3B with group sizes 16 and 32 (G16/G32). The ``Obj.'' column denotes the scale optimization objective (see Equation~\ref{eq:layer-obj}): $\mathbf{X}\text{-}\mathbf{X}$ (float activations), $\tilde{\mathbf{X}}\text{-}\tilde{\mathbf{X}}$ (self-activation), or $\mathbf{X}\text{-}\tilde{\mathbf{X}}$ (cross-activation); blank where not applicable (absmax). For PiSO with GPTQ, $+/\odot$ denote decoupled/interleaved, and $\ast/\dagger$ layer-wise/group-wise interleaving strategies (see Section~\ref{sec:inte}). In the ``Var.'' column, $\mathrm{I}/\mathrm{S}$ denotes independent/sequential group-wise heuristics (see Section~\ref{sec:group-wise}).\vspace{5pt}}\label{tab:group_expanded_3B_ppl}
\resizebox{\textwidth}{!}{
\begin{tabular}{lll cccc cccc cccc}
\toprule
 & & & \multicolumn{4}{c}{\textbf{2-bit}} & \multicolumn{4}{c}{\textbf{3-bit}} & \multicolumn{4}{c}{\textbf{4-bit}} \\
\cmidrule(lr){4-7} \cmidrule(lr){8-11} \cmidrule(lr){12-15}
 & Var. & Obj. & \multicolumn{2}{c}{Llama 3B} & \multicolumn{2}{c}{Qwen 3B} & \multicolumn{2}{c}{Llama 3B} & \multicolumn{2}{c}{Qwen 3B} & \multicolumn{2}{c}{Llama 3B} & \multicolumn{2}{c}{Qwen 3B} \\
\cmidrule(lr){4-5} \cmidrule(lr){6-7} \cmidrule(lr){8-9} \cmidrule(lr){10-11} \cmidrule(lr){12-13} \cmidrule(lr){14-15}
 & & & G16 & G32 & G16 & G32 & G16 & G32 & G16 & G32 & G16 & G32 & G16 & G32 \\
\midrule
BF16 & & & 7.14 & 7.14 & 7.88 & 7.88 & 7.14 & 7.14 & 7.88 & 7.88 & 7.14 & 7.14 & 7.88 & 7.88 \\
\midrule
RTN + absmax &  &  & 8e5 & 8e5 & 3e8 & 2e8 & 4e3 & 11.7 & 21.0 & 102 & 7.69 & 7.72 & 8.78 & 8.91 \\
RTN + data-free & & $\mathbf{I}\text{-}\mathbf{I}$ & 8e5 & 8e5 & 3e8 & 3e8 & 10.1 & 13.0 & 38.0 & 76.1 & 7.49 & 7.65 & 9.30 & 8.69 \\
RTN + \ScaleAlgorithmShort & I & $\tilde{\mathbf{X}}\text{-}\tilde{\mathbf{X}}$ & 6e4 & 5e4 & 388 & 758 & 8.38 & 8.95 & 8.98 & 9.37 & 7.35 & 7.46 & 8.08 & 8.18 \\
RTN + \ScaleAlgorithmShort & S & $\tilde{\mathbf{X}}\text{-}\tilde{\mathbf{X}}$ & 21.5 & 32.6 & 18.0 & 29.4 & 8.01 & 8.44 & 8.63 & 9.01 & 7.28 & 7.37 & 8.00 & 8.05 \\
RTN + \ScaleAlgorithmShort & I & $\mathbf{X}\text{-}\tilde{\mathbf{X}}$ & 2e4 & 2e4 & 612 & 774 & 8.21 & 8.62 & 8.85 & 9.13 & 7.33 & 7.42 & 8.09 & 8.18 \\
RTN + \ScaleAlgorithmShort & S & $\mathbf{X}\text{-}\tilde{\mathbf{X}}$ & 19.4 & 28.7 & 17.1 & 25.9 & \textbf{7.86} & \textbf{8.17} & 8.58 & 8.92 & \textbf{7.26} & 7.33 & 8.00 & 8.04 \\
\midrule
GPTQ + absmax &  &  & 30.3 & 26.4 & 34.8 & 30.4 & 8.41 & 8.29 & 8.86 & 8.91 & 7.35 & 7.38 & 8.06 & 8.03 \\
GPTQ + data-free & & $\mathbf{I}\text{-}\mathbf{I}$ & 22.1 & 25.8 & 17.3 & 26.4 & 8.05 & 8.31 & 8.54 & 8.84 & 7.29 & 7.33 & 8.00 & \textbf{8.01} \\
GPTQ $\odot$ data-free$^\dagger$ & & $\mathbf{I}\text{-}\mathbf{I}$ & 20.7 & 28.5 & 14.5 & 17.0 & 7.92 & 8.22 & 8.50 & 8.72 & 7.27 & \textbf{7.31} & 8.00 & 8.03 \\
GPTQ + \ScaleAlgorithmShort & I & $\tilde{\mathbf{X}}\text{-}\tilde{\mathbf{X}}$ & 28.8 & 32.0 & 16.6 & 19.6 & 8.13 & 8.53 & 8.55 & 8.79 & 7.30 & 7.37 & 8.02 & 8.03 \\
GPTQ + \ScaleAlgorithmShort & S & $\tilde{\mathbf{X}}\text{-}\tilde{\mathbf{X}}$ & 21.5 & 25.2 & 15.5 & 18.1 & 8.12 & 8.50 & 8.56 & 8.82 & 7.30 & 7.37 & 8.01 & 8.04 \\
GPTQ $\odot$ \ScaleAlgorithmShort$^\ast$ & I & $\tilde{\mathbf{X}}\text{-}\tilde{\mathbf{X}}$ & 23.9 & 29.9 & 16.0 & 19.2 & 8.11 & 8.50 & 8.52 & 8.72 & 7.30 & 7.37 & 8.01 & 8.03 \\
GPTQ $\odot$ \ScaleAlgorithmShort$^\ast$ & S & $\tilde{\mathbf{X}}\text{-}\tilde{\mathbf{X}}$ & 20.7 & 24.0 & 15.3 & 18.1 & 8.12 & 8.44 & 8.53 & 8.74 & 7.30 & 7.36 & 8.01 & 8.04 \\
GPTQ $\odot$ \ScaleAlgorithmShort$^\dagger$ & I & $\tilde{\mathbf{X}}\text{-}\tilde{\mathbf{X}}$ & 17.4 & \textbf{21.3} & 14.3 & 16.9 & 7.92 & 8.30 & \textbf{8.48} & \textbf{8.68} & 7.27 & 7.34 & \textbf{7.98} & 8.03 \\
GPTQ $\odot$ \ScaleAlgorithmShort$^\dagger$ & S & $\tilde{\mathbf{X}}\text{-}\tilde{\mathbf{X}}$ & \textbf{17.0} & 22.3 & \textbf{14.3} & \textbf{16.6} & 7.92 & 8.29 & 8.50 & 8.70 & 7.27 & 7.34 & 7.98 & 8.02 \\
\bottomrule
\end{tabular}
}
\end{table}

\begin{table}[!htbp]
\centering
\caption{\textbf{Average zero-shot accuracy ($\uparrow$)} results for group-wise quantization on Llama-3.2-3B and Qwen-2.5-3B with group sizes 16 and 32 (G16/G32). The ``Obj.'' column denotes the scale optimization objective (see Equation~\ref{eq:layer-obj}): $\mathbf{X}\text{-}\mathbf{X}$ (float activations), $\tilde{\mathbf{X}}\text{-}\tilde{\mathbf{X}}$ (self-activation), or $\mathbf{X}\text{-}\tilde{\mathbf{X}}$ (cross-activation); blank where not applicable (absmax). For PiSO with GPTQ, $+/\odot$ denote decoupled/interleaved, and $\ast/\dagger$ layer-wise/group-wise interleaving strategies (see Section~\ref{sec:inte}). In the ``Var.'' column, $\mathrm{I}/\mathrm{S}$ denotes independent/sequential group-wise heuristics (see Section~\ref{sec:group-wise}).\vspace{5pt}}\label{tab:group_expanded_3B_acc}
\resizebox{\textwidth}{!}{
\begin{tabular}{lll cccc cccc cccc}
\toprule
 & & & \multicolumn{4}{c}{\textbf{2-bit}} & \multicolumn{4}{c}{\textbf{3-bit}} & \multicolumn{4}{c}{\textbf{4-bit}} \\
\cmidrule(lr){4-7} \cmidrule(lr){8-11} \cmidrule(lr){12-15}
 & Var. & Obj. & \multicolumn{2}{c}{Llama 3B} & \multicolumn{2}{c}{Qwen 3B} & \multicolumn{2}{c}{Llama 3B} & \multicolumn{2}{c}{Qwen 3B} & \multicolumn{2}{c}{Llama 3B} & \multicolumn{2}{c}{Qwen 3B} \\
\cmidrule(lr){4-5} \cmidrule(lr){6-7} \cmidrule(lr){8-9} \cmidrule(lr){10-11} \cmidrule(lr){12-13} \cmidrule(lr){14-15}
 & & & G16 & G32 & G16 & G32 & G16 & G32 & G16 & G32 & G16 & G32 & G16 & G32 \\
\midrule
BF16 & & & 62.0 & 62.0 & 56.5 & 56.5 & 62.0 & 62.0 & 56.5 & 56.5 & 62.0 & 62.0 & 56.5 & 56.5 \\
\midrule
RTN + absmax &  &  & 32.3 & 32.2 & 31.1 & 31.5 & 50.7 & 52.0 & 46.2 & 31.6 & 60.3 & 59.3 & 56.2 & 55.0 \\
RTN + data-free & & $\mathbf{I}\text{-}\mathbf{I}$ & 32.2 & 32.3 & 31.2 & 31.3 & 54.7 & 51.5 & 37.7 & 38.5 & 61.2 & 61.0 & 54.4 & 54.3 \\
RTN + \ScaleAlgorithmShort & I & $\tilde{\mathbf{X}}\text{-}\tilde{\mathbf{X}}$ & 32.5 & 32.1 & 32.6 & 32.7 & 58.2 & 57.2 & 54.3 & 54.4 & 61.2 & 60.8 & 55.5 & 55.2 \\
RTN + \ScaleAlgorithmShort & S & $\tilde{\mathbf{X}}\text{-}\tilde{\mathbf{X}}$ & 44.3 & 40.5 & 38.5 & 37.1 & \textbf{59.8} & 57.6 & 54.8 & 52.8 & 61.9 & 61.6 & 55.6 & 56.2 \\
RTN + \ScaleAlgorithmShort & I & $\mathbf{X}\text{-}\tilde{\mathbf{X}}$ & 33.1 & 33.4 & 31.3 & 32.7 & 58.9 & 58.4 & 54.6 & 54.8 & 60.7 & 60.8 & 55.8 & 54.9 \\
RTN + \ScaleAlgorithmShort & S & $\mathbf{X}\text{-}\tilde{\mathbf{X}}$ & 42.3 & 40.0 & 39.8 & 38.7 & 59.1 & 57.1 & \textbf{56.3} & 53.3 & 61.3 & 61.3 & \textbf{56.5} & 56.1 \\
\midrule
GPTQ + absmax &  &  & 39.5 & 37.9 & 34.1 & 33.1 & 57.8 & 57.6 & 53.6 & 51.7 & 60.2 & 60.5 & 56.2 & \textbf{57.1} \\
GPTQ + data-free & & $\mathbf{I}\text{-}\mathbf{I}$ & 40.3 & 37.9 & 40.9 & 34.8 & 59.5 & 57.5 & 55.4 & 54.1 & 61.8 & \textbf{62.3} & 55.6 & 54.9 \\
GPTQ $\odot$ data-free$^\dagger$ &  & $\mathbf{I}\text{-}\mathbf{I}$ & 44.0 & 40.3 & 33.8 & \textbf{39.2} & 59.7 & 58.1 & 55.5 & 53.0 & 60.8 & 60.5 & 54.9 & 55.5 \\
GPTQ + \ScaleAlgorithmShort & I & $\tilde{\mathbf{X}}\text{-}\tilde{\mathbf{X}}$ & 37.6 & 37.6 & 41.1 & 39.2 & 57.8 & 57.7 & 54.5 & 54.6 & 61.7 & 61.1 & 56.5 & 56.6 \\
GPTQ + \ScaleAlgorithmShort & S & $\tilde{\mathbf{X}}\text{-}\tilde{\mathbf{X}}$ & 40.3 & 38.4 & 39.6 & 34.5 & 59.4 & 57.2 & 52.9 & 51.3 & 61.7 & 61.2 & 55.6 & 55.9 \\
GPTQ $\odot$ \ScaleAlgorithmShort$^\ast$ & I & $\tilde{\mathbf{X}}\text{-}\tilde{\mathbf{X}}$ & 39.7 & 37.5 & 38.2 & 38.3 & 58.1 & 57.7 & 55.3 & 55.5 & 61.6 & 61.1 & 55.9 & 55.6 \\
GPTQ $\odot$ \ScaleAlgorithmShort$^\ast$ & S & $\tilde{\mathbf{X}}\text{-}\tilde{\mathbf{X}}$ & 40.9 & 38.5 & 40.0 & 34.9 & 57.9 & 55.9 & 54.1 & \textbf{56.8} & 61.1 & 61.0 & 56.1 & 56.0 \\
GPTQ $\odot$ \ScaleAlgorithmShort$^\dagger$ & I & $\tilde{\mathbf{X}}\text{-}\tilde{\mathbf{X}}$ & \textbf{45.6} & 40.4 & \textbf{42.2} & 37.2 & 58.8 & 58.2 & 54.2 & 55.9 & 62.2 & 61.0 & 56.2 & 56.9 \\
GPTQ $\odot$ \ScaleAlgorithmShort$^\dagger$ & S & $\tilde{\mathbf{X}}\text{-}\tilde{\mathbf{X}}$ & 44.0 & \textbf{41.1} & 39.0 & 37.5 & 59.5 & \textbf{58.8} & 54.5 & 55.4 & \textbf{62.2} & 61.2 & 56.1 & 56.4 \\
\bottomrule
\end{tabular}
}
\end{table}

\begin{table}[t]
\centering
\caption{
Group-wise quantization results on Llama 3.1-8B and Qwen-2.5-7B Instruct with group size 16, reporting WikiText-2 perplexity ($\downarrow$) and average zero-shot accuracy ($\uparrow$). The ``Obj.'' column denotes the scale optimization objective (see Equation~\ref{eq:layer-obj}): $\mathbf{X}\text{-}\mathbf{X}$ (float activations), $\tilde{\mathbf{X}}\text{-}\tilde{\mathbf{X}}$ (self-activation), or $\mathbf{X}\text{-}\tilde{\mathbf{X}}$ (cross-activation); blank where not applicable (absmax). For PiSO with GPTQ, $+/\odot$ denote decoupled/interleaved, and $\ast/\dagger$ layer-wise/group-wise interleaving strategies (see Section~\ref{sec:inte}). In the ``Var.'' column, $\mathrm{I}/\mathrm{S}$ denotes independent/sequential group-wise heuristics (see Section~\ref{sec:group-wise}).\vspace{5pt}}\label{tab:group_combined_big_g16}
\resizebox{\textwidth}{!}{
\begin{tabular}{lll cc cc cc cc cc cc}
\toprule
 & & & \multicolumn{4}{c}{\textbf{2-bit}} & \multicolumn{4}{c}{\textbf{3-bit}} & \multicolumn{4}{c}{\textbf{4-bit}} \\
\cmidrule(lr){4-7} \cmidrule(lr){8-11} \cmidrule(lr){12-15}
 & & & \multicolumn{2}{c}{WikiText2 ($\downarrow$)} & \multicolumn{2}{c}{0-shot ($\uparrow$)} & \multicolumn{2}{c}{WikiText2 ($\downarrow$)} & \multicolumn{2}{c}{0-shot ($\uparrow$)} & \multicolumn{2}{c}{WikiText2 ($\downarrow$)} & \multicolumn{2}{c}{0-shot ($\uparrow$)} \\
\cmidrule(lr){4-5} \cmidrule(lr){6-7} \cmidrule(lr){8-9} \cmidrule(lr){10-11} \cmidrule(lr){12-13} \cmidrule(lr){14-15}
 & Var. & Obj. & 8B & 7B & 8B & 7B & 8B & 7B & 8B & 7B & 8B & 7B & 8B & 7B \\
\midrule
BF16 & & & 5.89 & 6.92 & 68.6 & 57.7 & 5.89 & 6.92 & 68.6 & 57.7 & 5.89 & 6.92 & 68.6 & 57.7 \\
\midrule
RTN + absmax &  &  & 2e6 & 6e6 & 31.9 & 32.3 & 2e6 & 4e6 & 33.4 & 32.9 & 6.38 & 7.34 & 67.0 & 58.3 \\
RTN + data-free &  & $\mathbf{I}\text{-}\mathbf{I}$ & 2e6 & 6e6 & 32.1 & 32.3 & 7.67 & 8.14 & 62.7 & 54.9 & 6.17 & 7.13 & 67.9 & 57.8 \\
RTN + \ScaleAlgorithmShort & I & $\tilde{\mathbf{X}}\text{-}\tilde{\mathbf{X}}$ & 1e6 & 980 & 32.0 & 31.8 & 6.84 & 7.44 & 65.3 & 56.8 & 6.05 & 7.00 & 68.1 & 58.0 \\
RTN + \ScaleAlgorithmShort & S & $\tilde{\mathbf{X}}\text{-}\tilde{\mathbf{X}}$ & 20.4 & 12.4 & 35.1 & 50.1 & 6.60 & 7.26 & 65.3 & 56.3 & 6.02 & 6.98 & 68.4 & 57.3 \\
RTN + \ScaleAlgorithmShort & I & $\mathbf{X}\text{-}\tilde{\mathbf{X}}$ & 3e5 & 308 & 31.9 & 32.6 & 6.74 & 7.40 & 65.0 & 55.7 & 6.05 & 7.03 & 67.9 & 57.9 \\
RTN + \ScaleAlgorithmShort & S & $\mathbf{X}\text{-}\tilde{\mathbf{X}}$ & \textbf{14.6} & 10.8 & 43.7 & 50.6 & \textbf{6.49} & 7.26 & 65.1 & 55.8 & \textbf{6.00} & 6.98 & 68.2 & 57.4 \\
\midrule
GPTQ + absmax &  &  & 22.9 & 12.7 & 43.2 & 36.0 & 6.79 & 7.44 & 65.8 & 56.5 & 6.08 & 7.04 & 67.8 & \textbf{58.7} \\
GPTQ + data-free &  & $\mathbf{I}\text{-}\mathbf{I}$ & 20.4 & 10.8 & 38.0 & 46.9 & 6.59 & 7.27 & 66.2 & 56.8 & 6.02 & 6.98 & 67.6 & 57.1 \\
GPTQ $\odot$ data-free$^\dagger$ &  & $\mathbf{I}\text{-}\mathbf{I}$ & 14.7 & \textbf{9.74} & 36.8 & 52.2 & 6.53 & 7.24 & 64.7 & 56.6 & 6.02 & 6.97 & 67.9 & 58.1 \\
GPTQ + \ScaleAlgorithmShort & I & $\tilde{\mathbf{X}}\text{-}\tilde{\mathbf{X}}$ & 20.2 & 11.3 & 45.1 & 49.6 & 6.63 & 7.25 & 66.6 & 56.4 & 6.02 & 6.98 & 67.4 & 58.2 \\
GPTQ + \ScaleAlgorithmShort & S & $\tilde{\mathbf{X}}\text{-}\tilde{\mathbf{X}}$ & 14.8 & 10.7 & 46.2 & \textbf{53.4} & 6.63 & 7.25 & 66.5 & 56.8 & 6.04 & 6.98 & 67.7 & 57.2 \\
GPTQ $\odot$ \ScaleAlgorithmShort$^\ast$ & I & $\tilde{\mathbf{X}}\text{-}\tilde{\mathbf{X}}$ & 17.5 & 11.6 & 46.3 & 32.4 & 6.59 & 7.25 & \textbf{66.9} & 55.4 & 6.02 & 6.98 & 67.7 & 58.0 \\
GPTQ $\odot$ \ScaleAlgorithmShort$^\ast$ & S & $\tilde{\mathbf{X}}\text{-}\tilde{\mathbf{X}}$ & 15.7 & 10.6 & \textbf{48.2} & 52.3 & 6.64 & 7.27 & 65.8 & 56.2 & 6.03 & 6.98 & \textbf{68.7} & 56.3 \\
GPTQ $\odot$ \ScaleAlgorithmShort$^\dagger$ & I & $\tilde{\mathbf{X}}\text{-}\tilde{\mathbf{X}}$ & 15.7 & 10.1 & 38.3 & 52.9 & 6.51 & \textbf{7.22} & 66.1 & \textbf{56.9} & 6.01 & 6.97 & 68.4 & 57.7 \\
GPTQ $\odot$ \ScaleAlgorithmShort$^\dagger$ & S & $\tilde{\mathbf{X}}\text{-}\tilde{\mathbf{X}}$ & 15.6 & 10.1 & 44.5 & 52.3 & 6.52 & 7.23 & 65.8 & 55.7 & 6.01 & \textbf{6.97} & 68.4 & 57.4 \\
\bottomrule
\end{tabular}
}
\end{table}

\end{document}